\PassOptionsToPackage{hypertexnames=false}{hyperref}
\documentclass[a4paper,fleqn]{cas-sc}

\usepackage[authoryear,longnamesfirst]{natbib}

\usepackage{amsmath,amssymb,amsthm}
\usepackage{algorithm}
\usepackage{algpseudocode}
\usepackage{booktabs}
\usepackage{multirow}
\usepackage{array}
\usepackage{colortbl}
\usepackage{xcolor}
\usepackage{threeparttable}
\makeatletter
\def\TPT@opt@flushleft{%
  \def\TPTnoteSettings{\labelsep\z@ \leftmargin\z@ \labelwidth\z@ \itemsep1pt \topsep3pt}%
  \def\TPTnoteLabel##1{\if\relax\detokenize{##1}\relax\else\tnote{##1}\hspace{.2em}\fi}%
  \rightskip\z@skip \leftskip\z@skip}
\makeatother
\usepackage{siunitx}
\usepackage{graphicx}
\usepackage{tikz}

\newtheorem{theorem}{Theorem}[section]
\newtheorem{proposition}[theorem]{Proposition}
\newtheorem{lemma}[theorem]{Lemma}
\newtheorem{corollary}[theorem]{Corollary}
\theoremstyle{definition}
\newtheorem{definition}[theorem]{Definition}
\newtheorem{assumption}[theorem]{Assumption}

\theoremstyle{remark}
\newtheorem{remark}[theorem]{Remark}

\newcommand{\tagadapted}{\textnormal{\textit{adapted}}}
\newcommand{\tagknown}{\textnormal{\textit{established}}}
\newcommand{\tagspecialized}{\textnormal{\textit{specialized}}}
\newcommand{\tagderived}{\textnormal{\textit{derived}}}

\newcommand{\Y}{\mathcal{Y}}
\newcommand{\X}{\mathcal{X}}

\newcommand{\Tmap}{\mathcal{T}}
\newcommand{\Ical}{\mathcal{I}_{\mathrm{cal}}}
\newcommand{\Itr}{\mathcal{I}_{\mathrm{tr}}}
\newcommand{\yt}{\tilde{Y}}
\newcommand{\Ct}{\tilde{C}}
\newcommand{\Cop}{\tilde{C}^{\oplus}}
\newcommand{\dtv}{d_{\mathrm{TV}}}
\newcommand{\E}{\mathbb{E}}
\newcommand{\Prob}{\mathbb{P}}
\newcommand{\ind}{\mathbf{1}}

\definecolor{seqteal1}{HTML}{e6eeec}
\definecolor{seqteal2}{HTML}{a9d3cc}
\definecolor{seqteal3}{HTML}{5fb0a5}
\definecolor{seqteal4}{HTML}{11746c}
\definecolor{choircoral}{HTML}{c1443c}
\definecolor{choirink}{HTML}{1a1a1a}
\definecolor{choirslate}{HTML}{697784}

\newlength{\dbwidth}
\newcommand{\databar}[1]{%
  \makebox[0pt][l]{\textcolor{seqteal2}{\rule[-0.15ex]{#1\dbwidth}{1.15ex}}}}

\newcommand{\RUNTIMEONEM}{0.21\,s}
\newcommand{\RUNTIMEFIVEM}{1.09\,s}

\begin{document}
\let\WriteBookmarks\relax
\setcounter{topnumber}{1}
\setcounter{bottomnumber}{1}
\setcounter{totalnumber}{2}
\renewcommand{\topfraction}{.88}
\renewcommand{\bottomfraction}{.65}
\renewcommand{\textfraction}{.12}
\renewcommand{\floatpagefraction}{.82}
\setlength{\textfloatsep}{10pt plus 2pt minus 2pt}
\setlength{\floatsep}{8pt plus 2pt minus 2pt}

\shorttitle{CHOIR: conformal prediction for crash injury severity}
\shortauthors{A. Rafe and S. Das}

\title[mode = title]{CHOIR: heterogeneity-aware conformal prediction for crash injury severity across driver safety strata}

\author[1]{Amir Rafe}[orcid=0000-0002-4089-2088]
\cormark[1]
\ead{amir.rafe@txstate.edu}
\credit{Conceptualization, Methodology, Software, Formal analysis, Writing -- original draft, Visualization}

\author[1]{Subasish Das}[orcid=0000-0002-1671-2753]
\ead{subasish@txstate.edu}
\credit{Conceptualization, Methodology, Supervision, Writing -- review \& editing}

\affiliation[1]{organization={Civil Engineering, Texas State University},
    addressline={601 University Drive},
    city={San Marcos},
    postcode={78666 TX},
    country={USA}}

\cortext[1]{Corresponding author}

\begin{abstract}
Transportation agencies increasingly predict crash-injury severity with statistical and
machine-learning models, but these models do not state how often their output contains the
recorded injury level or for which groups of drivers it fails, a gap that matters most for
motorcyclists and unrestrained drivers. This study develops and evaluates a certification
layer that gives any fitted severity model a finite-sample, distribution-free coverage
guarantee within prespecified safety strata. The layer, CHOIR (Conformal
Heterogeneity-aware Ordinal Inference with Risk control), combines groupwise and weighted
conformal prediction with conformal risk control to return contiguous KABCO intervals, and
adds a declared sensitivity analysis for medically assessed injury and bounds on fatal
omission. It is evaluated on 4.04 million Texas crashes from 2017--2023, one sampled driver
per crash, with seven base models from the ordered logit to a tabular foundation model, and
on held-out counties and later years. Under one pooled threshold every model reaches 0.90
coverage overall but covers motorcyclists or unrestrained drivers at 0.868 or lower, and
class-balanced gradient boosting covers unrestrained drivers at only 0.374. Calibration
within four safety strata places all 28 model-by-stratum estimates between 0.898 and 0.907,
at the cost of sets spanning 3.6 to 4.6 of five categories for these groups, and the
certified ordered logit is within 0.03 categories of the narrowest model. Injury-model
coverage should therefore be certified within safety groups rather than on average,
calibration rather than model complexity determines validity, and a statewide threshold
should not be applied to small rural counties without local calibration data.
\end{abstract}

\begin{keywords}
conformal prediction \sep trustworthy machine learning \sep crash injury severity \sep
ordinal prediction sets \sep reporting uncertainty \sep uncertainty quantification
\end{keywords}

\maketitle

\section{Introduction}
\label{sec:intro}

Transportation agencies and researchers now predict crash-injury severity from police crash records with a wide range of models, from the ordered logit to gradient boosting, deep networks, and tabular foundation models. These models return a predicted KABCO category or a probability vector. They do not state how often a set of predicted severities will contain the recorded outcome, and they do not show for which groups of drivers that rate breaks down. The question matters most for the groups with the highest injury risk, such as motorcyclists and unrestrained drivers, because these groups are a small share of the records and can be poorly served by a model that looks accurate on average.

The application in this paper is retrospective and record based. A model trained on completed crash records assigns each sampled driver a set of plausible police-reported KABCO levels. An analyst can use the set to audit a severity model before it enters a safety program, for example to check whether the model's uncertainty is trustworthy for motorcyclists as well as for the statewide population, or to flag records for data-quality review. For these uses the analyst needs a statement of the form ``the set contains the recorded level at least 90\% of the time within this safety group,'' together with the price of that statement in set width. This study provides such statements for sampled-driver KABCO injury in the Texas Crash Records Information System (CRIS). The guarantee covers random holdouts of completed records. Use at the time of a crash report, or in other counties and years, falls outside it and is examined here only through descriptive stress tests.

Police-reported KABCO injury has four features that shape the required uncertainty method. The outcome is ordinal from no injury to fatal injury. The report is a structured field assessment that can differ from a medical assessment, often through disagreement between adjacent categories. Crash populations are heterogeneous, and the most safety-relevant groups can have score distributions that differ from the statewide mixture. The record distribution also changes across jurisdictions and years. A useful certificate must preserve the ordering, report performance within prespecified groups, account for a declared reporting relation, and keep observed-cell validity separate from transfer diagnostics.

Conformal prediction supplies finite-sample marginal coverage under exchangeability and
can wrap any fitted base model \citep{vovk2005,lei2018}. Groupwise calibration, weighted
conformal prediction, and conformal risk control are also established tools
\citep{tibshirani2019,angelopoulos2024}. Used directly, these tools do not give contiguous
ordinal sets, a statement about the medical injury behind a police-reported code, or an
audit that separates observed-cell validity from transfer evidence. The crash-injury
problem requires the established components to be connected through a common estimand,
partition, reporting structure, and audit design.

This study addresses three research questions. The first (RQ1) asks whether a single
certification layer can give any fitted crash-injury severity model finite-sample coverage of
the recorded KABCO level within prespecified driver safety strata, and what that guarantee
costs in set width. The second (RQ2) asks what can be stated about the medically referenced
injury behind a police-reported code when only a declared reporting relation is available,
and how much of the width of a valid set a declared reporting process alone forces. The third
(RQ3) asks how a certificate calibrated on one population behaves when it is applied to other
counties and later years, and how omissions of severe and fatal outcomes can be bounded when
the sets are used for record review.

To answer these questions, this paper develops CHOIR (Conformal Heterogeneity-aware
Ordinal Inference with Risk control), a certification layer for ordinal crash-injury
prediction.
CHOIR converts the fitted class probabilities of any base model into contiguous KABCO
intervals and calibrates them within prespecified safety strata that are fixed on the
training data before any calibration outcome is used. A declared compatibility map expands
reported-label intervals into sensitivity sets for the underlying medical injury. A
transfer component combines density-ratio weighting with a one-sided mismatch diagnostic
for new counties and years. A risk component calibrates severity-weighted and fatal
omission. The framework records each source of slack so that coverage, sensitivity,
transfer, and risk statements remain distinct.

The paper makes four contributions. First, it assembles a model-agnostic certification
layer for KABCO. An ordinal CDF score in the style of \citet{lu2022} gives contiguous
intervals, and Mondrian calibration within crash strata that are frozen on training data
gives finite-sample coverage in each stratum. Theorem~\ref{thm:oracle} specializes the known
efficiency argument to this setting and characterizes when a single pooled threshold
under-covers a stratum. Second, it transfers coverage from police-reported to medically
referenced injury under a declared ordinal band. The guarantee loses the declared
beyond-band mass $\delta$ and costs an additive $b^-+b^+$ categories, and this bound is sharp
(Theorems~\ref{thm:noise} and~\ref{thm:sharp}). Without the band, the best available bound
degrades with the total reporting error rate, which linkage studies put near one half.
Third, under a declared reporting channel, it derives a lower bound on the width of any
valid predictor (Theorem~\ref{thm:channel}), and it shows that the distribution-free width
floor cannot be certified from below (Theorem~\ref{thm:noncert}). Together these results
bound how much width a declared reporting process alone forces, conditional on that
declaration. Fourth, it adds bounds on severity-weighted and joint fatal omission and an
accounting rule (Theorem~\ref{thm:compose}) that keeps every issued statement tied to its
assumptions. The layer is released as the open-source \texttt{choircert} package.

The framework is evaluated on 5.2 million CRIS driver records from 2017--2025, one per
crash, with a primary certification sample of 4.04 million crashes from 2017--2023. Seven base
models are wrapped by the same layer, namely the ordered logit, multinomial logit,
latent-class and random-parameters ordered logits, gradient boosting, a deep ordinal network,
and the TabPFN tabular foundation model. The evaluation compares pooled and stratum-level
calibration across four prespecified driver safety strata (RQ1), measures variation across
repeated calibration splits, and compares the layer with other conformal methods. It
examines the declared reporting relation through semi-synthetic stress tests and a
reporting-channel sensitivity study (RQ2). It applies frozen certificates to held-out counties
and later years, audits severe and fatal omission for record review, and reports the
computational cost of the layer (RQ3).

The remainder of the paper is organized as follows. Section~\ref{sec:lit} positions the work in the crash-severity and uncertainty-quantification literatures. Section~\ref{sec:framework} gives the framework, guarantees, and proofs. Section~\ref{sec:data} describes the Texas CRIS records and the experimental design. Section~\ref{sec:results} reports the application, and Section~\ref{sec:discussion} discusses what the results mean for transportation safety practice. Section~\ref{sec:conclusion} summarizes the contribution, limitations, and future directions.

\section{Related work}
\label{sec:lit}

Three literatures underpin this paper. Injury modeling supplies the problem and the base
models. Uncertainty quantification supplies coverage concepts for safety-critical AI.
Conformal inference supplies the established calibration tools. This paper builds a
crash-injury framework from those tools and identifies the additional statements that
require police-reported KABCO structure.

\subsection{Severity modeling and the measurement of severity}
\label{sec:lit-severity}

Police-reported KABCO injury analysis has been an ordered-response problem since \citet{mccullagh1980},
and the methodological arc since then has been a sustained effort to make the conditional
distribution more flexible. \citet{savolainen2011} survey the alternatives and their
trade-offs. The heterogeneity program associated with Mannering and colleagues is the
central development, and \citet{manneringshankarbhat2016} argue that unobserved heterogeneity
is not a nuisance to be averaged away but the dominant structural feature of crash data,
and \citet{manneringbhat2014} set the methodological frontier that random-parameters and
latent-class specifications were built to address. \citet{mannering2018} adds the temporal
dimension, showing that the relationships themselves are unstable across time. More recently,
machine learning and deep models have expanded the available estimators, and \citet{seyfi2025}
assess their performance relative to established statistical models.

This paper uses the heterogeneity structure established in that literature as an input to
certification. The latent classes from a latent-class ordered logit form the partition used
in Section~\ref{sec:het}, and Section~\ref{sec:seven-model} places a random-parameters ordered logit
inside the same certification layer as the other base models. These estimators target the
conditional severity distribution, while the certification layer supplies a finite-sample
statement for the resulting sampled-driver prediction set.

The second thread in this stream is measurement, and it is the one that motivates
Section~\ref{sec:noise}, because police-reported KABCO injury and medically assessed
injury are different quantities. \citet{burdett2015} and the underlying linkage study \citep{burdett-thesis}
match Wisconsin crash reports to medical records and find that agreement between the
police rating and the medically referenced category is close to half, with the
disagreement concentrated in the under-reporting direction and heavily structured by
category. \citet{burdett2022} show that the discrepancy varies systematically by reporting
agency. This is why this paper declares a band as a sensitivity input rather than
estimating a confusion matrix; the paper does not claim that either a kernel or a band
transfers across jurisdictions. \citet{taylor2024} reach a similar conclusion from linked North
Carolina trauma-registry data, reporting sensitivity near 50\% for police identification
of serious injury. \citet{farmer2003} and \citet{compton2005} provide the national
comparisons, and both report much closer agreement for fatal injury than for the nonfatal
categories. That asymmetry motivates, but does not establish, the exact-K premise that
Remark~\ref{rem:catdep} declares. This literature establishes that the label an injury
model is trained and evaluated on is noisy in a specific, ordered, category-dependent way.
It does not provide a way to state what a prediction means for the underlying injury.

\subsection{Uncertainty quantification for safety-critical AI}
\label{sec:lit-uq}

Recent safety-critical AI research gives greater attention to predictive uncertainty.
\citet{qian2024} survey uncertainty quantification for traffic forecasting across
Bayesian, ensemble, bootstrap, and calibration approaches. Conformal methods have been
used to wrap a graph neural network for travel-time forecasting \citep{patil2024}, to give
coverage-guaranteed intervals for traffic demand \citep{yang2026}, and to apply Mondrian
conformal prediction to mode choice \citep{bohlouli2025}. In crash analysis,
\citet{islam2024} pursue calibrated confidence for real-time crash and severity prediction.

Closest to this paper is \citet{wei2025}, who apply generic and class-conditional
conformal prediction to pre-crash injury risk and obtain a guaranteed confidence level.
That work establishes a direct precedent for coverage guarantees in injury prediction.
Split conformal prediction provides its established guarantee only for the label used in
calibration, under its exchangeability conditions. It does not itself provide a statement
about an unobserved medical injury, an unlabeled new jurisdiction, or a crash-record audit
that combines coverage and omission risk. These boundaries describe the scope of generic
conformal tools. The contribution here is a KABCO-specific framework that states what
additional declared structure is required for each extension.

\subsection{Conformal foundations}
\label{sec:lit-conformal}

The conformal framework originates with \citet{vovk2005}, and split conformal prediction
in the form used here is standard \citep{lei2018}. Its limits are equally well
established, since \citet{vovk2012} and \citet{foygelbarber2021} prove that exact
covariate-conditional coverage is unattainable without trivial sets, which is why every
conditional statement in this paper is made with respect to finitely many pre-declared
groups. Work on conditional guarantees continues along that line, through
\citet{ding2023} for many-class settings, \citet{gibbs2025} for covariate-shift-indexed
classes, \citet{kiyani2024} for learned partitions, and \citet{dunn2022} for hierarchical
structure.

Ordinal conformal prediction is an active subfield with directly relevant constructions. \citet{lu2022} and
\citet{chakraborty2024} construct CDF-based ordinal scores of the kind
Definition~\ref{def:score} uses, \citet{haas2026rps} use the ranked probability score to
produce median-centered contiguous sets, and \citet{xu2023} formulate conformal risk control for
ordinal losses. Set size in the ordinal setting has its own literature, in which
\citet{zhang2025minlength} construct minimum-length ordinal sets that are optimal per
instance given the base model's scores, which is a reminder that width has a
set-construction component and is not simply a property of the estimator underneath. The
sets built here are certified rather than length-optimized, and where this paper reports
width it reports the width of the construction it certifies.

The undercoverage of minority subpopulations under marginal calibration has also been
documented outside transportation. \citet{tursunbadalov2026quietfailure} find, in molecular
property prediction, that marginal conformal prediction meets its global target while
minority-class coverage falls far below it, and that class-conditional calibration restores
it at a modest cost in set size. \citet{rafe2026socioconformal} find the same subgroup gaps in
survey-based social measurement, but there group-specific calibration worsened the
efficiency and fairness trade-off. That study attributes the result to thin calibration
cells, with 32 to 355 observations per cell. This is Theorem~\ref{thm:oracle}(ii) read at
the wrong end of its rate, since the error of a class-conditional threshold shrinks as
$O_p(n_c^{-1/2})$. The rollup rule of \eqref{eq:nmin} is this paper's safeguard against the problem. It
declares a minimum cell size and merges a sparse cell into a coarser parent, at the cost of
giving up a separate guarantee for that cell. No declared cell in the Texas data falls
below the minimum, so the rule is not triggered here. In the declared safety-stratum
audits reported here, the thinnest calibration cell holds 2{,}787 records
(Table~\ref{tbl:compose}), and the thinnest cell in the primary four-cell audit holds 8{,}386. Theorem~\ref{thm:het} applies
the established Mondrian argument to a partition fixed in advance, and
Theorem~\ref{thm:oracle} specializes the efficiency characterization to the associated
threshold family and rate.

Coverage transfer from ambiguous or weakly supervised labels to the clean label is treated
by \citet{cauchois2022} and \citet{stutz2023}, and it is the closest precedent for
Section~\ref{sec:noise}. Label noise has also been attacked from several other directions,
in that \citet{einbinder2024} study robustness and correction under noisy labels,
\citet{sesia2023} model random contamination, \citet{noiseadaptive2025} pursue
noise-adaptive classification, \citet{penso2025} estimate a clean threshold under uniform
noise, \citet{xi2025} handle known uniform noise online, and \citet{cohen2025} address
noisy regression. Distribution shift is addressed by weighted conformal prediction
\citep{tibshirani2019}, the non-exchangeable extensions of \citet{barber2023},
group-weighted constructions \citep{bhattacharyya2024}, and recent work on likelihood-ratio
regularization \citep{joshi2025}, optimal transport with unlabeled targets
\citep{correia2025}, and clipped weights \citep{wanggoel2026}. Risk beyond coverage is
handled by conformal risk control \citep{angelopoulos2024}, whose closest safety-critical
application is the wildfire-evacuation mapping of \citet{dayan2026}. Sequential monitoring
of exchangeability by test martingales is due to \citet{vovk-martingales}.

Each of these supplies an ingredient that this paper uses and labels as such. The
label-noise literature also defines the remaining methodological gap. Those methods
correct or calibrate under a stochastic noise model, a noise rate, or a uniformity premise.
The reporting structure can vary by agency and era \citep{burdett2022}, so this paper does
not estimate a Texas confusion kernel. Instead, it makes a declared ordinal support band a
sensitivity input. The cited linkage evidence points to mostly adjacent-category disagreement and much closer
agreement for fatal injury. The declared band is therefore a transparent assumption,
not a learned reporting law.

\subsection{The gap}
\label{sec:lit-gap}

The three streams reviewed above supply complementary pieces, but not a single framework
for a deployed police-reported injury model. Severity modeling supplies flexible conditional
distributions and heterogeneity structure, but not a finite-sample certificate.
Safety-critical applications of generic conformal prediction inherit generic guarantees.
The conformal literature supplies validity, groupwise conditioning, shift correction, and
risk control as separate results. CHOIR retains those established tools and connects them
to a prespecified KABCO estimand, training-frozen final cells, and an auditable reporting
structure. Its accident-specific results address contiguous ordinal sets, coverage transfer
under a declared reporting band, the associated width cost and sharpness, one-sided
mismatch accounting, and severity-weighted omission risk.
The Texas CRIS analysis evaluates the framework across seven base models and four safety
strata, and uses county, year, and risk-budget analyses as descriptive stress tests.

\newcommand{\frameworkopening}{%
\noindent The framework treats police-reported KABCO injury as the observed ordinal
outcome for each sampled driver. Table~\ref{tbl:notation} gathers the notation used below, and
Figure~\ref{fig:pipeline} outlines the certification workflow.\par
\begin{table}[pos=htbp]
\caption{Notation.}
\label{tbl:notation}
\centering\footnotesize
\begin{threeparttable}
\begin{tabular*}{\tblwidth}{@{\extracolsep{\fill}}llc@{}}
\toprule
Symbol & Meaning & Defined \\
\midrule
\multicolumn{3}{@{}l}{\textsc{Data and outcome}}\\
$\Y=\{1,\dots,5\}$ & police-reported KABCO injury, O$\prec$C$\prec$B$\prec$A$\prec$K & \S\ref{sec:setup} \\
$Y$, $\yt$ & underlying injury, police-reported injury & \S\ref{sec:setup} \\
$X\in\X$ & covariates & \S\ref{sec:setup} \\
$\Itr$, $\Ical$ & training, calibration index sets & \S\ref{sec:setup} \\
$n$ & calibration size; test point is $n{+}1$ & \S\ref{sec:setup} \\
\midrule
\multicolumn{3}{@{}l}{\textsc{Fitted tier (arbitrary, never trusted)}}\\
$\hat F(k\mid x)$ & base conditional CDF & \eqref{eq:Fhat} \\
$\hat c(x)$ & latent-class assignment, $C$ classes & \S\ref{sec:het} \\
$h(x)$, $R(x)$ & raw-cell map and training-frozen final-cell map & \eqref{eq:finalcell} \\
$\hat w(x)$ & density ratio for transfer & \eqref{eq:wq} \\
\midrule
\multicolumn{3}{@{}l}{\textsc{Certification machinery}}\\
$s(x,y)$ & ordinal cumulative score & \eqref{eq:score} \\
$C_\lambda(x)$ & threshold set, a contiguous interval & \eqref{eq:sets} \\
$\hat q$, $\hat q_c$, $\hat q_a$ & marginal, class, final-cell quantile & \eqref{eq:qhat}, \eqref{eq:qc}, Thm.~\ref{thm:groups} \\
$q_c$, $q_{\mathrm{mix}}$ & population class and pooled quantiles & Thm.~\ref{thm:oracle} \\
$\alpha$ & miscoverage level & \eqref{eq:qhat} \\
$\Tmap(\yt)$ & declared compatibility band & \eqref{eq:band} \\
$\delta$ & declared beyond-band mass (an input) & \eqref{eq:band} \\
$\varepsilon_{\mathrm{tot}}$ & total reporting error $\Prob(\yt\ne Y)$ & Prop.~\ref{prop:tightness} \\
$\Cop(x)$ & band-expanded set & \eqref{eq:expansion} \\
$\kappa(\cdot)$, $\beta$ & severity cost, risk budget & \eqref{eq:crc} \\
$\hat\lambda$ & conformal risk-control threshold & \eqref{eq:lambdahat} \\
$\dtv$ & population transfer slack in Theorem~\ref{thm:transfer} & \eqref{eq:slack} \\
$\widehat\Delta^{\mathrm{LCB}}_{\mathrm{emp}}$ & classifier mismatch statistic & Thm.~\ref{thm:transfer}(ii) \\
$M_t$, $\varepsilon$, $b$ & review score, betting parameter, review threshold & \eqref{eq:martingale}, Alg.~\ref{alg:monitor} \\
\bottomrule
\end{tabular*}
\begin{tablenotes}[flushleft]\footnotesize
\item[] \textit{Note:} The labels established, adapted, specialized, and derived in each result heading record whether a statement is taken from the literature or developed here.
\end{tablenotes}
\end{threeparttable}
\end{table}
\begin{figure}[pos=htbp]
\centering
\includegraphics[width=\linewidth,height=.58\textheight,keepaspectratio]{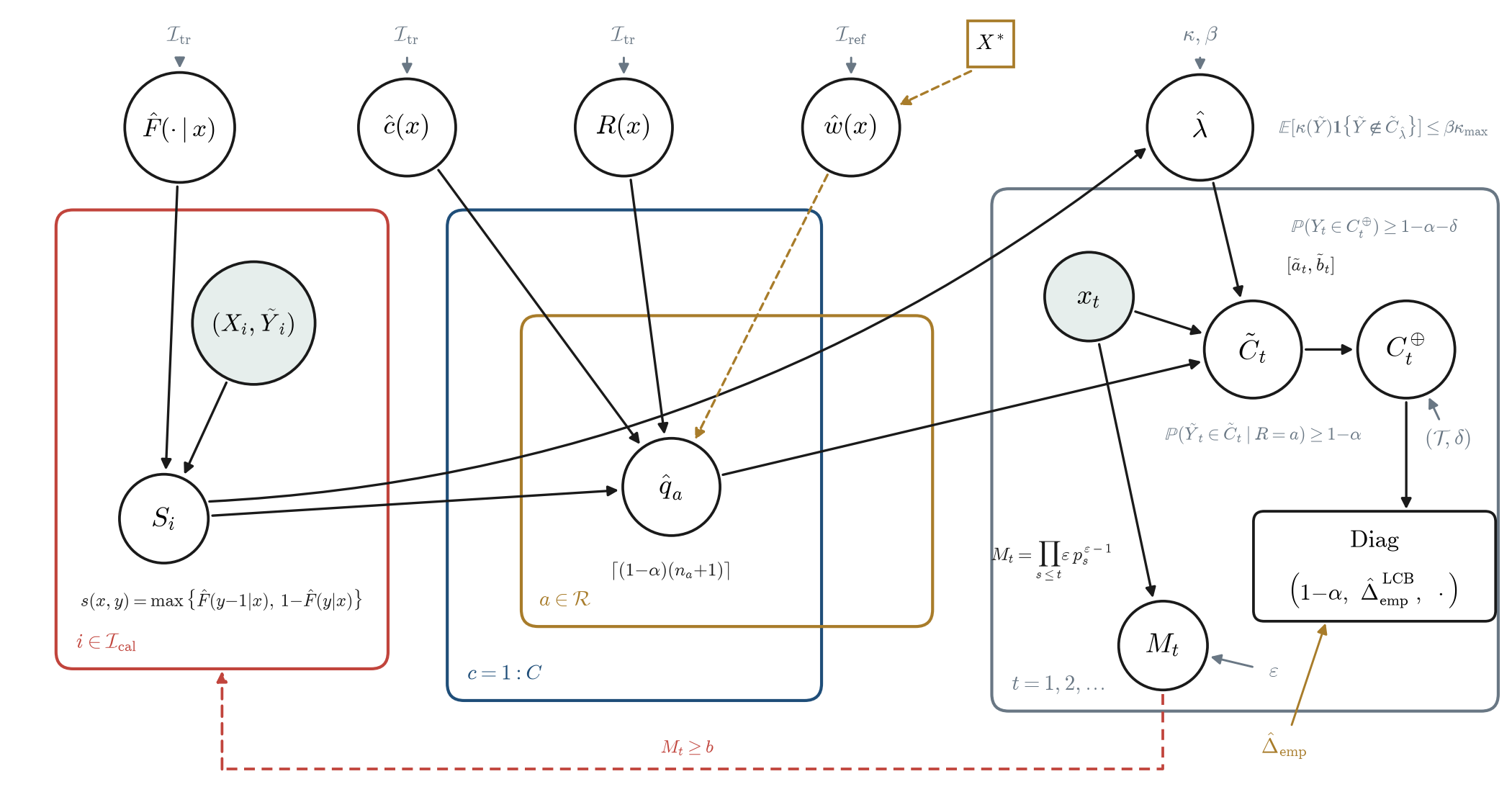}
\caption{CHOIR as a graph. Circles are random or fitted objects, the rectangle an exported
artifact, shaded nodes observed data; plates denote replication over calibration records
($i\in\Ical$), latent classes ($c$), final calibration cells ($a$) and the deployment stream
($t$). The base CDF $\hat F$, the class gate $\hat c$, and the final-cell map $R$ are fit on
the training set; the density ratio $\hat w$ is fit on a calibration-law reference sample
$\mathcal I_{\mathrm{ref}}$ and unlabeled target covariates $X^\ast$. Validity comes from
calibration. The final-cell quantile $\hat q_a$ creates the reported-label certificate. The
dashed gold path is the separate new-stratum branch, in which the weighted quantile of
Theorem~\ref{thm:transfer} replaces $\hat q_a$. The declared band $(\Tmap,\delta)$ expands the
set for latent-injury sensitivity analysis, and $(\kappa,\beta)$ controls observed-cell
omission risk. The Diag box is the exported diagnostic tuple, not a certificate. The dashed
red loop is a review trigger with threshold $b$ (Algorithm~\ref{alg:monitor}) that records an
alarm and starts recalibration; it carries no anytime error guarantee.}
\label{fig:pipeline}
\end{figure}}

\section{Framework and guarantees}
\label{sec:framework}

\ifdefined\frameworkopening
  \frameworkopening
\fi

Each result carries one of four labels in its heading. Established results are taken from the literature with their original attribution. Adapted results apply a known argument to a new object, such as the frozen crash strata. Specialized results apply known arguments to the ordinal crash-injury structure and state the consequences for KABCO. Derived results are new statements that connect the declared assumptions within CHOIR. Table~\ref{tbl:attribution} states the statistical basis and crash-specific role of each component. Proofs of Theorems~\ref{thm:oracle}, \ref{thm:sharp}, and~\ref{thm:transfer} are given in Appendix~\ref{app:proofs}; all other proofs appear in line.

\begin{table}[pos=htbp]
\centering
\caption{Statistical attribution and accident-specific role of the CHOIR components.}
\label{tbl:attribution}
\begin{tabular}{p{.20\linewidth}p{.31\linewidth}p{.39\linewidth}}
\toprule
Component & Statistical basis & Accident-specific role \\
\midrule
Ordinal intervals & CDF-based conformal scores & Contiguous prediction sets on the KABCO scale \\
Final-cell validity & Split and Mondrian conformal prediction & Training-frozen crash strata and geographic rollup \\
Reporting-map transfer & Coverage transfer under label uncertainty & Declared sensitivity set for medically referenced injury \\
Reporting-channel floor & Oracle expected-size optimality & Width lower bound under a declared crash reporting channel \\
Population shift & Weighted conformal prediction and total variation & Conditional transfer bound with a separate mismatch diagnostic \\
Omission risk & Conformal risk control & Injury-weighted and fatality-specific bounds under declared premises \\
Certification accounting & Probability union bounds & Assumption-attributable audit record for each issued result \\
\bottomrule
\end{tabular}
\end{table}

\subsection{Setup, notation, and standing assumptions}
\label{sec:setup}

Severity categories $\Y=\{1,\dots,K\}$ with $K=5$, ordered
$1(\mathrm{O}) \prec 2(\mathrm{C}) \prec 3(\mathrm{B}) \prec 4(\mathrm{A}) \prec 5(\mathrm{K})$
(KABCO, running from no injury through possible, non-incapacitating and incapacitating injury to fatal). Covariates $X\in\X$. $Y$ denotes medically referenced injury severity; $\yt$ denotes police-reported injury severity. The CRIS models are fit to $\yt$ because $Y$ is not observed in CRIS. Data are split into a training set $\Itr$, on which the base model and all prediction-cell maps are fit, and a calibration set $\Ical=\{1,\dots,n\}$; the test point is indexed $n+1$. A density-ratio model for a shifted target uses a separate calibration-law covariate reference sample, as stated in Theorem~\ref{thm:transfer}. All objects used in a conformal rank argument are fixed before the scored calibration fold is used.

\begin{assumption}[A1. Exchangeability]\label{ass:a1}
$(X_1,\yt_1),\dots,(X_n,\yt_n),(X_{n+1},\yt_{n+1})$ are exchangeable.
\end{assumption}

\begin{remark}[Clustered crash data]\label{rem:cluster}
Persons in one crash are dependent, so rows from the same crash violate A1 if they straddle calibration and test. Under the protocol, all splits are taken at the crash identifier, and the primary analysis calibrates and tests on one randomly sampled driver row per crash, so that the sampled rows are exchangeable under random crash-level splitting and A1 holds for them. Using every occupant row would require the grouped conformal constructions of \citet{dunn2022}; this paper does not use all rows.
\end{remark}

\paragraph{Base model.} Any estimator fit on $\Itr$ exporting an estimated conditional CDF
\begin{equation}\label{eq:Fhat}
\hat F(k\mid x)=\hat P(\yt\le k\mid X=x),\qquad \hat F(0\mid x)\equiv 0,\quad \hat F(K\mid x)\equiv 1,
\end{equation}
non-decreasing in $k$. Ordered logit and probit, random-parameters logit, and latent-class logit export $\hat F$ natively; any probabilistic classifier exports it by cumulating $\hat p(\cdot\mid x)$. Here $\hat P$ is a fitted score-producing function and does not assert correct specification. It is not an estimate of $P(Y\le k\mid X=x)$ from CRIS. Every reported-label guarantee below is distribution-free in $\hat F$; model quality affects set size, not the rank argument.

\subsection{Ordinal scores, interval sets, marginal validity}
\label{sec:score}

\begin{definition}[Ordinal cumulative score]\label{def:score}
\begin{equation}\label{eq:score}
s(x,y)=\max\bigl\{\hat F(y-1\mid x),\; 1-\hat F(y\mid x)\bigr\}.
\end{equation}
\end{definition}

$\hat F(y-1\mid x)$ is the predicted mass strictly below $y$; $1-\hat F(y\mid x)$ the mass strictly above, so that $s$ measures how deep into a tail the candidate label sits (cf.\ CDF-based ordinal scores, \citealp{lu2022,chakraborty2024}).

\begin{definition}[Threshold sets]\label{def:sets}
\begin{equation}\label{eq:sets}
C_\lambda(x)=\{k\in\Y \mid s(x,k)\le\lambda\}.
\end{equation}
\end{definition}

\begin{lemma}[Contiguity and non-triviality; \tagspecialized{}]\label{lem:contig}
For every $x$ and $\lambda\ge 0$, the following properties hold.
(i) $C_\lambda(x)$ is a contiguous (possibly empty) interval,
\begin{equation}\label{eq:interval}
C_\lambda(x)=\{a_\lambda(x),\dots,b_\lambda(x)\}\cap\Y;
\end{equation}
(ii) if $\lambda\ge 1/2$ then $C_\lambda(x)\ne\emptyset$ and contains the predictive median
\begin{equation}\label{eq:median}
k^\ast(x)=\min\{k\mid\hat F(k\mid x)\ge 1/2\};
\end{equation}
(iii) $\lambda\mapsto C_\lambda(x)$ is nested non-decreasing.
\end{lemma}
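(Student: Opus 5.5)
The plan is to write the score as the maximum of two monotone pieces and read every claim off that decomposition. Set $L(k)=\hat F(k-1\mid x)$ and $U(k)=1-\hat F(k\mid x)$. Because $\hat F(\cdot\mid x)$ is non-decreasing in $k$ by \eqref{eq:Fhat}, $L$ is non-decreasing in $k$ and $U$ is non-increasing in $k$. Then $s(x,k)\le\lambda$ holds exactly when both $L(k)\le\lambda$ and $U(k)\le\lambda$.

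For (i), the set $\{k\in\Y \mid L(k)\le\lambda\}$ is a down-set of $\Y$, that is, $\{1,\dots,b_\lambda(x)\}$, where $b_\lambda(x)=\max\{k \mid \hat F(k-1\mid x)\le\lambda\}$. Because $L(1)=\hat F(0\mid x)=0\le\lambda$, this set is never empty when $\lambda\ge0$. In the same way, $\{k \mid U(k)\le\lambda\}$ is an up-set $\{a_\lambda(x),\dots,K\}$, with $a_\lambda(x)=\min\{k \mid \hat F(k\mid x)\ge 1-\lambda\}$, and it contains $K$ because $U(K)=0$. The intersection of an up-set and a down-set of a totally ordered finite set is an interval, possibly empty when $a_\lambda>b_\lambda$. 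This gives \eqref{eq:interval}.

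For (iii), each of the conditions $L(k)\le\lambda$ and $U(k)\le\lambda$ only becomes weaker as $\lambda$ increases. So $\lambda\le\lambda'$ implies $C_\lambda(x)\subseteq C_{\lambda'}(x)$.

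For (ii), by (iii) it is enough to treat $\lambda=1/2$, where I check the median $k^\ast=k^\ast(x)$ from \eqref{eq:median} directly. First, $k^\ast$ is well defined because $\hat F(K\mid x)=1\ge1/2$. By its definition $\hat F(k^\ast\mid x)\ge1/2$, so $U(k^\ast)\le1/2$. By minimality, either $k^\ast=1$, in which case $L(k^\ast)=\hat F(0\mid x)=0$, or $\hat F(k^\ast-1\mid x)<1/2$. In both cases $L(k^\ast)\le 1/2$. Hence $s(x,k^\ast)\le1/2\le\lambda$, so $k^\ast\in C_\lambda(x)$ and the set is nonempty.

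The only delicate points are the boundary conventions $\hat F(0\mid x)\equiv0$ and $\hat F(K\mid x)\equiv1$, which make $a_\lambda$, $b_\lambda$ and $k^\ast$ well defined, and the use of the strict inequality in the minimality step. Neither is a real obstacle. I expect the main care to go into stating (i) correctly when the interval is empty, which can happen only for $\lambda<1/2$.
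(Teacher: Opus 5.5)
Your proposal is correct and follows essentially the same route as the paper: you write $C_\lambda(x)$ as the intersection of the down-set $\{k:\hat F(k-1\mid x)\le\lambda\}$ and the up-set $\{k:\hat F(k\mid x)\ge 1-\lambda\}$, and then check the median directly. The only differences are cosmetic. You reduce (ii) to $\lambda=1/2$ via (iii) and treat $k^\ast=1$ as a separate case, whereas the paper verifies $s(x,k^\ast)\le 1/2\le\lambda$ directly for any $\lambda\ge 1/2$, with the case $k^\ast=1$ already covered by $\hat F(0\mid x)=0$.
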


\begin{proof}
(i) $C_\lambda(x)=L\cap U$ with $L=\{k:\hat F(k-1\mid x)\le\lambda\}$, $U=\{k:\hat F(k\mid x)\ge 1-\lambda\}$. Since $\hat F(\cdot\mid x)$ is non-decreasing, $L$ is a down-set and $U$ an up-set; $1\in L$ (as $\hat F(0)=0$) and $K\in U$ (as $\hat F(K)=1$). The intersection of a down-set $\{1,\dots,b_\lambda\}$ and an up-set $\{a_\lambda,\dots,K\}$ is the interval $\{a_\lambda,\dots,b_\lambda\}$, empty iff $a_\lambda>b_\lambda$.
(ii) At $k^\ast$, $\hat F(k^\ast-1\mid x)<1/2\le\lambda$ and $1-\hat F(k^\ast\mid x)\le 1/2\le\lambda$, so $s(x,k^\ast)\le 1/2\le\lambda$.
(iii) Immediate from Definition~\ref{def:sets}.
\end{proof}

\begin{definition}[Deployed non-empty set]\label{conv:nonempty}
The raw set is $C_\lambda$ in Definition~\ref{def:sets}. The deployed set is
\begin{equation}\label{eq:nonempty}
C^{\mathrm{ne}}_\lambda(x)=
\begin{cases}
C_\lambda(x),&C_\lambda(x)\ne\emptyset,\\
\{k^\dagger(x)\},&C_\lambda(x)=\emptyset,
\end{cases}
\qquad
k^\dagger(x)=\min\arg\min_{k\in\Y}s(x,k).
\end{equation}
The minimum-index rule makes the fallback deterministic. Pointwise,
$C_\lambda(x)\subseteq C^{\mathrm{ne}}_\lambda(x)$. Thus, lower coverage bounds transfer to the deployed set. Coverage upper bounds, coverage equivalences, and raw-family efficiency statements do not transfer unless stated separately.
\end{definition}

\begin{proposition}[Marginal validity of split conformal; \tagknown{}; \citealp{vovk2005,lei2018}]\label{prop:marginal}
Let $S_i=s(X_i,\yt_i)$ for $i\in\Ical$, with order statistics $S_{(1)}\le\dots\le S_{(n)}$, and set
\begin{equation}\label{eq:qhat}
\hat q \;=\; S_{(\lceil(1-\alpha)(n+1)\rceil)}
\end{equation}
(with $\hat q=+\infty$, i.e.\ $C=\Y$, if $\lceil(1-\alpha)(n+1)\rceil>n$). Under A1,
\begin{equation}\label{eq:marginal}
\Prob\bigl(\yt_{n+1}\in C_{\hat q}(X_{n+1})\bigr)\ \ge\ 1-\alpha,
\end{equation}
and the same lower bound holds with $C^{\mathrm{ne}}_{\hat q}$ in place of $C_{\hat q}$. If the $n+1$ scores are a.s.\ distinct, the raw-set coverage is also at most $1-\alpha+\tfrac{1}{n+1}$. This upper bound is not asserted for the deployed fallback set.
\end{proposition}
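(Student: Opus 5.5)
The plan is the standard rank argument. First I reduce coverage to a score comparison. By Definition~\ref{def:sets}, $\yt_{n+1}\in C_{\hat q}(X_{n+1})$ if and only if $S_{n+1}\le\hat q$, where $S_{n+1}=s(X_{n+1},\yt_{n+1})$. The score function $s$ is built from $\hat F$, and $\hat F$ is fit on $\Itr$ only. Since $\Itr$ is disjoint from $\Ical\cup\{n+1\}$, conditioning on the training data makes $s$ a fixed measurable function. Assumption~A1 then implies that $S_1,\dots,S_{n+1}$ are exchangeable. If $\lceil(1-\alpha)(n+1)\rceil>n$, then $\hat q=+\infty$, the set is $\Y$, and the claim is trivial. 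So I assume $m:=\lceil(1-\alpha)(n+1)\rceil\le n$.

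For the lower bound I use the rank of $S_{n+1}$ among all $n+1$ scores, with ties broken uniformly at random. By exchangeability this rank is uniform on $\{1,\dots,n+1\}$, or at least stochastically dominated appropriately when there are ties. The key deterministic fact is this: $S_{n+1}\le S_{(m)}$, the $m$-th order statistic of the calibration scores alone, whenever $S_{n+1}$ is among the $m$ smallest of the $n+1$ scores. The reason is that if $S_{n+1}$ exceeded the $m$-th smallest calibration score, at least $m$ values would lie strictly below it. Hence
\begin{equation*}
\Prob(S_{n+1}\le \hat q)\ \ge\ \Prob(\text{rank}(S_{n+1})\le m)\ =\ \frac{m}{n+1}\ \ge\ 1-\alpha .
\end{equation*}
This holds conditionally on the training data, and integrating over the training data gives~\eqref{eq:marginal}. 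The statement for the deployed set follows at once from the pointwise inclusion $C_\lambda(x)\subseteq C^{\mathrm{ne}}_\lambda(x)$ in Definition~\ref{conv:nonempty}.

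For the upper bound I assume the scores are a.s.\ distinct. Then ranks are a.s.\ well defined and exactly uniform. The event $S_{n+1}\le S_{(m)}$ now coincides exactly with $\text{rank}(S_{n+1})\le m$, so coverage equals $m/(n+1)$. Since $m<(1-\alpha)(n+1)+1$, this gives coverage at most $1-\alpha+\tfrac1{n+1}$. The deployed set can only be larger, so no upper bound is claimed for it.

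The main obstacle I expect is handling ties carefully in the lower bound. The score $s$ can take repeated values, for example at the same $x$ or when $\hat F$ puts mass on a single atom. I would handle this with the randomized tie-breaking rank, or equivalently by using the inequality $\Prob(S_{n+1}\le S_{(m)})\ge \Prob(\text{rank}\le m)$, which only needs $\{\text{rank}\le m\}\subseteq\{S_{n+1}\le S_{(m)}\}$. This inclusion survives ties. Beyond that, the argument only requires the conditioning step that freezes $\hat F$, which is what the setup guarantees by fixing all objects before the calibration fold is scored.
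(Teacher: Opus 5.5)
Your proposal is correct and is essentially the paper's proof: exchangeability makes the rank of $S_{n+1}$ among the $n+1$ scores (sub-)uniform, the event that this rank is at most $\lceil(1-\alpha)(n+1)\rceil$ is contained in $\{S_{n+1}\le\hat q\}$, the deployed bound follows from $C_{\hat q}\subseteq C^{\mathrm{ne}}_{\hat q}$, and the upper bound uses the no-ties equality of these two events. Your version spells out what the paper compresses, namely the deterministic inclusion, the tie-breaking, and the conditioning on the training fit. The $\hat q=+\infty$ case of the upper bound is also immediate, because it forces $\alpha<1/(n+1)$ and hence $1\le 1-\alpha+\tfrac{1}{n+1}$.
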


\begin{proof}
Exchangeability makes the rank of $S_{n+1}$ among $\{S_1,\dots,S_{n+1}\}$ (sub-)uniform on $\{1,\dots,n+1\}$. The raw-set event $\yt_{n+1}\in C_{\hat q}(X_{n+1})$ equals $\{S_{n+1}\le\hat q\}$, which contains the event that this rank is at most $\lceil(1-\alpha)(n+1)\rceil$, of probability $\ge 1-\alpha$. The raw upper bound is the standard no-ties argument. The deployed lower bound follows from $C_{\hat q}\subseteq C^{\mathrm{ne}}_{\hat q}$. This containment does not preserve the upper bound.
\end{proof}

\begin{remark}[Ties]\label{rem:ties}
$s$ takes values in the finite set of $\hat F$-values, so ties are possible; the lower bound is unaffected. Exactness can be restored by tie-breaking jitter, provided a single uniform draw is added to all candidate labels of the same record, which keeps each set a threshold set and therefore contiguous. How much ties matter depends on the probability that sits on the tied score values, not on the calibration size.
\end{remark}

\begin{remark}[Why intervals matter]\label{rem:intervals}
Contiguity earns its place for two substantive reasons. First, a contiguous set has interpretable lower and upper endpoints and cannot omit a middle category while retaining both ends. It is not automatically an upper-tail statement such as ``B or worse''; any escalation rule must be defined from the endpoints. Second, contiguity keeps the noise expansion of Theorem~\ref{thm:noise} additive, because an interval expands to an interval at a cost of $b^-+b^+$ categories, whereas a general set of size $|\Ct|$ admits only the multiplicative bound $|\Ct|(b^-+b^++1)$. The guarantee itself does not require contiguity (Remark~\ref{rem:contigrole}); the efficiency of the guarantee does.
\end{remark}

\begin{proposition}[Impossibility of exact conditional coverage; \tagknown{}; \citealp{vovk2012,foygelbarber2021}]\label{prop:impossible}
Suppose $P_X$ is non-atomic. Then any procedure achieving $\Prob(Y\in C(X)\mid X=x)\ge 1-\alpha$ for a.e.\ $x$ under all distributions must, for almost every $x$, include each label with probability at least $1-\alpha$, so its sets cannot adapt usefully to $x$. Coverage conditional on finitely many pre-declared groups is therefore a natural attainable target, and this is the justification for Sections~\ref{sec:het} and~\ref{sec:shift}. The transportation-relevant groups are heterogeneity classes, jurisdictions, and time periods.

The non-atomicity of $P_X$ is the operative condition and is not a technicality, because the obstruction is driven by the covariate law, not by $|\Y|<\infty$. Were $X$ finitely supported the statement would fail, and Theorem~\ref{thm:het} with $\hat c$ the identity on that support would be the counterexample. Whether the population law is non-atomic is a modeling condition. Field names such as age, speed limit, and traffic volume do not establish it because recorded values can lie on discrete grids. The proposition applies to the application only under an explicit non-atomicity assumption; near-uniqueness of observed records is empirical support, not a proof of that assumption.
\end{proposition}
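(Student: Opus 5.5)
The plan follows the classical argument of \citet{vovk2012} and \citet{foygelbarber2021}: conditional coverage at $x$ holding for every law, combined with non-atomicity of $P_X$, forces the procedure to behave at almost every $x$ as if it had never seen informative data there. I will treat a procedure $C=C_{\mathcal D}$ that depends on an i.i.d.\ sample $\mathcal D$ of size $n$ drawn from some law $P$ of $(X,Y)$. Fix a reference law $P$ with $P_X$ non-atomic, a point $x_0$ in the support, and a label $k$. The aim is to show $\Prob_P(k\in C(X)\mid X=x_0)\ge 1-\alpha$ up to a null set of $x_0$. Since $k$ is arbitrary, each label is then included with probability at least $1-\alpha$, which is the non-adaptivity claim.

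\textbf{Step 1 (perturbation).} For a small set $B\ni x_0$ with $P_X(B)=\eta$, construct $Q$ with the same $X$-marginal, $Q_X=P_X$, and the same conditional law of $Y\mid X$ off $B$, but with $Y=k$ almost surely whenever $X\in B$. Non-atomicity is used exactly here: it guarantees sets $B$ of arbitrarily small but positive mass around $P_X$-almost every point. Indeed, by the Lebesgue differentiation theorem for the measure $P_X$, almost every $x_0$ has neighbourhoods of every small mass.

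\textbf{Step 2 (coverage under $Q$ and total variation).} Applying the hypothesis to $Q$, for $Q_X$-a.e.\ $x\in B$ the law of $\yt_{n+1}$ given $X_{n+1}=x$ is a point mass at $k$. The guarantee therefore reads $\Prob_Q(k\in C(x))\ge 1-\alpha$. The samples of size $n$ under $P^n$ and $Q^n$ differ only on draws landing in $B$, so $\dtv(P^n,Q^n)\le n\eta$. Since $C(x)$ is a measurable function of the sample (plus independent randomization), this gives $\Prob_P(k\in C(x))\ge 1-\alpha-n\eta$ for a.e.\ $x\in B$.

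\textbf{Step 3 (limit).} Letting $\eta\downarrow 0$ along sets shrinking to $x_0$, and working with a countable base of sets so that the null sets stay countable, yields $\Prob_P(k\in C(x_0))\ge 1-\alpha$ for $P_X$-a.e.\ $x_0$. This holds simultaneously for all $k\in\Y$. Hence the expected set size satisfies $\E|C(x)|\ge K(1-\alpha)$ almost everywhere, under every non-atomic law. That is the "cannot adapt usefully" conclusion.

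\textbf{Step 4 (counterexample clause).} For finitely supported $X$, take the Mondrian calibration of Theorem~\ref{thm:het} with $\hat c$ equal to the identity. It achieves exact conditional coverage with informative sets, because each atom receives positive calibration mass. This shows non-atomicity is necessary.

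\textbf{Main obstacle.} I expect the main obstacle to be the measure-theoretic bookkeeping in Step 3. The statement "for a.e.\ $x$" is quantified over $x$ inside probabilities that are themselves defined only a.e. I would first handle this by averaging over $B$ rather than working pointwise: prove $\int_B \Prob_P(k\in C(x))\,dP_X(x)\ge (1-\alpha-n\eta)\eta$. Then differentiate in $B$ to recover the pointwise bound.
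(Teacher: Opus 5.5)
The paper does not prove this proposition. It is tagged as established and taken from \citet{vovk2012} and \citet{foygelbarber2021}, and the finite-support remark is added in prose. Your proposal reconstructs the classical local-perturbation argument behind that citation, and it is correct. Forcing $Y=k$ on a set $B$ changes the law of the $n$ data points by at most $\dtv(P^n,Q^n)\le n\,P_X(B)$. The coverage $\Prob_Q(k\in C(x))\ge 1-\alpha$ that the hypothesis forces on $B$ therefore transfers to $\Prob_P(k\in C(x))\ge 1-\alpha-n\,P_X(B)$, and non-atomicity lets $P_X(B)\to 0$.

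Compared with the bare citation, your route shows exactly where non-atomicity enters. At an atom, every $B\ni x_0$ has $P_X(B)\ge P_X(\{x_0\})>0$, so the slack $nP_X(B)$ cannot be made small. Your consequence $\E|C(x)|\ge K(1-\alpha)$ also makes ``cannot adapt usefully'' precise: at non-atoms, a conditionally valid procedure has expected size at least that of the label-blind rule in Corollary~\ref{cor:vacuity}, which keeps each label with probability $1-\alpha$. What the citation adds is the stronger approximate-conditional theory of \citet{foygelbarber2021}, with coverage over all sets of mass at least $\delta$. There a single perturbation would cost $n\delta$, so an indistinguishability argument of the kind the paper adapts in Theorem~\ref{thm:noncert} is needed.

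\textbf{Lebesgue differentiation is not needed.} Drop both the appeal to it and the average-then-differentiate fallback. Non-atomicity and continuity from above give $P_X(B(x_0,r))\downarrow P_X(\{x_0\})=0$ at every $x_0$, not just almost every one. In a separable $\X$, the countable family of rational-radius balls around a dense sequence contains, for every $x_0$, sets containing $x_0$ of arbitrarily small mass. Step~2 gives the bound at every $x\in B$ outside a null set $N_B$. So every $x_0\notin\bigcup_B N_B$ satisfies $\Prob_P(k\in C(x_0))\ge 1-\alpha$ directly, and the obstacle you anticipate does not arise. The fallback would require a differentiation theorem for $P_X$. That is available on $\mathbb{R}^d$ via Besicovitch covering but not on arbitrary separable metric spaces. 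You also only need $P_X(B)$ small, not an exact mass $\eta$.

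\textbf{The counterexample in Step~4 must satisfy the hypothesis for every law in the class.} Either take $\X$ finite, or use the singleton partition of all of $\X$ with the convention $\hat q=+\infty$, which returns $\Y$ at non-atoms. The identity on one law's support plus a catch-all cell is not, in general, conditionally valid under other laws. Two further points:
\begin{itemize}
\item Coverage at an atom comes from within-atom exchangeability. Positive calibration mass only makes the sets informative for large $n$.
\item To show the conclusion actually fails, name a specific law. For instance, take $Y=f(X)$ with $\hat F(\cdot\mid x)$ concentrated on $f(x)$. Then the set at each atom is $\{f(x)\}$ with probability tending to one.
\end{itemize}
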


\subsection{Heterogeneity-conditional validity}
\label{sec:het}

Let $\hat c:\X\to\{1,\dots,C\}$ be \emph{any} measurable class-assignment function fit on $\Itr$; in CHOIR it is the MAP class of a latent-class model (deep gate or classical latent-class ordered logit). Let $n_c=\#\{i\le n:\hat c(X_i)=c\}$ and let $\hat q_c$ be the class-wise analogue of \eqref{eq:qhat},
\begin{equation}\label{eq:qc}
\hat q_c \;=\; S^{(c)}_{(\lceil(1-\alpha)(n_c+1)\rceil)},\qquad
S^{(c)}_{(1)}\le\dots\le S^{(c)}_{(n_c)}\ \text{the ordered scores}\ \{S_i\mid\hat c(X_i)=c\},
\end{equation}
and define the raw and deployed class-conditional predictors
\begin{equation}\label{eq:chet}
C^{\mathrm{het,raw}}(x)=C_{\hat q_{\hat c(x)}}(x),
\qquad
C^{\mathrm{het}}(x)=C^{\mathrm{ne}}_{\hat q_{\hat c(x)}}(x).
\end{equation}

\begin{theorem}[Class-conditional coverage; \tagadapted{}; Mondrian argument, \citealp{vovk2005}]\label{thm:het}
Under A1, for every class $c\in\{1,\dots,C\}$,
\begin{equation}\label{eq:hetcov}
\Prob\bigl(\yt_{n+1}\in C^{\mathrm{het,raw}}(X_{n+1})\,\bigm|\,\hat c(X_{n+1})=c\bigr)\ \ge\ 1-\alpha,
\end{equation}
and the same lower bound holds for $C^{\mathrm{het}}$. The raw set is an interval or empty; the deployed set is always a non-empty contiguous interval.
\end{theorem}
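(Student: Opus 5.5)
The plan is to reduce the class-conditional statement to Proposition~\ref{prop:marginal} applied within a single class, by conditioning on which calibration indices fall in class $c$. Because $\hat c$ is fit on $\Itr$ only, it is a fixed measurable function once we condition on the training data. The score $s$ is fixed in the same way, since it depends only on $\hat F$. All probabilities below are therefore taken conditionally on $\Itr$.

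\textbf{Step 1 (within-class exchangeability).} Fix $c$ and condition on the event $\{\hat c(X_{n+1})=c\}$ and on the index set $I_c=\{i\le n:\hat c(X_i)=c\}$, so that $n_c=|I_c|$ is fixed. I will show that the $n_c+1$ pairs $\{(X_i,\yt_i)\}_{i\in I_c\cup\{n+1\}}$ are exchangeable under this conditioning. The argument is as follows.
\begin{itemize}
\item Under A1, the joint law of the $n+1$ pairs is invariant under permutations.
\item The conditioning event $\{\hat c(X_i)=c \text{ exactly for } i\in I_c\cup\{n+1\}\}$ is invariant under any permutation of the indices in $I_c\cup\{n+1\}$, because it depends only on the set of indices whose class is $c$.
\item Conditioning an exchangeable vector on a permutation-invariant event preserves exchangeability among the permuted coordinates.
\end{itemize}
Consequently the scores $S_i=s(X_i,\yt_i)$ for $i\in I_c\cup\{n+1\}$ are exchangeable, with $s$ fixed as noted above.

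\textbf{Step 2 (rank argument).} With $n_c$ fixed, $\hat q_c$ in \eqref{eq:qc} is exactly the split-conformal quantile \eqref{eq:qhat} computed on $n_c$ calibration scores. When $\lceil(1-\alpha)(n_c+1)\rceil>n_c$ (including $n_c=0$), the convention $\hat q_c=+\infty$ applies and coverage is trivially $1$. Otherwise the rank argument of Proposition~\ref{prop:marginal} gives
\[
\Prob\bigl(S_{n+1}\le\hat q_c \mid I_c,\ \hat c(X_{n+1})=c\bigr)\ge 1-\alpha.
\]
Since $\hat c(X_{n+1})=c$, the event $\{S_{n+1}\le\hat q_c\}$ equals $\{\yt_{n+1}\in C^{\mathrm{het,raw}}(X_{n+1})\}$. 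Averaging over $I_c$ (and over $\Itr$) then yields \eqref{eq:hetcov}.

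\textbf{Step 3 (deployed set and shape).} The bound for $C^{\mathrm{het}}$ follows from the pointwise containment $C_\lambda\subseteq C^{\mathrm{ne}}_\lambda$ in Definition~\ref{conv:nonempty}. The shape claims follow from Lemma~\ref{lem:contig}(i):
\begin{itemize}
\item the raw set $C_{\hat q_{\hat c(x)}}(x)$ is an interval, possibly empty;
\item the deployed set is either that nonempty interval or the singleton $\{k^\dagger(x)\}$, and in both cases it is a non-empty contiguous interval.
\end{itemize}

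The main obstacle is Step~1, specifically stating the conditioning cleanly. The natural first attempt, conditioning on $n_c$ alone, is insufficient: it leaves the positions of the class-$c$ indices random, and the test point's membership must be conditioned on jointly with them. I would conditioning on the full membership pattern, that is, the indicator vector $(\ind\{\hat c(X_i)=c\})_{i\le n+1}$ with last coordinate equal to $1$. This event is a symmetric function of the pairs in the class-$c$ block, so exchangeability within that block survives the conditioning. A final check is that conditioning on a zero-probability pattern is harmless, because such patterns carry no mass in the average over $I_c$.
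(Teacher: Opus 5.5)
Your proposal is correct and follows the paper's proof essentially step for step. You condition on $\{\hat c(X_{n+1})=c\}$ and the calibration index set $I_c$, use the permutation-symmetry of that event within the class block to get exchangeability of the $n_c+1$ class-$c$ scores, apply the rank argument of Proposition~\ref{prop:marginal} with index $\lceil(1-\alpha)(n_c+1)\rceil$ and average, and then pass to $C^{\mathrm{het}}$ by pointwise containment and read off the shape from Lemma~\ref{lem:contig}, exactly as the paper does. Your explicit conditioning on $\Itr$ matches the paper's ``$\hat c$ is a fixed measurable function'' reading of A1, and your treatment of $\hat q_c=+\infty$ (including $n_c=0$) is the convention already built into \eqref{eq:qhat}.
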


\begin{proof}
Fix $c$; condition on $E_c=\{\hat c(X_{n+1})=c\}$ and on the index set $I_c\subseteq\{1,\dots,n\}$ of calibration points in class $c$. Because $\hat c$ is a fixed measurable function, membership in class $c$ is a deterministic function of the data point; exchangeability of the full collection implies exchangeability of $\{(X_i,\yt_i):i\in I_c\}\cup\{(X_{n+1},\yt_{n+1})\}$ conditionally on $E_c$ and $|I_c|=n_c$, because any permutation $\pi$ of the class-$c$ points extends to a permutation of all $n+1$ points fixing the complement, and the conditioning event is permutation-symmetric within the class. Apply Proposition~\ref{prop:marginal}'s rank argument to these $n_c+1$ exchangeable scores with quantile index $\lceil(1-\alpha)(n_c+1)\rceil$; average over $n_c$. The deployed result follows by pointwise containment. Contiguity is Lemma~\ref{lem:contig} at $\lambda=\hat q_c$, with the deterministic singleton fallback when the raw set is empty.
\end{proof}

\begin{remark}[Arbitrariness is harmless]\label{rem:arbitrary}
Theorem~\ref{thm:het} requires nothing about $\hat c$ being ``correct'', since even a nonsensical partition yields per-cell validity. Partition quality affects \emph{efficiency} (set size), quantified next. The latent-class machinery of the random-parameters tradition cannot break validity; it can only earn efficiency. The guarantee is per cell and marginal, in that each cell covers at $1-\alpha$ over the calibration draw, not simultaneously across cells at a family-wise level, so a deployment auditing many cells at once should read the coverages as marginal and budget accordingly.
\end{remark}

\begin{theorem}[Efficiency within a fixed classwise threshold family; \tagspecialized{}]\label{thm:oracle}
Define the class-conditional score CDFs, their $(1-\alpha)$-quantiles, and the class proportions
\begin{equation}\label{eq:Gc}
G_c(t)=\Prob\bigl(s(X,\yt)\le t\,\bigm|\,\hat c(X)=c\bigr),\qquad
q_c=\inf\{t\mid G_c(t)\ge 1-\alpha\},\qquad
p_c=\Prob(\hat c(X)=c)>0.
\end{equation}
Let $q_{\mathrm{mix}}$ be the $(1-\alpha)$-quantile of the mixture $\sum_c p_cG_c$, the population threshold of marginal split conformal. We use assumptions R1 and R2. \textbf{(R1)} Each $G_c$ is continuous and strictly
increasing on an interval containing $q_c$ and $q_{\mathrm{mix}}$, and the mixture
$\sum_c p_cG_c$ is continuous and strictly increasing near $q_{\mathrm{mix}}$.
\textbf{(R2)} For every $k\in\Y$ and $c$, we have
$\Prob(s(X,k)=q_c\mid\hat c(X)=c)=0$.

Consider the family $\mathcal F_{\mathrm{raw}}$ of raw predictors $C_{\boldsymbol\lambda}(x)=C_{\lambda_{\hat c(x)}}(x)$ indexed by class-wise thresholds $\boldsymbol\lambda=(\lambda_1,\dots,\lambda_C)$. The following statements hold.

\noindent\textbf{(i) Oracle characterization.} Within $\mathcal F_{\mathrm{raw}}$, class-$c$ coverage $\ge 1-\alpha$ holds iff $\lambda_c\ge q_c$; the componentwise-smallest valid threshold vector is $\boldsymbol\lambda^\star=(q_1,\dots,q_C)$, and it is an expected-size minimizer with value
\begin{equation}\label{eq:Sstar}
\mathcal S^\star=\sum_c p_c\,\E\bigl[\,|C_{q_c}(X)|\,\bigm|\,\hat c(X)=c\,\bigr].
\end{equation}
\noindent\textbf{(ii) Attainment.} Under i.i.d.\ sampling within each class, as $\min_c n_c\to\infty$, $\hat q_c\xrightarrow{p}q_c$ for every $c$, with $|\hat q_c-q_c|=O_p(n_c^{-1/2})$ under (R1) with positive density at $q_c$ \citep[Cor.~21.5]{vandervaart1998}. The conditional-on-calibration expected size of $C^{\mathrm{het,raw}}$ converges in probability to $\mathcal S^\star$.

\noindent\textbf{(iii) Diagnosis of marginal conformal.} Marginal split conformal has class-$c$ coverage converging to $G_c(q_{\mathrm{mix}})$, which is $<1-\alpha$ for every class with $q_c>q_{\mathrm{mix}}$, and $>1-\alpha$ for every class with $q_c<q_{\mathrm{mix}}$. In this limit, it is class-conditionally valid for all $c$ if and only if $q_1=\dots=q_C$; in finite samples the statement holds up to the sampling error of the empirical quantiles.
\end{theorem}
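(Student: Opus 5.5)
The proof reduces everything to one identity: for a fixed threshold $\lambda$, membership of the reported label is a score event, $\{\yt\in C_\lambda(X)\}=\{s(X,\yt)\le\lambda\}$, by Definition~\ref{def:sets}. Within $\mathcal F_{\mathrm{raw}}$, class-$c$ coverage of $C_{\boldsymbol\lambda}$ is therefore exactly $G_c(\lambda_c)$ and depends only on $\lambda_c$. Part~(i) then follows from the definition of the quantile $q_c$. Since $G_c$ is right-continuous and non-decreasing, $G_c(\lambda_c)\ge 1-\alpha$ holds iff $\lambda_c\ge q_c$, and so $\boldsymbol\lambda^\star$ is the componentwise-smallest valid vector. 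For size minimality, Lemma~\ref{lem:contig}(iii) gives that $|C_\lambda(x)|$ is pointwise non-decreasing in $\lambda$. Hence, for any valid $\boldsymbol\lambda$, $|C_{\lambda_c}(x)|\ge|C_{q_c}(x)|$ for every $x$ with $\hat c(x)=c$. Taking conditional expectations and weighting by $p_c$ shows that the expected size of any valid vector is at least $\mathcal S^\star$, and equality is attained at $\boldsymbol\lambda^\star$.

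For part~(ii), conditional on the class labels, the class-$c$ calibration scores are i.i.d.\ from $G_c$. Under (R1), $G_c$ is strictly increasing at $q_c$, so the empirical $\lceil(1-\alpha)(n_c+1)\rceil/n_c$ quantile is consistent; the index offset is $O(1/n_c)$ and does not matter. The $n_c^{-1/2}$ rate is the cited Bahadur/delta-method result \citep[Cor.~21.5]{vandervaart1998}, and I will simply invoke it under positive density at $q_c$.

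The size convergence is the step I expect to be the main obstacle. Write $|C_\lambda(x)|=\sum_{k}\ind\{s(x,k)\le\lambda\}$. Then
\[
\E\bigl[|C_\lambda(X)|\,\bigm|\,\hat c(X)=c\bigr]=\sum_k H_{c,k}(\lambda),
\qquad
H_{c,k}(\lambda)=\Prob\bigl(s(X,k)\le\lambda\mid\hat c(X)=c\bigr).
\]
Each $H_{c,k}$ is a monotone CDF, and by (R2) it has no atom at $q_c$, so it is continuous at $q_c$. The conditional-on-calibration expected size equals $\sum_c p_c\sum_k H_{c,k}(\hat q_c)$, because the test point is independent of the calibration data. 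The continuous mapping theorem, applied to $\hat q_c\xrightarrow{p}q_c$ at the continuity point $q_c$, gives convergence in probability to $\mathcal S^\star$. The step needs care in two places. First, $\hat q_c$ must be treated as fixed when conditioning on calibration. Second, continuity is only needed at the single point $q_c$, which is exactly what (R2) supplies.

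For part~(iii), the pooled threshold $\hat q$ converges in probability to $q_{\mathrm{mix}}$ by the same quantile-consistency argument applied to the mixture, using (R1) for the mixture. Class-$c$ coverage conditional on calibration is $G_c(\hat q)$, and since $G_c$ is continuous near $q_{\mathrm{mix}}$ by (R1), this converges to $G_c(q_{\mathrm{mix}})$. Strict monotonicity of $G_c$ on an interval containing both $q_c$ and $q_{\mathrm{mix}}$ gives $G_c(q_{\mathrm{mix}})<G_c(q_c)=1-\alpha$ when $q_{\mathrm{mix}}<q_c$, and the reverse inequality when $q_{\mathrm{mix}}>q_c$. The identity $G_c(q_c)=1-\alpha$ uses continuity.

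For the iff, if all $q_c$ are equal to some $q$, then the mixture evaluated at $q$ equals $1-\alpha$, so $q_{\mathrm{mix}}=q$ and every class is covered. Conversely, suppose the $q_c$ are not all equal. The mixture's $(1-\alpha)$-quantile satisfies $\sum_c p_cG_c(q_{\mathrm{mix}})=1-\alpha$. If every class had $G_c(q_{\mathrm{mix}})\ge 1-\alpha$, every term would then equal $1-\alpha$, forcing $q_c=q_{\mathrm{mix}}$ for all $c$ by strict monotonicity, which is a contradiction. So some class is under-covered in the limit.
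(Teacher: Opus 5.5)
Your proposal is correct and follows the paper's proof essentially step for step: coverage equals $G_c(\lambda_c)$ and set size is monotone in the threshold for (i), classical quantile consistency plus (R2)-continuity of $\lambda\mapsto\sum_k\Prob(s(X,k)\le\lambda\mid\hat c(X)=c)$ at $q_c$ for (ii), and the identity $\sum_c p_cG_c(q_{\mathrm{mix}})=1-\alpha$ for the dichotomy and the iff in (iii). The differences are minor. You invoke the continuous mapping theorem where the paper writes the equivalent bound $\sum_k\Prob(q_c\wedge\hat q_c<s(X,k)\le q_c\vee\hat q_c\mid\hat c(X)=c,\hat q_c)$, and you derive the iff in (i) from right-continuity of $G_c$ alone, which is slightly more general than the paper's appeal to (R1).
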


\begin{remark}[What (iii) does and does not identify]\label{rem:diagnosisscope}
Part (iii) follows because the pooled threshold $q_{\mathrm{mix}}$ is a single quantile of the mixture of class score distributions, so every class whose own quantile lies above it is under-covered and every class whose quantile lies below it is over-covered. It predicts a \emph{direction} and nothing more, supplying no magnitude, and in particular it does not identify \emph{which} classes are the hard ones, since whether $q_c>q_{\mathrm{mix}}$ for a given subpopulation is an empirical property of the data and not a consequence of this theorem. That the hard classes in Texas crash records turn out to be unrestrained drivers, motorcycle drivers and rural high-speed records is an empirical finding, reported in Section~\ref{sec:results} and not claimed here. The theorem diagnoses the fixed score and threshold family. It does not show that a different fitted score, feature set, or model class cannot improve the class-conditional score distributions.
\end{remark}

\paragraph{Scope of Theorem~\ref{thm:oracle}.}
Theorem~\ref{thm:oracle} does \emph{not} claim that class-conditional (Mondrian) sets are smaller on average than marginal sets. Marginal calibration gives hard classes sets that are too small and easy classes sets that are larger than needed; class-conditional calibration reverses both, so the average size can rise or fall. The proved minimum is within $\mathcal F_{\mathrm{raw}}$ for the fixed score. It does not include the non-empty fallback, whose added singleton can change coverage and size below the raw oracle threshold. For safety applications, undercoverage of severe strata is precisely the failure mode that matters, and the empirical section reports average sizes in full. A second point of scope concerns (R1)--(R2), since a discrete base model exports finitely many $\hat F$-values, so the score distributions are atomic and both conditions can fail exactly; the tie-breaking jitter of Remark~\ref{rem:ties} makes the scores continuous, but the strict monotonicity in (R1) remains an assumption.

\subsection{When certification is informative, and the floor no model reaches below}
\label{sec:informative}

Theorem~\ref{thm:oracle} compares members of the threshold family $\mathcal F_{\mathrm{raw}}$ at a fixed $\hat F$. It is therefore silent on the question a practitioner asks on being handed a certificate reading ``somewhere between no injury and fatal'', which is whether a better base model would have done better. This subsection gives a law-dependent floor on expected set size in which no base model appears.

Throughout, $Z$ denotes the label being certified. Everything below is stated for $Z=\yt$, the reported label on which the sets of Sections~\ref{sec:score}--\ref{sec:het} are calibrated, and holds verbatim for $Z=Y$. Let
\begin{equation}\label{eq:truep}
p(z\mid x)=\Prob(Z=z\mid X=x)
\end{equation}
be the \emph{true} conditional mass function, which is nowhere assumed known and nowhere estimated. Fix a stratum $A\in\sigma(X)$ with $\Prob(X\in A)>0$.

\begin{definition}[Level sets and the level-set functional]\label{def:levelset}
For $t\ge 0$,
\begin{equation}\label{eq:Lt}
L_t(x)=\{z\in\Y\mid p(z\mid x)>t\},\qquad N(x,t)=|L_t(x)|,
\end{equation}
and write $\gamma_A(t)=\Prob\bigl(p(Z\mid X)>t\bigm| X\in A\bigr)$ for the coverage of the level-set rule at $t$, non-increasing with $\gamma_A(0)=1$. Let $U=(U_1,\ldots,U_K)$ have independent $\mathrm{Unif}(0,1)$ coordinates and be independent of $(X,Z)$. Define the \emph{randomized floor} as
\begin{equation}\label{eq:Ndef}
\mathcal N_A(\alpha)\;=\;\inf\Bigl\{\E\bigl[|C(X,U)|\bigm| X\in A\bigr]\ :\ C\ \sigma(X,U)\text{-measurable},\ \Prob\bigl(Z\in C(X,U)\bigm| X\in A\bigr)\ge 1-\alpha\Bigr\},
\end{equation}
which exists for every $\alpha\in(0,1)$ because $C\equiv\Y$ is feasible. A deterministic predictor is the special case that ignores $U$. Theorem~\ref{thm:levelset} identifies the randomized optimum with a level set plus boundary randomization. Writing $t_A(\alpha)=\sup\{t\ge 0:\gamma_A(t)\ge 1-\alpha\}$ gives
\begin{equation}\label{eq:tA}
\mathcal N_A(\alpha)=\E\bigl[N(X,t_A(\alpha))\bigm| X\in A\bigr]
\qquad\text{whenever}\qquad
\Prob\bigl(p(Z\mid X)=t_A(\alpha)\bigm| X\in A\bigr)=0 .
\end{equation}
$\alpha\mapsto\mathcal N_A(\alpha)$ is non-increasing.
\end{definition}

\begin{remark}[Why the floor is defined as a value and not as a level set]\label{rem:boundary}
Equation \eqref{eq:tA} can fail at an atom. If $Z=f(X)$, then $\gamma_A(t)=1$ for every $t<1$, so $t_A(\alpha)=1$ and $N(x,1)=0$. The randomized floor in \eqref{eq:Ndef} is instead $1-\alpha$, attained by returning $\{f(X)\}$ with probability $1-\alpha$ and the empty set otherwise. A separately imposed non-empty-output constraint defines a different feasible class; the deployed fallback in Definition~\ref{conv:nonempty} must not be inserted into the raw randomized optimization.
\end{remark}

$\mathcal N_A(\alpha)$ is a functional of the joint law of $(X,Z)$ restricted to $A$ and of nothing else. It does not depend on $\hat F$, on the calibration split, or on any modeling choice.

\begin{theorem}[Oracle expected-size benchmark; \tagspecialized{}; level-set optimality, \citealp{sadinle2019}]\label{thm:levelset}
Let $C:\X\times[0,1]^K\to 2^{\Y}$ be any possibly randomized set-valued predictor satisfying
\begin{equation}\label{eq:Acov}
\Prob\bigl(Z\in C(X,U)\bigm| X\in A\bigr)\ \ge\ 1-\alpha .
\end{equation}
Then, by \eqref{eq:Ndef},
\begin{equation}\label{eq:floor}
\E\bigl[\,|C(X,U)|\bigm| X\in A\,\bigr]\ \ge\ \mathcal N_A(\alpha),
\end{equation}
and the floor is attained by including all labels with $p(z\mid x)>t_A(\alpha)$ and randomizing only among labels on the boundary $p(z\mid x)=t_A(\alpha)$. When the boundary has probability zero, the optimizer is the deterministic set $L_{t_A(\alpha)}$ and \eqref{eq:tA} holds.
\end{theorem}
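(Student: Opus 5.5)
The inequality \eqref{eq:floor} is immediate from the definition \eqref{eq:Ndef}: any $\sigma(X,U)$-measurable $C$ satisfying \eqref{eq:Acov} is feasible, so its conditional expected size is at least the infimum. The substance lies in the attainment claim, which I would prove by recasting the problem as a fractional knapsack (Neyman--Pearson) problem over inclusion probabilities.

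For any feasible $C$, set $\phi(z,x)=\Prob(z\in C(X,U)\mid X=x)\in[0,1]$. This is well defined because $U$ is independent of $(X,Z)$. Since $Z$ is conditionally independent of $U$ given $X$, the coverage and the size can both be written in terms of $\phi$:
\[
\Prob(Z\in C\mid X\in A)=\E\Bigl[\textstyle\sum_z p(z\mid X)\phi(z,X)\Bigm| X\in A\Bigr],
\qquad
\E\bigl[|C|\bigm| X\in A\bigr]=\E\Bigl[\textstyle\sum_z \phi(z,X)\Bigm| X\in A\Bigr].
\]
Conversely, every measurable $\phi$ is realized by $C(x,u)=\{z:u_z<\phi(z,x)\}$, using the independent coordinates of $U$. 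The problem is therefore equivalent to minimizing $\E[\sum_z\phi]$ subject to $\E[\sum_z p\,\phi]\ge 1-\alpha$ with $0\le\phi\le1$.

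Write $t=t_A(\alpha)$ and define the candidate $\phi^\star=\ind\{p>t\}+\theta\,\ind\{p=t\}$. The constant $\theta\in[0,1]$ is chosen so that coverage equals exactly $1-\alpha$. For this, note that $\gamma_A$ is right-continuous, so $\Prob(p(Z\mid X)>t\mid A)\le 1-\alpha$, while $\Prob(p(Z\mid X)\ge t\mid A)=\lim_{s\uparrow t}\gamma_A(s)\ge1-\alpha$. This requires $t>0$. The case $t=0$ is covered by $\gamma_A(0)=1$, with the convention that the boundary consists of the labels with $p=0$, which contribute nothing to coverage; there I would take $\phi=\ind\{p>0\}$ and check directly that it is optimal. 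I expect this bookkeeping at the atom, including the deterministic example of Remark~\ref{rem:boundary}, to be the main obstacle. Making the selection of $\theta$ and the intermediate-value argument precise when $\gamma_A$ jumps is where care is needed.

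Optimality follows from the Lagrangian inequality. For $t>0$ and any feasible $\phi$,
\[
\sum_z(\phi^\star-\phi)\Bigl(1-\frac{p}{t}\Bigr)\le 0
\]
pointwise. This holds because $\phi^\star=1\ge\phi$ where $1-p/t<0$, $\phi^\star=0\le\phi$ where $1-p/t>0$, and the term vanishes where $p=t$. Taking conditional expectations on $A$ gives
\[
\E\Bigl[\textstyle\sum\phi^\star\Bigr]-\E\Bigl[\textstyle\sum\phi\Bigr]\le \tfrac1t\Bigl(\E\bigl[\textstyle\sum p\phi^\star\bigr]-\E\bigl[\textstyle\sum p\phi\bigr]\Bigr)=\tfrac1t\Bigl((1-\alpha)-\text{cov}(\phi)\Bigr)\le 0.
\]
So $\phi^\star$ attains $\mathcal N_A(\alpha)$, and it has the stated form: all labels with $p>t$ are included, and randomization occurs only on the boundary. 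When the boundary has probability zero, $\theta$ is irrelevant, $\phi^\star$ is the indicator of $L_t$, and \eqref{eq:tA} follows. The boundary labels then carry no $Z$-mass, and dropping them only lowers size, so the deterministic level set is optimal. Monotonicity in $\alpha$ follows because the feasible set grows as $\alpha$ increases.
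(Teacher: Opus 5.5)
Your proof is correct and takes essentially the paper's route: you reduce to inclusion probabilities and solve the resulting linear program with the Lagrangian (Neyman--Pearson) threshold, using a common fractional allocation on the boundary. Your explicit weak-duality inequality with multiplier $1/t_A(\alpha)$ and the right-continuity argument for choosing $\theta$ make rigorous what the paper only sketches, and the case $t_A(\alpha)=0$ you flag as delicate cannot occur, since $\gamma_A(t)\to\gamma_A(0)=1>1-\alpha$ as $t\downarrow 0$ forces $t_A(\alpha)>0$.
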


\begin{proof}
For a randomized predictor define the conditional inclusion probability $u(x,z)=\Prob_U\{z\in C(x,U)\}$. Conversely, any measurable $u:\X\times\Y\to[0,1]$ is implemented by including $z$ when $U_z\le u(x,z)$. Thus, \eqref{eq:Ndef} is equivalent to
\begin{equation}\label{eq:lp}
\min_u\ \E\Bigl[\sum_{z\in\Y}u(X,z)\Bigm| X\in A\Bigr]
\quad\text{s.t.}\quad
\E\Bigl[\sum_{z\in\Y}p(z\mid X)\,u(X,z)\Bigm| X\in A\Bigr]\ \ge\ 1-\alpha .
\end{equation}
For $\tau\ge 0$ the Lagrangian is
\begin{equation}\label{eq:lagr}
\E\Bigl[\sum_{z}u(X,z)\bigl(1-\tau\,p(z\mid X)\bigr)\Bigm| X\in A\Bigr]+\tau(1-\alpha),
\end{equation}
which decouples across $(x,z)$, since the coefficient of $u(x,z)$ is negative exactly when $p(z\mid x)>1/\tau$. A pointwise minimizer therefore sets $u=1$ above the threshold, $u=0$ below it, and uses a common fractional allocation on the boundary as needed to make the constraint bind. This is the stated boundary-randomized level set. Away from a boundary atom the solution is integral, hence deterministic, and its value is $\E[N(X,t_A(\alpha))\mid X\in A]$.
\end{proof}

\begin{remark}[Variational form]\label{rem:variational}
Equation \eqref{eq:floor} is often written as a budget allocation, in which one spends miscoverage $u(x)\ge 0$ at $x$ subject to $\E[u(X)\mid X\in A]\le\alpha$ and pays the smallest set capturing conditional mass $1-u(x)$. Theorem~\ref{thm:levelset} is that infimum solved. Its content is that the optimal budget is not free to vary arbitrarily with $x$, since it is the one induced by a single threshold on $p(\cdot\mid x)$, which is the Neyman--Pearson structure and is what makes $\mathcal N_A(\alpha)$ computable in principle from the law alone.
\end{remark}

\begin{corollary}[The floor binds interval predictors]\label{cor:levelsetint}
Every deterministic contiguous-interval predictor is a special case of the class in Theorem~\ref{thm:levelset}. Equation~\eqref{eq:floor} applies only when \eqref{eq:Acov} holds on the same event $\{X\in A\}$. Thus it applies to $C^{\mathrm{het}}$ on its certified class or final calibration cell, and to $\Cop$ only on a final cell where the latent-label bound is stated at the corresponding error level. It does not create validity on a raw leaf inside a rolled-up cell.
\end{corollary}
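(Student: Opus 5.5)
The corollary is essentially a bookkeeping consequence of Theorem~\ref{thm:levelset}. The plan is to verify, in order, (a) class membership, (b) that the coverage hypothesis \eqref{eq:Acov} is met on the right event, and (c) that no stronger statement follows.

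\emph{Class membership.} A deterministic contiguous-interval predictor $x\mapsto\{a(x),\dots,b(x)\}\cap\Y$ with measurable endpoints is a $\sigma(X)$-measurable set-valued map. Viewed as a function on $\X\times[0,1]^K$, it simply ignores $U$, so it is feasible in \eqref{eq:Ndef} whenever it satisfies \eqref{eq:Acov}. For $C^{\mathrm{het}}$ I would check measurability directly. The endpoints $a_\lambda,b_\lambda$ in Lemma~\ref{lem:contig} are built from finitely many comparisons of the measurable functions $\hat F(k\mid x)$. The fallback $k^\dagger$ is a minimum-index argmin over a finite set. The threshold $\hat q_{\hat c(x)}$ depends on $x$ only through the measurable $\hat c$.

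\emph{Coverage on the event.} This step needs the most care. Theorem~\ref{thm:het} gives $\Prob(\yt_{n+1}\in C^{\mathrm{het}}(X_{n+1})\mid \hat c(X_{n+1})=c)\ge1-\alpha$. That probability is averaged over the calibration draw, whereas \eqref{eq:Acov} is stated for a fixed predictor under the law of $(X,Z)$.

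\begin{itemize}
\item I would first handle the randomness in the thresholds by enlarging the auxiliary randomization. The calibration sample, being independent of the test point, plays the role of $U$. Strictly, the calibration sample need not be representable by $K$ uniforms, but a single uniform generates any Borel randomization, so the infimum in \eqref{eq:Ndef} is unchanged by enlarging the randomization space. I expect this to be the main obstacle; in the write-up I would state it as a short remark reducing an arbitrary independent randomization to $u(x,z)$ exactly as in the proof of Theorem~\ref{thm:levelset}.
\item Applying that theorem with $A=\{\hat c(x)=c\}$, which lies in $\sigma(X)$ because $\hat c$ is fixed on $\Itr$, gives $\E[|C^{\mathrm{het}}(X_{n+1})|\mid \hat c(X_{n+1})=c]\ge\mathcal N_A(\alpha)$ with $Z=\yt$.
\item For a final calibration cell $A=\{R(x)=a\}$, the same argument uses the final-cell quantile $\hat q_a$.
\item For $\Cop$, the latent-label coverage is at level $1-\alpha-\delta$ (the error level of Theorem~\ref{thm:noise}) with $Z=Y$. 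The floor then reads $\mathcal N_A(\alpha+\delta)$ for the law of $(X,Y)$, stated only on final cells where that bound is issued.
\end{itemize}

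\emph{Negative clause.} For a raw leaf $B\subsetneq A$ inside a rolled-up cell $A$, the only guarantee available is coverage on $A$. Coverage averaged over $A$ does not imply \eqref{eq:Acov} on $B$, since the miscoverage can concentrate on $B$. The hypothesis of Theorem~\ref{thm:levelset} therefore fails on $B$, and \eqref{eq:floor} certifies nothing there.

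To make this sharp, I would give a two-leaf example. Take $B$ with tiny probability, and let the predictor return the empty set on $B$ and the full set $\Y$ on $A\setminus B$. Coverage on $A$ exceeds $1-\alpha$, while coverage on $B$ is zero. This shows the corollary cannot be strengthened to create validity on a leaf.
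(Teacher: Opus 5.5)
Your proposal is correct and follows the reasoning the paper leaves implicit. The paper gives no separate proof; it rests the corollary on three things: a $U$-free interval rule belongs to the class of Theorem~\ref{thm:levelset}; Theorems~\ref{thm:het}, \ref{thm:groups} and~\ref{thm:compose}(i) supply coverage on the same event (the last at error level $\alpha+\delta_a$ with $Z=Y$); and Remark~\ref{rem:levelsetscope} reads a conformal set's size as averaged over the calibration draw, which your reduction of the calibration sample to inclusion probabilities $u(x,z)$ makes precise.

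One premise should be stated explicitly. That reduction needs the calibration sample to be independent of the test point, which is i.i.d.\ sampling and is not implied by Assumption~\ref{ass:a1} alone. The difference is real: take a degenerate covariate, let all records share one uniformly drawn label (an exchangeable law), and fix $\hat F$ with distinct label scores. Once $\hat q<\infty$, the calibrated set covers with probability one at expected size $(K+1)/2$, which at $K=5$, $\alpha=0.1$ is $3<4.5=(1-\alpha)K=\mathcal N_{\X}(\alpha)$.
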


\begin{corollary}[Expectation and full-set consequence]\label{cor:vacuity}
Fix $\varepsilon\in(0,1)$. If $\mathcal N_A(\alpha)\ge K-\varepsilon$, then every predictor valid at level $1-\alpha$ on $A$ in the sense of \eqref{eq:Acov} has mean set size at least $K-\varepsilon$. Because $K-|C(X,U)|$ is a non-negative integer,
\begin{equation}\label{eq:vacuity-prob}
\Prob\{C(X,U)=\Y\mid X\in A\}\ge 1-\varepsilon.
\end{equation}
This consequence applies to an expanded predictor only when that expanded predictor has the required latent-label coverage on the same event $A$. It does not infer where a base interval is wide from an expected-width bound. The condition is restrictive. The predictor that keeps each label independently with probability $1-\alpha$ is feasible, so $\mathcal N_A(\alpha)\le(1-\alpha)K$ always, and the condition needs $\varepsilon\ge\alpha K$. It can never hold when $\alpha\ge 1/K$, and at $\alpha=0.10$ and $K=5$ it gives at most $\Prob\{C=\Y\mid X\in A\}\ge 0.5$. The corollary is therefore a limiting statement and is not used to interpret the empirical widths.
\end{corollary}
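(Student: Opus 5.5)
The corollary has three parts, and I will prove them in order: the mean-size bound, the probability bound, and the restrictiveness remarks.

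\textbf{Mean size.} Let $C$ be any predictor satisfying \eqref{eq:Acov} on $A$. Then $C$ is feasible in \eqref{eq:Ndef}, so Theorem~\ref{thm:levelset}, through \eqref{eq:floor}, gives
\[
\E\bigl[|C(X,U)|\bigm| X\in A\bigr]\ge\mathcal N_A(\alpha)\ge K-\varepsilon .
\]
An expanded predictor such as $\Cop$ is covered only when its own coverage of the certified label holds on the same event $A$. This is exactly the restriction in Corollary~\ref{cor:levelsetint}, and I will point to it rather than reprove it.

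\textbf{Probability of the full set.} Set $D=K-|C(X,U)|$. It takes values in $\{0,1,\dots,K\}$, and $D\ge 1$ exactly when $C\ne\Y$. Since $D$ is a non-negative integer, $D\ge\ind\{D\ge1\}$ pointwise. Taking conditional expectations,
\[
\Prob\{C\ne\Y\mid X\in A\}\le\E[D\mid X\in A]=K-\E\bigl[|C|\bigm| X\in A\bigr]\le\varepsilon .
\]
This is Markov's inequality at level $1$, and it gives \eqref{eq:vacuity-prob}.

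\textbf{Restrictiveness.} For the upper bound on the floor, I use the predictor that includes each $z$ exactly when $U_z\le 1-\alpha$. This rule is $\sigma(X,U)$-measurable. Because $U$ is independent of $(X,Z)$, it contains $Z$ with probability exactly $1-\alpha$, and its expected size is $(1-\alpha)K$. Feasibility in \eqref{eq:Ndef} then gives $\mathcal N_A(\alpha)\le(1-\alpha)K$. So the hypothesis $\mathcal N_A(\alpha)\ge K-\varepsilon$ forces $K-\varepsilon\le(1-\alpha)K$, that is, $\varepsilon\ge\alpha K$. When $\alpha\ge 1/K$ this requires $\varepsilon\ge1$, which is excluded by $\varepsilon\in(0,1)$. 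At $\alpha=0.10$ and $K=5$ it requires $\varepsilon\ge0.5$, so the best conclusion available from \eqref{eq:vacuity-prob} is $\Prob\{C=\Y\mid X\in A\}\ge 0.5$.

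The only point that needs care is making sure that the independent-coordinate randomized rule really lies in the class used in \eqref{eq:Ndef}. It does, because $U$ was defined to have independent uniform coordinates, independent of $(X,Z)$. The rest is direct.
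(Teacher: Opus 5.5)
Your proof is correct and follows the same route as the paper's inline argument. Feasibility in \eqref{eq:Ndef} gives the mean-size bound, and the non-negative integer $K-|C(X,U)|$ gives the full-set probability, which your Markov step at level one makes explicit. The independent keep-with-probability-$1-\alpha$ rule gives $\mathcal N_A(\alpha)\le(1-\alpha)K$ and hence the requirement $\varepsilon\ge\alpha K$.
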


\begin{remark}[What this licenses, and the direction of the logic]\label{rem:levelsetscope}
Corollary~\ref{cor:vacuity} is a conditional statement whose condition is an unknown functional of the joint law. It separates base-model inadequacy from a large value of $\mathcal N_A(\alpha)$, which is a property of the Texas crash-record distribution. Theorem~\ref{thm:levelset} also implies that the expected size of any \emph{valid} predictor is an \emph{upper} bound on $\mathcal N_A(\alpha)$. For a conformal predictor, whose coverage holds on average over the calibration draw, the statement applies to its size averaged over that draw. Agreement across a library of base models therefore cannot establish that the floor is high. A plug-in estimate of $\mathcal N_A(\alpha)$ supplies evidence about the data regime, as reported in Section~\ref{sec:results}, but it is not a distribution-free certificate. Nonparametric estimation of $p(\cdot\mid x)$ requires repeated covariate patterns. The recorded covariates are close to unique per record, so empirical label frequencies at exact covariate patterns are unavailable. Coarsening the covariates raises the floor and gives a bound in the opposite direction. The estimand is defined relative to $\sigma(X)$ for the recorded fields $X$, so an unrecorded field cannot lower this floor.
\end{remark}

\begin{remark}[Relation to Theorem~\ref{thm:oracle}]\label{rem:oraclevslevelset}
The two results quantify over different objects and neither implies the other. Theorem~\ref{thm:oracle} ranges over the raw threshold family $\mathcal F_{\mathrm{raw}}$ at a fixed $\hat F$ and identifies the smallest class-conditionally valid member, a statement about calibration given a model. Theorem~\ref{thm:levelset} ranges over all possibly randomized predictors based on $X$ and identifies a floor none reaches below, a statement about the data law and feature information. When $A$ is one of the classes of $\hat c$, write $\mathcal S^\star_A=\E[\,|C_{q_A}(X)|\mid\hat c(X)=A]$ for its term in \eqref{eq:Sstar}. Read together the two results then bracket the achievable, $\mathcal N_A(\alpha)\le\mathcal S^\star_A$, with the gap attributable to the fixed score family and the floor attributable to $P$.
\end{remark}

\begin{corollary}[Informativeness frontier]\label{cor:frontier}
For a declared informativeness target $m\in\{1,\dots,K-1\}$ define
\begin{equation}\label{eq:frontier}
\alpha_A^\star(m)=\inf\{\alpha\in(0,1):\ \mathcal N_A(\alpha)\le m\}.
\end{equation}
For $\alpha<\alpha_A^\star(m)$ no valid predictor on $A$ attains mean size at most $m$. Since $\alpha\mapsto\mathcal N_A(\alpha)$ is non-increasing, $\alpha_A^\star$ is well defined and non-increasing in $m$. The map $m\mapsto\alpha_A^\star(m)$ supports selection of a review level compatible with the available information, and it is what the empirical frontier of Section~\ref{sec:results} estimates.
\end{corollary}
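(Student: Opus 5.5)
The plan is to derive Corollary~\ref{cor:frontier} from two properties of $\mathcal N_A$: it is monotone in $\alpha$, and it is a lower bound by Theorem~\ref{thm:levelset}. No new optimization argument is needed.

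\emph{Monotonicity.} If $\alpha\le\alpha'$, every predictor feasible in \eqref{eq:Ndef} at level $\alpha$ is also feasible at level $\alpha'$, because coverage at least $1-\alpha$ implies coverage at least $1-\alpha'$. The infimum over a larger set can only be smaller, so $\mathcal N_A(\alpha')\le\mathcal N_A(\alpha)$.

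\emph{Well-definedness.} Set $S_m=\{\alpha\in(0,1):\mathcal N_A(\alpha)\le m\}$. By monotonicity, $S_m$ is an up-set in $(0,1)$, i.e.\ an interval of the form $(\alpha^\star,1)$ or $[\alpha^\star,1)$. It is nonempty: the predictor that keeps each label independently with probability $1-\alpha$ (using the coordinates of $U$) is feasible, so Corollary~\ref{cor:vacuity} gives $\mathcal N_A(\alpha)\le(1-\alpha)K$. This is at most $m$ once $\alpha\ge 1-m/K$, and $1-m/K\in(0,1)$ because $1\le m\le K-1$. Hence $S_m\neq\emptyset$, and $\alpha_A^\star(m)=\inf S_m\in[0,1-m/K]$ is a well-defined real number.

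\emph{Non-increasing in $m$.} If $m\le m'$, then $\mathcal N_A(\alpha)\le m$ implies $\mathcal N_A(\alpha)\le m'$, so $S_m\subseteq S_{m'}$. Taking infima gives $\alpha_A^\star(m')\le\alpha_A^\star(m)$.

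\emph{Main claim.} Fix $\alpha<\alpha_A^\star(m)$. Then $\alpha\notin S_m$, since $\alpha$ lies below the infimum, so $\mathcal N_A(\alpha)>m$. Now let $C$ be any possibly randomized predictor satisfying \eqref{eq:Acov} at level $\alpha$ on $A$. By \eqref{eq:floor} of Theorem~\ref{thm:levelset},
\[
\E\bigl[\,|C(X,U)|\bigm| X\in A\,\bigr]\ \ge\ \mathcal N_A(\alpha)\ >\ m .
\]
So no valid predictor on $A$ attains mean size at most $m$. Deterministic interval predictors are covered by Corollary~\ref{cor:levelsetint}.

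\emph{Boundary case.} The one point needing care is $\alpha=\alpha_A^\star(m)$ itself, where the infimum may or may not be attained. The statement only claims something for $\alpha<\alpha_A^\star(m)$, so this case never has to be resolved, and I would not assert any continuity of $\mathcal N_A$.

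The final sentence of the corollary, about supporting the choice of a review level and being estimated empirically in Section~\ref{sec:results}, is interpretive. I would state it without proof.
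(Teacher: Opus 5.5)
Your proof is correct and follows the paper's own in-line reasoning: the non-existence claim is the floor of Theorem~\ref{thm:levelset} applied at an $\alpha$ below the infimum, where $\mathcal N_A(\alpha)>m$ by definition, and monotonicity of $\mathcal N_A$ makes the sublevel set an up-set so that $\alpha_A^\star(m)$ is a genuine threshold. Your nonemptiness step, which uses $\mathcal N_A(\alpha)\le(1-\alpha)K$ at $\alpha=1-m/K$, fills in a point the paper's one-line justification attributes to monotonicity alone, and your set-inclusion argument for monotonicity in $m$ is cleaner than the paper's phrasing.
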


\subsection{The floor cannot be certified from below}
\label{sec:noncert}

Theorem~\ref{thm:levelset} localizes informativeness in a single functional of the law, and Remark~\ref{rem:levelsetscope} observed that valid predictors bound $\mathcal N_A(\alpha)$ only from above. The result below gives a pointwise obstruction at one diffuse law for any uniformly valid lower-certificate procedure. It does not rule out informative lower output at every other law.

Let $\mathcal{P}$ denote the set of joint laws of $(X,Z)$ on $\X\times\Y$ whose covariate marginal is non-atomic and gives the fixed stratum $A$ positive probability, and let $D_n=((X_1,Z_1),\dots,(X_n,Z_n))$ be i.i.d.\ from $P\in\mathcal{P}$.

\begin{definition}[Lower certificate]\label{def:lowercert}
A \emph{$(1-\zeta)$ lower certificate} for the floor is a measurable map $\hat{L}_n:(\X\times\Y)^n\to[0,K]$ with
\begin{equation}\label{eq:lowercert}
\Prob_{P}\bigl(\hat{L}_n(D_n)\le\mathcal N_A(\alpha;P)\bigr)\ \ge\ 1-\zeta
\qquad\text{for every }P\in\mathcal{P} .
\end{equation}
It is distribution-free, so no $P$ in the class may be excluded, which is precisely the standard the rest of this paper holds itself to.
\end{definition}

\begin{theorem}[No uniform lower certificate for the oracle size benchmark; \tagspecialized{}; indistinguishability argument, \citealp{barber2020}]\label{thm:noncert}
Fix $K\ge 2$, $\alpha\in(0,1)$, $n\in\mathbb{N}$, $\zeta\in(0,1)$ and $\varepsilon>0$. Let $\hat{L}_n$ be any $(1-\zeta)$ lower certificate in the sense of Definition~\ref{def:lowercert}, and let $P_0\in\mathcal{P}$ be the law with $Z\perp X$ and $Z$ uniform on $\Y$, for which $\mathcal N_A(\alpha;P_0)=(1-\alpha)K$. Then
\begin{equation}\label{eq:noncert}
\Prob_{P_0}\bigl(\hat{L}_n(D_n)\le 1-\alpha\bigr)\ \ge\ 1-\zeta-\varepsilon .
\end{equation}
On the stated law whose floor is $(1-\alpha)K$, every uniformly valid lower certificate therefore returns a bound no larger than $1-\alpha$ with probability at least $1-\zeta-\varepsilon$. The certified value is then at most one $K$th of the true floor. The theorem does not claim that the same procedure must be uninformative at every $P\in\mathcal P$.
\end{theorem}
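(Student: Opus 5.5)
The approach is the indistinguishability argument of \citet{barber2020}. I will construct a law $P_1\in\mathcal P$ whose floor is at most $1-\alpha$, yet whose $n$-sample law is within total variation $\varepsilon$ of that under $P_0$. Validity of $\hat L_n$ at $P_1$ then transfers to $P_0$ up to $\varepsilon$. First I confirm $\mathcal N_A(\alpha;P_0)=(1-\alpha)K$ by Theorem~\ref{thm:levelset}. Under $P_0$ we have $p(z\mid x)\equiv 1/K$, so the level-set rule has one boundary atom at $t=1/K$. The randomized optimum includes each label with probability $1-\alpha$, giving mean size $(1-\alpha)K$. The constraint in \eqref{eq:lp} reads $\E[\sum_z u/K]\ge 1-\alpha$, so every feasible $u$ costs at least $(1-\alpha)K$.

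For $P_1$, I use the non-atomicity of $P_X$. Take a fine measurable partition of $\X$ into $M$ cells of equal $P_X$-mass, which non-atomicity permits. On each cell assign a deterministic label $f(x)$, so that $f$ takes each value in $\Y$ with marginal frequency $1/K$ overall and also within $A$. Set $Z=f(X)$ under $P_1$, with $X$ marginal unchanged. By Remark~\ref{rem:boundary}, $\mathcal N_A(\alpha;P_1)=1-\alpha$. It is attained by returning $\{f(X)\}$ with probability $1-\alpha$, and no valid predictor does better since \eqref{eq:lp} has constraint $\E[u(X,f(X))]\ge1-\alpha$. So $P_1\in\mathcal P$ and validity gives $\Prob_{P_1}(\hat L_n\le 1-\alpha)\ge1-\zeta$.

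The main step is bounding $\dtv(P_0^n,P_1^n)$. With finitely many cells this distance does not vanish, since a repeated $X$-cell with disagreeing labels is impossible under $P_1$. So I randomize $f$. I draw the cell labels i.i.d.\ uniform on $\Y$ (or uniformly from balanced assignments) and consider the mixture $\bar P_1^n=\E_f[P_{1,f}^n]$. Conditionally on all $n$ covariates falling in distinct cells, the labels under the mixture are i.i.d.\ uniform and independent of $X$, exactly as under $P_0^n$. Hence $\dtv(P_0^n,\bar P_1^n)\le\Prob(\text{some two of }X_1,\dots,X_n\text{ share a cell})\le\binom n2/M$, and this is at most $\varepsilon$ for large $M$.

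Because every $P_{1,f}$ has floor $1-\alpha$ (using the balanced-assignment version so that the floor computation holds for each $f$), averaging the validity statements gives $\bar P_1^n(\hat L_n\le1-\alpha)\ge1-\zeta$. Transferring through the TV bound yields \eqref{eq:noncert}. The obstacle I expect is making each $P_{1,f}$ genuinely lie in $\mathcal P$ with floor exactly $1-\alpha$ on $A$. This requires the partition to refine $A$ and $A^c$ separately, and the randomization to be over label assignments that are balanced within $A$. A random permutation of a balanced labeling of the cells inside $A$ (and separately outside $A$) keeps the collision bound valid. Conditional on distinct cells, however, the labels are then only approximately i.i.d.\ uniform, by sampling without replacement. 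That extra discrepancy is absorbed into $\varepsilon$ by taking $M$ large.
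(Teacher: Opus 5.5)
Your proposal is correct and follows the paper's proof essentially step for step: an equal-mass partition into $m$ cells, deterministic labelings $f_v$ with $V$ uniform on $\Y^m$, floor $1-\alpha$ for each $Q_v$, averaging the certificate's validity over $V$, and a birthday bound on the total-variation distance between the mixture and $P_0^{\otimes n}$. The balanced-assignment detour in your last paragraph is unnecessary. Pointwise $\ind\{Z\in C(X,U)\}\le|C(X,U)|$, so $\mathcal N_A(\alpha;Q_v)\ge 1-\alpha$ for every labeling, balanced or not, and returning $\{f_v(X)\}$ on a set $H\subseteq A$ of conditional mass $1-\alpha$ attains it. The i.i.d.\ uniform labels you first proposed therefore work directly, with no sampling-without-replacement correction and no need for the partition to refine $A$.
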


\begin{proof}
Because $P_{0,X}$ is non-atomic, for any $m$ there is a partition $\X=\bigsqcup_{j=1}^{m}B_j$ with $P_{0,X}(B_j)=1/m$ for every $j$. For $v=(v_1,\dots,v_m)\in\Y^m$ let $f_v(x)=v_j$ for $x\in B_j$, a measurable function, and let $Q_v\in\mathcal{P}$ be the law with the same covariate marginal $P_{0,X}$ and $Z=f_v(X)$ almost surely. Non-atomicity also gives a measurable set $H\subseteq A$ with $P_{0,X}(H\mid X\in A)=1-\alpha$. Under $Q_v$, the predictor $C(x)=\{f_v(x)\}$ for $x\in H$ and $C(x)=\varnothing$ otherwise has conditional coverage and expected size $1-\alpha$ on $A$. Conversely, coverage is no larger than expected set size, so $\mathcal N_A(\alpha;Q_v)=1-\alpha$. Applying \eqref{eq:lowercert} to each $Q_v$ and averaging over $V$ uniform on $\Y^m$ gives
\begin{equation}\label{eq:mixcert}
\E_V\Bigl[\Prob_{Q_V}\bigl(\hat{L}_n(D_n)\le 1-\alpha\bigr)\Bigr]\ \ge\ 1-\zeta .
\end{equation}
Let $\bar Q_m$ be the $V$-mixture of $Q_V^{\otimes n}$, which is the law of $D_n$ when $V$ is drawn once and $n$ points are then drawn from $Q_V$. Let $E$ be the event that the $n$ sampled covariates fall in $n$ distinct cells. On $E$ the labels $Z_i=V_{j(i)}$ involve $n$ distinct coordinates of $V$, which are i.i.d.\ uniform on $\Y$ and independent of the covariates; the covariates themselves have marginal $P_{0,X}$ under both laws. Hence $\bar Q_m$ and $P_0^{\otimes n}$ agree on $E$, and by a birthday bound $\Prob(E^c)\le n^2/(2m)$ under either, so
\begin{equation}\label{eq:tvbound}
\dtv\bigl(\bar Q_m,\ P_0^{\otimes n}\bigr)\ \le\ \frac{n^2}{2m}.
\end{equation}
Choosing $m\ge n^2/(2\varepsilon)$ and combining \eqref{eq:mixcert} with \eqref{eq:tvbound} gives \eqref{eq:noncert}. Finally $\mathcal N_A(\alpha;P_0)=(1-\alpha)K$, because under $P_0$ the conditional law is uniform for every $x$, so covering reported-label mass $1-\alpha$ costs expected size $(1-\alpha)K$, attained by including each label with probability $1-\alpha$ and unimprovable by any rule reaching that mass.
\end{proof}

\begin{remark}[Attribution]\label{rem:noncertattrib}
The mechanism underlying Theorem~\ref{thm:noncert} is not ours, and its provenance is worth setting out exactly. \citet{barber2020} proves that any distribution-free confidence interval for the conditional label probability in binary regression obeys a length lower bound that is independent of the sample size and holds for every distribution without point masses, so that no distribution-free procedure can adapt to structure in the law. That is the same obstruction, under the same non-atomicity condition, established by the same style of indistinguishability argument, and Theorem~\ref{thm:noncert} transports that argument to a different target. What is ours is the target and its consequence, since the estimand here is the level-set functional $\mathcal N_A(\alpha)$, a property of the whole conditional law on a stratum rather than of a single conditional probability, and the consequence is that the floor of Theorem~\ref{thm:levelset} is not certifiable from below, which is what closes the account opened in Remark~\ref{rem:levelsetscope}. The technique follows \citet{barber2020} and \citet{foygelbarber2021}; the statement for the level-set floor is the part this paper needs.
\end{remark}

\begin{remark}[What is and is not being claimed]\label{rem:noncertscope}
The theorem does not say the floor is unknowable in principle. It says that a lower-certificate procedure that is valid uniformly over $\mathcal P$ must be nearly trivial at $P_0$. The construction uses non-atomicity to make a diffuse conditional law close in finite samples to a mixture of deterministic labelings. Assumptions that tie $p(\cdot\mid x)$ across nearby $x$, such as smoothness or a parametric form, evade the construction and permit estimation. Informative output at a restricted law is therefore possible, but it is not a consequence of the distribution-free theorem.
\end{remark}

\begin{remark}[The empirical face of the obstruction]\label{rem:noncertempirical}
The Texas records are close to unique on the recorded feature vector. In the full 2017--2025 extract, the $54{,}872$ motorcyclist records carry $54{,}868$ distinct patterns across the $38$ recorded fields, with four patterns occurring twice and none more often. This pattern supports the practical relevance of sparse repeats. It does not establish that the population covariate law is non-atomic. Coarsening the covariates restores repeats but changes the estimand to a smaller $\sigma$-field.
\end{remark}

\subsection{Coverage on true severity under reporting noise}
\label{sec:noise}

Calibration uses reported labels $\yt$. Coverage for $Y$ is not identified from CRIS alone. We therefore state it only as a sensitivity result under a declared compatibility relation and declared error allowance.

\begin{assumption}[N($\Tmap,\delta$). Compatibility]\label{ass:noise}
There is a set-valued map $\Tmap:\Y\to 2^{\Y}$ (``plausible truths for a report'') satisfying $\tilde y\in\Tmap(\tilde y)$ for every $\tilde y\in\Y$, and $\delta\in[0,1)$ with
\begin{equation}\label{eq:band}
\Prob\bigl(Y_{n+1}\in\Tmap(\yt_{n+1})\bigr)\ \ge\ 1-\delta.
\end{equation}
Reflexivity is part of the declared relation. It gives $\Ct\subseteq\Cop$, which Corollary~\ref{cor:fatal} uses, but it is not estimated from the Texas data.
\emph{Banded special case} N($b^-,b^+,\delta$):
\begin{equation}\label{eq:bandmap}
\Tmap(\tilde y)=\{y\in\Y:\ \tilde y-b^+\le y\le\tilde y+b^-\}.
\end{equation}
Here $b^+$ guards \emph{over}-reporting (medically referenced severity below the report; the set reaches downward) and $b^-$ guards \emph{under}-reporting (medically referenced severity above the report; the set reaches upward). Under-reporting is the safety-critical direction. The candidate KABCO analysis sets $b^-=b^+=1$ and treats $\delta$ as a declared sensitivity value. External KABCO--MAIS studies inform the candidate maps and sensitivity range, but they do not estimate a Texas reporting law. Remark~\ref{rem:catdep} gives category-dependent alternatives.
\end{assumption}

The assumption concerns only the test point's joint law $(Y_{n+1},\yt_{n+1})$; calibration labels are never de-noised because calibration only ever touches $\yt$.

\begin{definition}[Compatibility expansion]\label{def:expansion}
For an interval $\Ct(x)=[\tilde a(x),\tilde b(x)]$ produced by any procedure of Sections~\ref{sec:score}--\ref{sec:het}, let
\begin{equation}\label{eq:expansion}
\Cop(x)=\bigcup_{\tilde y\in\Ct(x)}\Tmap(\tilde y).
\end{equation}
In the banded case
\begin{equation}\label{eq:bandexp}
\Cop(x)=\bigl[\tilde a(x)-b^+,\ \tilde b(x)+b^-\bigr]\cap\Y,
\end{equation}
again an interval.
\end{definition}

\begin{theorem}[Coverage transfer under a declared compatibility map; \tagspecialized{}; \citealp{cauchois2022,stutz2023}]\label{thm:noise}
Let $\Ct$ be any set-valued predictor with $\Prob(\yt_{n+1}\in\Ct(X_{n+1}))\ge 1-\alpha$ (Proposition~\ref{prop:marginal}), or $\ge 1-\alpha$ conditionally on a class or stratum (Theorem~\ref{thm:het}) with N($\Tmap,\delta$) holding conditionally. Then under N($\Tmap,\delta$),
\begin{equation}\label{eq:noise}
\Prob\bigl(Y_{n+1}\in\Cop(X_{n+1})\bigr)\ \ge\ 1-\alpha-\delta.
\end{equation}
\end{theorem}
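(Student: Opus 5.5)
The plan is a direct event-inclusion argument followed by a union bound; no rank argument beyond the one already used for $\Ct$ is needed. Write $E_1=\{\yt_{n+1}\in\Ct(X_{n+1})\}$ and $E_2=\{Y_{n+1}\in\Tmap(\yt_{n+1})\}$. The first step is the deterministic inclusion
\[
E_1\cap E_2\ \subseteq\ \{Y_{n+1}\in\Cop(X_{n+1})\}.
\]
This holds because on $E_1$ the reported label $\yt_{n+1}$ is one of the indices in the union \eqref{eq:expansion}. Hence $\Tmap(\yt_{n+1})\subseteq\Cop(X_{n+1})$, and on $E_2$ the true label lies in that band. Neither contiguity nor the banded form \eqref{eq:bandmap} is used, so the inclusion covers general sets $\Ct$ as well as intervals.

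The second step takes probabilities:
\[
\Prob\bigl(Y_{n+1}\in\Cop(X_{n+1})\bigr)\ \ge\ \Prob(E_1\cap E_2)\ \ge\ 1-\Prob(E_1^c)-\Prob(E_2^c)\ \ge\ 1-\alpha-\delta .
\]
Here $\Prob(E_1^c)\le\alpha$ comes from Proposition~\ref{prop:marginal}, or from whichever reported-label guarantee $\Ct$ carries, and $\Prob(E_2^c)\le\delta$ is \eqref{eq:band}. Both probabilities are taken over the joint draw of calibration data and the test point. This is legitimate because \eqref{eq:band} concerns only the test pair, while $\Ct$ is calibrated on reported labels, so no dependence between the calibration sample and $Y_{n+1}$ needs to be controlled. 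A Fr\'echet/union bound requires no independence between $E_1$ and $E_2$.

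The conditional version repeats the argument with every probability conditioned on $\{\hat c(X_{n+1})=c\}$, or on the stratum event. It uses Theorem~\ref{thm:het} for $E_1$ and the hypothesised conditional form of N($\Tmap,\delta$) for $E_2$, and the inclusion is unchanged since it is pointwise. For the deployed set $C^{\mathrm{ne}}$, the result follows from $C_\lambda\subseteq C^{\mathrm{ne}}_\lambda$: the expansion of a larger set is larger, so the bound transfers.

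The main obstacle is bookkeeping rather than technique. The point needing care is that conditioning in the stratified statement must use the same event for both $E_1$ and $E_2$. If N held only marginally, the union bound would mix a conditional and an unconditional probability and fail. I will therefore state explicitly that conditional validity needs the declared band to hold conditionally on the same cell.
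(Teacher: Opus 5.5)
Your proposal is correct and follows the paper's proof essentially verbatim: the paper intersects the reported-label coverage event with the band event, notes that $Y_{n+1}$ then lies in $\Tmap(\tilde y)$ for some $\tilde y\in\Ct(X_{n+1})$ and hence in $\Cop(X_{n+1})$, applies the union bound to get $1-\alpha-\delta$, and obtains the conditional version by conditioning every probability on the same event. Your remarks that contiguity is not used and that the band must hold conditionally on the same cell agree with Remark~\ref{rem:contigrole} and with the theorem's hypothesis.
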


\begin{proof}
On $A=\{\yt_{n+1}\in\Ct(X_{n+1})\}\cap\{Y_{n+1}\in\Tmap(\yt_{n+1})\}$ we have $Y_{n+1}\in\Tmap(\tilde y)$ for some $\tilde y\in\Ct(X_{n+1})$, hence $Y_{n+1}\in\Cop(X_{n+1})$. And $\Prob(A)\ge\Prob(\yt_{n+1}\in\Ct)-\Prob(Y_{n+1}\notin\Tmap(\yt_{n+1}))\ge 1-\alpha-\delta$. The conditional version is identical with all probabilities conditional.
\end{proof}

The proof is short because the work is done by the \emph{banded} compatibility map, which is crash-reporting structure. The non-vacuity comes from the declared support restriction $\Tmap(\tilde y)\subsetneq\Y$; ordinal structure is what makes a narrow, contiguous restriction of this kind plausible for KABCO. Transferring conformal coverage from a noisy or weakly supervised label to the clean one is not itself new \citep{cauchois2022,stutz2023,einbinder2024}; what the ordinal band contributes is that the transfer costs an \emph{additive} $b^-+b^+$ categories rather than degrading multiplicatively or by the full error rate, a distinction made precise against the total-variation baseline in Proposition~\ref{prop:tightness} and Remark~\ref{rem:einbinder}.

\begin{remark}[What contiguity does and does not buy]\label{rem:contigrole}
Theorem~\ref{thm:noise} holds for \emph{any} set-valued $\Ct$; its proof never uses contiguity, and $\Cop=\bigcup_{\tilde y\in\Ct}\Tmap(\tilde y)$ is defined for arbitrary $\Ct$. Ordinality makes $\Tmap(\tilde y)$ a \emph{proper} subset of $\Y$. Contiguity supplies separate gains in cost and interpretation. The set $\Cop$ remains an interval, the expansion cost is additive, and its two endpoints support a prespecified decision rule. Contiguity alone does not make the interval an upper tail.
\end{remark}

\begin{theorem}[Sharpness of the compatibility-map bound; \tagderived{}]\label{thm:sharp}
Fix the banded map $\Tmap(\tilde y)=[\tilde y-b^+,\tilde y+b^-]$.
\textbf{(i) Tightness of the constant.} For every $\alpha\in(0,1)$, $\delta\in[0,\alpha\wedge(1-\alpha))$, $b^\pm$ with $K\ge b^-+2$, and every $\eta>0$, there exists a joint law of $(X,Y,\yt)$ satisfying A1 and N($b^-,b^+,\delta$) under which
\begin{equation}\label{eq:sharplimit}
\lim_{n\to\infty}\Prob\bigl(Y_{n+1}\in\Cop\bigr)\ \le\ 1-\alpha-\delta+\eta;
\end{equation}
hence no constant larger than $1-\alpha-\delta$ can replace the bound of Theorem~\ref{thm:noise}.

\textbf{(ii) Necessity of both expansion widths.} Fix $\alpha\in(0,1/2)$. For the upper-trim statement assume $1\le b^-\le K-1$. For the lower-trim statement assume $1\le b^+\le K-1$. These conditions make the trimmed rules well defined. For the rule that expands upward only by $b^--1$ and outputs $[\tilde a-b^+,\ \tilde b+b^--1]$, there is a law satisfying N($b^-,b^+,0$) with asymptotic true-label coverage $\le\alpha<1-\alpha$. Symmetrically, for the rule that expands downward only by $b^+-1$, there is a law satisfying N($b^-,b^+,0$) with the same result. Thus, both expansion widths are necessary for the guarantee.
\end{theorem}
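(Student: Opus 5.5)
Both parts are proved by explicit constructions, using laws in which $X$ carries no information about $\yt$. Then every conformal interval of Sections~\ref{sec:score}--\ref{sec:het} reduces asymptotically to a fixed deterministic interval, and the true-label coverage can be computed exactly.

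\textbf{Part (i).} Take $X$ independent of $(Y,\yt)$, so $\hat F(\cdot\mid x)$ can be taken constant (or any fixed fitted CDF). The score $s(X,\yt)$ is then a function of $\yt$ alone. Choose a law of $\yt$ on $\{1,2\}$ with $\Prob(\yt=1)=1-\alpha+\eta'$, for a small $\eta'$ tied to $\eta$, and with $\hat F$ chosen so that $s(\cdot,1)<s(\cdot,2)$. As $n\to\infty$ the empirical quantile $\hat q$ converges to $s(\cdot,1)$. Hence $\Ct=\{1\}$ with probability tending to one, and its reported-label coverage is $1-\alpha+\eta'$. Tie-breaking jitter can be used to make this exact, but only the limit is needed.

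Next place the noise. On $\{\yt=1\}$, set $Y=1$ except on a subevent of mass $\delta$ where $Y=1+b^-+1=b^-+2\le K$. This $Y$ lies outside both $\Tmap(1)$ and $\Cop=[1-b^+,1+b^-]\cap\Y$. The condition $\delta<1-\alpha$ (shrinking $\eta'$ if needed) makes this feasible. On $\{\yt=2\}$, set $Y=2+b^-$ when that is in $\Y$, or otherwise any value outside $\Cop$ but inside $\Tmap(2)$. If no such value exists, put $Y=2$, which adds at most $\eta'$ to the coverage. This law satisfies N$(b^-,b^+,\delta)$, and the limiting true-label coverage is at most $1-\alpha+\eta'-\delta$. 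Setting $\eta'=\eta$ gives \eqref{eq:sharplimit}. The main obstacle is the edge case where $\Tmap(2)\subseteq\Cop$. The choice of $\eta'$ absorbs this slack, which is why the statement carries $+\eta$.

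\textbf{Part (ii).} For the upper trim, let $\yt\equiv 1$ deterministically with $X$ independent, so that $\Ct=\{1\}$ eventually. Set $Y=1+b^-$ almost surely. Then $Y\in\Tmap(1)$, so N$(b^-,b^+,0)$ holds. But the trimmed output $[1-b^+,\,b^-]$ excludes $Y$, giving coverage $0\le\alpha$.

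If one prefers a nondegenerate score distribution, mix in a small mass on $\yt=2$ so that $\hat q$ still selects $\{1\}$ with probability tending to one. This requires $\alpha<1/2$ and keeps the excluded mass at least $1-\alpha$.

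The lower trim is symmetric: take $\yt\equiv K$ and $Y=K-b^+$. The assumptions $1\le b^\pm\le K-1$ ensure the target $Y$ lies in $\Y$. Both laws satisfy A1 because they are i.i.d.
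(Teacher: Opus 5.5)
Your constructions are correct and follow the paper's proof. Both use an uninformative $X$ and an atom of mass $1-\alpha+\varepsilon$ on $\yt=1$ with $s(\cdot,1)<s(\cdot,2)$, so that $\Ct=\{1\}$ with probability tending to one. Both send a $\delta$-subevent to $Y=b^-+2$ and place the residual reported mass so that $Y\notin[1,1+b^-]$: the paper uses $\yt=Y=K$, and you use $\yt=2$, $Y=2+b^-$. For the two trims, both anchor degenerately at $1$ and $K$ with maximal in-band misreporting.

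The one flaw is your ``main obstacle.'' Because $K\ge b^-+2$, the value $2+b^-$ always lies in $\Tmap(2)\setminus\Cop$, so the edge case $\Tmap(2)\subseteq\Cop$ never arises. Your fallback for it is also mis-accounted: $\{\yt=2\}$ has mass $\alpha-\eta'$, not $\eta'$, so setting $Y=2$ there could add far more than $\eta'$ to the coverage. The slack $\eta$ enters only because the atom at $s(\cdot,1)$ must carry mass strictly above $1-\alpha$ for $\hat q$ to settle on it. That same requirement means you need $\eta'=\min\{\eta,\alpha\}$ rather than $\eta'=\eta$, so that $\Prob(\yt=1)\le 1$.
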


\begin{proposition}[Structure buys tightness; \tagadapted{}; (i) cf.\ \citealp{einbinder2024}, Cor.~3]\label{prop:tightness}
Suppose only $\Prob(\yt\ne Y)\le\varepsilon_{\mathrm{tot}}$ is known, without a band. The following statements hold.
(i) the best transfer without expansion is
\begin{equation}\label{eq:notransfer}
\Prob(Y\in\Ct)\ \ge\ 1-\alpha-\varepsilon_{\mathrm{tot}},
\end{equation}
and no label-space expansion whose output $\bigcup_{\tilde y\in\Ct}D(\tilde y)$ remains a proper subset of $\Y$ improves the constant, since without support restrictions an adversarial coupling puts the off-event mass outside that union;
(ii) under the banded assumption, Theorem~\ref{thm:noise} attains $1-\alpha-\delta$ at the cost of at most $b^-+b^+$ added categories. In the KABCO regime elicited from linkage studies ($\varepsilon_{\mathrm{tot}}$ of order $0.3$--$0.5$, dominated by adjacent A/B/C confusion; beyond-band mass $\delta$ of order $0.01$--$0.05$), the guarantee improves from $1-\alpha-\varepsilon_{\mathrm{tot}}$ (weak, and at the upper end near-uninformative at $\alpha=0.1$) to $1-\alpha-\delta$ (near-nominal).
\end{proposition}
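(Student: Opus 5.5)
For part (i), the lower bound is the same union-bound argument as in the proof of Theorem~\ref{thm:noise}, now with the identity map $D(\tilde y)=\{\tilde y\}$. The event $\{\yt\in\Ct\}\cap\{Y=\yt\}$ is contained in $\{Y\in\Ct\}$. Its probability is at least $\Prob(\yt\in\Ct)-\Prob(Y\ne\yt)\ge 1-\alpha-\varepsilon_{\mathrm{tot}}$, which gives \eqref{eq:notransfer}.

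For the optimality clause of (i), I will take an arbitrary expansion map $D$ whose union $\Cop_D=\bigcup_{\tilde y\in\Ct}D(\tilde y)$ is a proper subset of $\Y$. I will then build an adversarial coupling in the style of \citet{einbinder2024}, Cor.~3. Start from any law of $(X,\yt)$ under which $\Ct$ has reported coverage exactly $1-\alpha$ in the limit. One choice is a continuous-score law with a constant $\hat F$, or the same two-point construction used for Theorem~\ref{thm:sharp}(i). On a subset of the coverage event of conditional mass $\varepsilon_{\mathrm{tot}}$, set $Y$ equal to a label in $\Y\setminus\Cop_D(X)$, which is nonempty by properness. Elsewhere set $Y=\yt$. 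This gives $\Prob(Y\ne\yt)=\varepsilon_{\mathrm{tot}}$. On the miscoverage event, $\yt\notin\Ct$, and in the worst case $Y=\yt$ is either outside $\Cop_D$ or can be moved outside at no cost to the error budget. The result is true-label coverage of at most $1-\alpha-\varepsilon_{\mathrm{tot}}$, up to the $1/(n+1)$ slack, which vanishes as $n\to\infty$.

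The main obstacle is that $\Cop_D(X)$ depends on $X$ through the calibrated set. I need the off-band relabelling to be a measurable function of $(X,\yt)$ and independent of the calibration sample. Defining the relabelling through the population limit $\Ct_\infty(X)$, that is, through the limiting threshold, handles this, and continuity of the score makes the finite-$n$ and limit sets agree with probability tending to one. I will also check that relabelling $Y$ does not disturb A1, since only $(X,\yt)$ enters calibration.

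For part (ii), the coverage half is Theorem~\ref{thm:noise} under N($b^-,b^+,\delta$), with no new argument. The width half follows from \eqref{eq:bandexp}: the interval $[\tilde a,\tilde b]$ gains at most $b^+$ categories below and $b^-$ above, with truncation at the ends of $\Y$ only reducing this. The numerical KABCO regime then comes from substituting the declared ranges. With $\alpha=0.1$ and $\varepsilon_{\mathrm{tot}}\in[0.3,0.5]$, the bound $1-\alpha-\varepsilon_{\mathrm{tot}}$ falls between $0.4$ and $0.6$. With $\delta\in[0.01,0.05]$, the bound $1-\alpha-\delta$ falls between $0.85$ and $0.89$.
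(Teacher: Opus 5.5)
Your lower bound in (i) and all of (ii) match the paper's own short proof. That proof is the union bound of Theorem~\ref{thm:noise} with the identity map on a $1-\varepsilon_{\mathrm{tot}}$ event, plus Theorem~\ref{thm:noise} with the endpoint arithmetic of \eqref{eq:bandexp}, with the KABCO magnitudes cited rather than proved. The problem is in your adversarial coupling for the optimality clause. You start from \emph{any} law of $(X,\yt)$ with reported coverage $1-\alpha$. You then claim that on the miscoverage event $Y=\yt$ ``is either outside $\Cop_D$ or can be moved outside at no cost to the error budget.'' The second alternative is false. Moving $Y$ off $\yt$ makes $Y\ne\yt$, so it draws on the same budget $\varepsilon_{\mathrm{tot}}$ you already spent on the coverage event. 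With your coupling the true-label coverage is $1-\alpha-\varepsilon_{\mathrm{tot}}+\Prob(\yt\in\Cop_D\setminus\Ct)$, and the last term need not vanish. For example, take the unit band with $K=5$, $\Ct=\{3\}$, and the residual reported mass on $\{2,4\}$. The expansion then recovers every reported miss, and coverage is about $1-\varepsilon_{\mathrm{tot}}$. So ``any law'' does not establish the clause, and the ``continuous-score law with a constant $\hat F$'' option does not exist as stated, since a constant $\hat F$ yields at most $K$ score values.

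The fix is to choose the law \emph{after} $D$, so that the reported misses already lie outside the expanded set. Let $X$ be degenerate. Give $\yt$ mass $1-\alpha+\epsilon$ on a label $m$ and mass $\alpha-\epsilon$ on some $k\in\Y\setminus D(m)$. This set is nonempty by properness, and $k\ne m$ because an expansion is reflexive. Take $\hat F$ with $s(x,m)<s(x,k)$, so that $\Ct=\{m\}$ with probability tending to one, as in the proof of Theorem~\ref{thm:sharp}(i). Then set $Y=k$ on an $\varepsilon_{\mathrm{tot}}$-mass subset of $\{\yt=m\}$. True-label coverage tends to $1-\alpha+\epsilon-\varepsilon_{\mathrm{tot}}$, and letting $\epsilon\downarrow 0$ gives the clause. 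Your fallback to the Theorem~\ref{thm:sharp}(i) construction is this argument for the banded map, because the residual at $K$ lies outside $[1,1+b^-]$; for a general $D$ the residual label must be chosen outside $D(m)$. The $+\epsilon$ is needed. With an atomic score and reported mass exactly $1-\alpha$, the empirical quantile jumps to the upper atom with non-vanishing probability, so ``exactly $1-\alpha$ in the limit'' is not available. This construction also removes your measurability concern, since $\Ct$ is eventually a fixed set.
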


\begin{remark}[Attribution of Proposition~\ref{prop:tightness}]\label{rem:einbinder}
Part (i) is known. \citet{einbinder2024} obtain the same
degradation from a purely structural premise, assuming only a total-variation bound
between the clean and noisy label laws and correcting the nominal level accordingly; up
to the constant, their Corollary~3 \emph{is} \eqref{eq:notransfer}. We state it here
because it is the baseline Theorem~\ref{thm:noise} must beat, and the comparison in (ii)
is the point, because their premise and ours are both structural, but theirs is indexed by the
total error rate $\varepsilon_{\mathrm{tot}}$ and ours by the beyond-band mass $\delta$.
In the KABCO regime those differ by an order of magnitude, which is what the band buys. Their total-variation premise is at most $\Prob(\yt\ne Y)$ and can be smaller, so comparing it with $\varepsilon_{\mathrm{tot}}$ uses an upper bound on their premise, and the size of the gap should be read in that light.
The adversarial-coupling clause in (i) is a short addition of ours.
\end{remark}

\begin{proof}
(i) is the union bound of Theorem~\ref{thm:noise} with $\Tmap(\tilde y)=\{\tilde y\}$ on a $1-\varepsilon_{\mathrm{tot}}$ event, plus the stated adversarial coupling. (ii) is Theorem~\ref{thm:noise} plus arithmetic; the KABCO magnitudes are cited (Section~\ref{sec:noise-example}), not proved.
\end{proof}

\subsubsection{Numeric grounding of the band}
\label{sec:noise-example}

KABCO--MAIS linkage studies provide external evidence about police-report disagreement with medically assessed severity. In the Wisconsin CODES linkage (2008--2012; $120{,}534$ linked crash victims), overall agreement between the police KABCO rating and the MAIS-based reference category is $51\%$, so $\varepsilon_{\mathrm{tot}}\approx 0.49$ \emph{in that linked sample}; victims rated B, C, or O whose reference severity is serious (MAIS~3+) constitute $2.9\%$ of all cases, and the reported per-category beyond-adjacent under-reporting proportions are $0.4\%$ (O), $1.7\%$ (C), and $2.0\%$ (B) \citep{burdett2015,burdett-thesis}. In the cited Wisconsin sample, all K-coded victims were mapped to MAIS~6; that sample proportion does not prove exact K recording in Texas or future data \citep{farmer2003}. Over-reporting from A reaches two categories down in the cited data ($38.2\%$ of A-coded victims at MAIS~1), which motivates a category-dependent candidate map. Linked samples over-represent medically attended victims, and MAIS is a distinct scale mapped to KABCO by a declared correspondence. Therefore every map and $\delta$ used here is a sensitivity input, not an estimated or validated Texas reporting channel.

\begin{remark}[Is the band minimal, or is the saturation self-inflicted?]\label{rem:bandminimal}
A band of total width $b^-+b^+=2$ saturates a five-point scale from any base interval of width $3$ that touches neither end of the scale (intervals that include O or K grow by less), so a reader is entitled to ask whether the uninformativeness reported in Section~\ref{sec:results} is a property of the data or of our own declaration. The two directions answer differently, and both answers are needed.

Upward, the band is a requirement rather than a convenience. Setting $b^-=0$ places every adjacent under-reporting event outside the band, and the linkage evidence puts essentially all of the $49\%$ disagreement there, so $\delta$ would rise to roughly that value and the floor $1-\alpha-\delta$ to roughly $0.41$. Against this, $b^-=1$ leaves only the beyond-adjacent mass, which weighted by the reported KABCO composition of the test split is $0.822(0.004)+0.099(0.017)+0.062(0.020)\approx 0.0062$. One category upward buys about two orders of magnitude in $\delta$, and no certificate worth issuing survives without it.

Downward, the cited evidence leaves the choice open and does not force $b^+=1$. The Wisconsin linkage quantifies over-reporting only for category A, at $38.2\%$ reaching two categories down, which contributes $0.014(0.382)\approx 0.0053$ of beyond-band mass under a constant unit band and brings the total to $\approx 0.0115$, comfortably inside the declared $\delta=0.02$. It reports no adjacent over-reporting rate for O, C, or B, so a deployment could defensibly declare $b^+=0$ and absorb the over-reporting into a slightly larger $\delta$.

That trim is the sharpest one the evidence permits, and it does not change the finding. With $b^-=1$ and $b^+=0$ the expansion adds one category rather than two, and the class-conditional base widths on the motorcyclist stratum, which lie between $3.55$ and $4.22$ across all seven base models of Section~\ref{sec:results}, compose to mean widths of at most $4.55$ to $5$, the upper values because truncation at the scale ends can only reduce them. Every one remains at or above width $4$ on a scale of five. The rural high-speed stratum depends on the base model. Under DLCON it sits at $1.95$ and composes to $2.95$, which is not vacuous under either band. Under histogram gradient boosting it already sits at $4.65$ before expansion (Table~\ref{tab:maincells}), and the composed sets in Table~\ref{tbl:compose} are almost all full scale. The band and the base sets therefore act together. On the motorcyclist and unrestrained strata the base sets are already wide for every model, and any unit expansion saturates them; on the rural high-speed stratum saturation depends on the base model. Section~\ref{sec:e8e9} reports the composition audit for both a wide and a narrow base model.
\end{remark}

\begin{remark}[Known-$\Pi$ alternatives]\label{rem:knownpi}
If a full misclassification kernel $\Pi(\tilde y\mid y)$ were known and stable, quantile-correction approaches \citep{einbinder2024,sesia2023,noiseadaptive2025} could yield smaller sets. This paper does not establish such stability, including stability of the support band, for Texas jurisdictions and years. It instead reports sensitivity results over declared candidate maps and values of $\delta$. The known-$\Pi$ route is complementary.
\end{remark}

\begin{remark}[Category-dependent bands and a declared K boundary]\label{rem:catdep}
Replace the constant band by a category-dependent declared map, for example $\Tmap(5)=\{5\}$ and $5\notin\Tmap(\tilde y)$ for $\tilde y\le 4$ in an analysis that assumes exact fatality recording after record amendment, and $\Tmap(4)\supseteq\{2,3,4\}$ when two-level over-reporting from A is admitted. Theorem~\ref{thm:noise} holds because its proof does not use a constant band. The smaller expansion at the K boundary is conditional on this declared map; it is not identified by CRIS.
\end{remark}

\subsection{An assumption-conditional floor under a declared reporting channel}
\label{sec:channelfloor}

Theorem~\ref{thm:noncert} gives a distribution-free obstruction. This subsection asks what follows if a reporting channel and a within-stratum nondifferentiality condition are declared. Under those inputs, a genie-aided converse gives a computable lower bound. The result is conditional on the declared admissible channel set; the Texas data do not identify the channel. The argument is related to list-size lower bounds in coding \citep{guruswami2010} and to partial-identification analysis \citep{molinari2008}. Oracle prediction sets under interval censoring \citep{liu2025interval} address a different problem.

Write $M(t\mid y)=\Prob(\yt=t\mid Y=y)$ for the channel taking medically referenced severity $Y$ to the reported label $\yt$. On a stratum $A\in\sigma(X)$ with $\Prob(X\in A)>0$, let $p_A(y)=\Prob(Y=y\mid X\in A)$ be the latent composition and $\tilde p_A(t)=\sum_y M(t\mid y)\,p_A(y)$ the observed reported composition. The theorem below assumes nondifferential reporting within the stratum, with $M(t\mid y,x)=M_A(t\mid y)$ for $x\in A$. This condition is weaker than global nondifferentiality, but it remains a declared assumption that the Texas records do not directly validate.

\begin{definition}[Channel floor]\label{def:channelfloor}
For inclusion weights $c(t\mid y)\in[0,1]$,
\begin{equation}\label{eq:channelfloor}
\mathcal N^{\ast}_A(\alpha)=\min_{c}\ \sum_y p_A(y)\sum_t c(t\mid y)
\quad\text{s.t.}\quad \sum_y p_A(y)\sum_t c(t\mid y)\,M_A(t\mid y)\ \ge\ 1-\alpha .
\end{equation}
This is a fractional knapsack, in which one orders the cells $(y,t)$ by $M_A(t\mid y)$ and includes them greedily until the coverage constraint binds, with one fractional boundary cell. It is the smallest expected set size for covering the reported label at level $1-\alpha$ by a predictor that knows the true label and may randomize; no contiguity is imposed. A value $\mathcal N^{\ast}_A(\alpha)<1$ uses rules that can return the empty set. If non-empty output is required, every feasible rule has expected size at least $\max\{1,\mathcal N^{\ast}_A(\alpha)\}$, but equality is not asserted. The deterministic fallback in Definition~\ref{conv:nonempty} changes the feasible predictor and must be analyzed as a deployed superset, not inserted into the fractional program.
\end{definition}

\begin{theorem}[Channel floor on expected set size; \tagadapted{}; level-set optimality \citep{sadinle2019} composed with a genie reduction under within-stratum nondifferential reporting]\label{thm:channel}
Under within-stratum nondifferential reporting, any $\sigma(X)$-measurable, possibly randomized set-valued predictor $C$ with reported-label coverage $\Prob(\yt\in C(X)\mid X\in A)\ge 1-\alpha$ satisfies
\begin{equation}\label{eq:channelbound}
\E\bigl[\,|C(X)|\ \bigm|\ X\in A\,\bigr]\ \ge\ \mathcal N^{\ast}_A(\alpha).
\end{equation}
$\mathcal N^{\ast}_A(\alpha)$ depends only on the declared stratum channel $M_A$ and latent composition $p_A$; no base model, feature set, or calibration scheme enters. Its numerical value is assumption-conditional when these inputs are not identified.
\end{theorem}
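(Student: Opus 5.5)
The plan is a genie reduction. We enlarge the class of admissible predictors so that the enlarged class contains every $\sigma(X)$-measurable rule. We then show that the optimization over the enlarged class is exactly the fractional knapsack \eqref{eq:channelfloor}. Since the infimum over a larger class can only be smaller, the bound \eqref{eq:channelbound} follows.

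\textbf{Reduction to inclusion weights.} Let $C$ be any $\sigma(X)$-measurable, possibly randomized predictor, with external randomization $U$ independent of $(X,Y,\yt)$. As in the proof of Theorem~\ref{thm:levelset}, define the conditional inclusion probability
\[
u(x,t)=\Prob_U\{t\in C(x,U)\}.
\]
Then
\[
\E\bigl[|C|\bigm| X\in A\bigr]=\E\Bigl[\textstyle\sum_t u(X,t)\Bigm| X\in A\Bigr]
\]
and
\[
\Prob(\yt\in C\mid X\in A)=\E\bigl[u(X,\yt)\bigm| X\in A\bigr].
\]

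\textbf{The genie step.} Condition on $Y$ and set
\[
c(t\mid y)=\E\bigl[u(X,t)\bigm| Y=y,\ X\in A\bigr]\in[0,1].
\]
The key use of within-stratum nondifferentiality is this. Given $(X,Y)$ with $X\in A$, the label $\yt$ has law $M_A(\cdot\mid Y)$, so $\yt$ is conditionally independent of $X$ given $Y$ on $A$. Hence
\[
\E\bigl[u(X,\yt)\bigm| X\in A\bigr]=\E\Bigl[\textstyle\sum_t u(X,t)\,M_A(t\mid Y)\Bigm| X\in A\Bigr]=\sum_y p_A(y)\sum_t c(t\mid y)\,M_A(t\mid y).
\]
In the same way, expected size equals $\sum_y p_A(y)\sum_t c(t\mid y)$ by the tower property. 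So every valid $C$ induces a feasible $c$ for \eqref{eq:channelfloor} with the same objective value, and therefore $\E[|C|\mid X\in A]\ge\mathcal N^\ast_A(\alpha)$.

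Intuitively, a predictor that sees $Y$ directly (the ``genie'') can do at least as well as one that sees only $X$. Averaging over $X$ given $Y$ is exactly what converts an $X$-rule into a $Y$-rule.

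\textbf{Main obstacle: measurability and conditional independence.} I expect the delicate step to be justifying the conditional-independence identity rigorously. Two things must be checked:
\begin{itemize}
\item randomization and measurability with respect to $\sigma(X,U)$ carry through conditioning on $Y$;
\item $M(t\mid y,x)=M_A(t\mid y)$ is exactly what allows $\E[u(X,\yt)\mid X,Y]=\sum_t u(X,t)M_A(t\mid Y)$ on $A$.
\end{itemize}
I would write $\Prob(\yt=t\mid X,Y)=M_A(t\mid Y)$ a.s.\ on $\{X\in A\}$ and apply Fubini over $U$.

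\textbf{Closing remarks.} Two short points remain.
\begin{itemize}
\item The greedy characterization of the minimizer is a standard fractional-knapsack/Neyman--Pearson argument, paralleling the Lagrangian in Theorem~\ref{thm:levelset}. It is not needed for the inequality itself, only for computing the floor.
\item The independence from base model, features and calibration is immediate, because the right-hand side involves only $(p_A,M_A)$.
\end{itemize}
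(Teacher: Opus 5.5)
Your proof is correct and follows essentially the paper's route. You reduce the randomized rule to inclusion probabilities, then average them over $X$ given $Y$ on $A$, using within-stratum nondifferentiality so that the induced $c(t\mid y)$ preserves both the coverage constraint and the expected size, and is therefore feasible for \eqref{eq:channelfloor}. The paper makes the intermediate enlargement to $\sigma(X,Y)$-measurable weights explicit and adds a remark on genie attainment; you collapse the two steps into one averaging, which is all the inequality needs.
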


\begin{proof}
Write $N(\mathcal F)$ for the value of \eqref{eq:channelfloor}'s minimization over $\mathcal F$-measurable inclusion probabilities with the same marginal coverage constraint. Any valid possibly randomized $C$ gives $c(t\mid x)=\Prob_U\{t\in C(x,U)\}$, so $\E[|C(X,U)|\mid A]\ge N(\sigma(X))$. Since $\sigma(X)\subseteq\sigma(X,Y)$, the larger feasible set gives $N(\sigma(X))\ge N(\sigma(X,Y))$. Under within-stratum nondifferentiality, the reported law given $(X,Y)=(x,y)$ is $M_A(\cdot\mid y)$. For any $\sigma(X,Y)$-measurable $c(t\mid x,y)$, form $\bar c(t\mid y)=\E[c(t\mid X,Y)\mid Y=y,\,X\in A]$. Linearity preserves the objective and constraint, so $N(\sigma(X,Y))=N(\sigma(Y))=\mathcal N^{\ast}_A(\alpha)$. This proves \eqref{eq:channelbound}. A genie that observes $Y$ and uses auxiliary uniforms to include label $t$ with probability $c^\star(t\mid Y)$ attains $\mathcal N^{\ast}_A(\alpha)$; an ordinary $X$-based predictor need not attain this lower bound.
\end{proof}

\begin{corollary}[Channel-imposed vacuity]\label{cor:channelvacuity}
$\mathcal N^{\ast}_A(\alpha)>1$ only if $\sum_y p_A(y)\max_t M_A(t\mid y)<1-\alpha$, that is, only if no single reported category covers the stratum's reported label at $1-\alpha$ even knowing the true label. The converse can fail, because a rule may spend width two on a peaked column and width zero on another, so that width-one coverage falls short yet the LP value is below one (the empty-set budget of Definition~\ref{def:channelfloor}). When the channel does force $\mathcal N^{\ast}_A(\alpha)>1$, the observed width on $A$ decomposes as
\begin{equation}\label{eq:channeldecomp}
\underbrace{\mathcal N^{\ast}_A(\alpha)}_{\text{channel floor}}\ \le\ \underbrace{\text{irreducible }\sigma(X)\text{-floor}}_{\le\,\text{observed width}} .
\end{equation}
The middle term is not point-identified. For each fixed admitted channel and composition, $\mathcal N^{\ast}_A(\alpha)$ is a valid lower bound under the theorem's assumptions. Across a declared sensitivity set, the smallest such value is an assumption-conditional lower endpoint. Wide sets can be attributed to the reporting channel only within that sensitivity analysis; the Texas data alone do not identify this mechanism.
\end{corollary}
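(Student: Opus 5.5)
The plan is to prove the contrapositive of the necessary condition, then exhibit a small counterexample for the converse, and finally justify the decomposition \eqref{eq:channeldecomp} by chaining Theorem~\ref{thm:channel} with Theorem~\ref{thm:levelset}.

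\emph{Necessity.} Suppose $\sum_y p_A(y)\max_t M_A(t\mid y)\ge 1-\alpha$. For each $y$, let $t^\star(y)\in\arg\max_t M_A(t\mid y)$ (ties broken by minimum index). The deterministic inclusion rule $c(t\mid y)=\ind\{t=t^\star(y)\}$ is feasible in \eqref{eq:channelfloor}: its coverage is exactly $\sum_y p_A(y)M_A(t^\star(y)\mid y)\ge 1-\alpha$, and its objective is $\sum_y p_A(y)\cdot 1=1$. Hence $\mathcal N^\ast_A(\alpha)\le 1$, which is the contrapositive of the stated implication. Nothing beyond feasibility in the fractional knapsack of Definition~\ref{def:channelfloor} is needed.

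\emph{Failure of the converse.} I would take $K=2$ and two latent types with $p_A=(1/2,1/2)$. Let $M_A(\cdot\mid 1)$ be a point mass on reported label $1$, and let $M_A(\cdot\mid 2)$ be uniform on $\{1,2\}$. Then $\sum_y p_A(y)\max_t M_A(t\mid y)=1/2+1/4=3/4$, so width-one coverage at $1-\alpha=0.9$ fails. The greedy knapsack solution proceeds as follows:
\begin{itemize}
\item[(a)] It first includes cell $(1,1)$, which costs $1/2$ and contributes coverage $1/2$.
\item[(b)] It then uses the cells $(2,1)$ and $(2,2)$, each costing $1/2\cdot c$ and contributing $1/4\cdot c$ in coverage. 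The remaining coverage $0.4$ therefore costs $0.8$.
\end{itemize}
This gives $\mathcal N^\ast_A=1.3>1$, so this example does not yet refute the converse. I would therefore adjust it so that the peaked column gets width $0$ or $2$: make column $1$ a point mass with $p_A(1)$ large, and give the other column a diffuse $M$ but small weight. For instance, take $p_A=(0.9,0.1)$, with column $2$ uniform over $K=5$ labels, and $\alpha=0.05$. Then $\sum_y p_A(y)\max_t M_A(t\mid y)=0.9+0.02=0.92<0.95$. The LP pays $0.9$ for cell $(1,1)$, then $0.1\cdot c$ per $0.02c$ of coverage; the remaining $0.05$ requires $c=2.5$ columns' worth, costing $0.25$. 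The total is $1.15$, still above one.

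The genuine empty-set saving needs the greedy ordering to skip a low-$M$ cell that the width-one rule must include. So I would search directly for an example in which some column $y_0$ has a small $\max_t M_A(t\mid y_0)$ and whose omission, with $c=0$ across that whole column, frees budget. This is exactly the ``width zero on another'' mechanism the statement describes. Concretely, take $p_A=(1-\epsilon,\epsilon)$, with column $1$ a point mass and column $2$ uniform, and set $1-\alpha$ slightly above $1-\epsilon$ plus $\epsilon/K$. The remaining deficit is then covered by including column $2$ fully at cost $\epsilon K$ times a small fraction. Tuning $\epsilon$ and $\alpha$ so that $1-\epsilon+\text{small}\le 1$ is a routine calculation. \textbf{This tuning is the step I expect to be fiddly}, and I would first solve the two-parameter inequality system symbolically.

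\emph{Decomposition \eqref{eq:channeldecomp}.} The left inequality is Theorem~\ref{thm:channel} applied to the $\sigma(X)$-optimal predictor of Theorem~\ref{thm:levelset} with $Z=\yt$, whose value is $\mathcal N_A(\alpha)$. The right inequality is Remark~\ref{rem:levelsetscope}: any valid predictor's expected size (averaged over calibration) upper-bounds $\mathcal N_A(\alpha)$.

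Non-identification of the middle term is Theorem~\ref{thm:noncert} for lower certificates, together with the observation that upper estimates are only of observed widths. The sensitivity-set statement is just the minimum over admitted $(M_A,p_A)$ of valid lower bounds, each of which holds by Theorem~\ref{thm:channel}.
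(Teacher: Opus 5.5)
\textbf{Gap: the converse counterexample.} Your necessity argument is correct: the genie width-one rule $c(t\mid y)=\ind\{t=t^\star(y)\}$ is feasible in \eqref{eq:channelfloor} with objective $1$. Your derivation of \eqref{eq:channeldecomp} from Theorem~\ref{thm:channel}, Theorem~\ref{thm:levelset} and Remark~\ref{rem:levelsetscope} is also correct, and this is what the statement rests on.

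The problem is the failure of the converse. The example you finally propose cannot be tuned to work. Take a point-mass column of weight $1-\epsilon$, a column uniform over $K$ labels, and $1-\alpha=1-\epsilon+\epsilon/K+s$ with $s>0$. The greedy pays $1-\epsilon$ for the point-mass cell and must then buy the remaining coverage $\epsilon/K+s$ at rate $1/K$. So $\mathcal N^{\ast}_A(\alpha)=1+Ks>1$ for every $\epsilon$ and $s$. In this family the converse actually holds, so the ``routine calculation'' you defer would refute the example rather than confirm it. Your first two attempts ($1.3$ and $1.15$) fail for the same reason.

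The structural point is this. The LP at budget one beats the width-one rule only if it can move budget out of one column's best cell into a cell of another column with a larger channel entry. That requires some column's second-largest entry to exceed another column's largest entry. A point mass has second entry $0$, and a uniform column has all entries equal, so budget freed by dropping a column has nowhere better to go. You read ``peaked'' as a point mass, but a point mass cannot usefully absorb width two.

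\textbf{What is missing.} You need a column concentrated on two reported categories. For example, at $\alpha=0.1$ take three latent categories with $p_A=(0.82,0.09,0.09)$. Let $M_A(\cdot\mid 1)$ be a point mass, $M_A(\cdot\mid 2)$ equal to $1/2$ on each of two reported categories, and $M_A(\cdot\mid 3)$ uniform over five. Then $\sum_y p_A(y)\max_t M_A(t\mid y)=0.82+0.045+0.018=0.883<0.9$. The greedy takes the point-mass cell (cost $0.82$), buys the remaining coverage $0.08$ from the two cells of column $2$ at rate $1/2$ (cost $0.16$), and never touches column $3$. Hence $\mathcal N^{\ast}_A(0.1)=0.98<1$. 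This is the width-two, width-zero reallocation the statement describes, and with such an example in place the rest of your proposal goes through.
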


\begin{remark}[Differential reporting]\label{rem:channeldegrade}
If reporting is differential within $A$ with $\sup_{x\in A,\,y}\dtv\bigl(M(\cdot\mid y,x),M_A(\cdot\mid y)\bigr)\le\nu$, then any predictor feasible under the true process has surrogate-channel coverage at least $1-\alpha-\nu$, so $\E[|C(X)|\mid A]\ge\mathcal N^{\ast}_A(\alpha+\nu)$. The floor moves continuously in the differential. One consequence is diagnostic, because a valid predictor achieving width below $\mathcal N^{\ast}_A(\alpha)$ using scene fields coded by the same officer who assigns the label is evidence of rater-mediated dependence, not a contradiction.
\end{remark}

\begin{remark}[Declared channel scenarios and specification check; \tagderived{}]\label{rem:channelscope}
Computing $\mathcal N^{\ast}_A$ requires $M_A$ and $p_A$, neither of which is observed directly. The linkage constants of Section~\ref{sec:noise-example} are marginal rates from external selected samples rather than a Texas channel. Guided by the direct-misclassification approach of \citet{molinari2008}, the empirical analysis evaluates a finite grid of 900 declared channel scenarios. Each scenario specifies per-category agreement, disagreement direction, cross-category leakage, and a reported-label selection index $\varrho\in\{1,2,5\}$. The case $\varrho=1$ represents selection invariance. For each fixed scenario, latent composition is allowed to vary over the simplex subject to reproducing every reported KABCO share within its rounding interval of $\pm 0.00005$. The minimum floor is obtained exactly by solving the 25 possible fractional-knapsack boundary cases as linear programs. The reported value is the smallest exact floor over all feasible scenarios in the declared finite set. It is not an endpoint for a continuous class of channels or selection functions, and it is not an identified Texas quantity. An achieved valid width below the floor implied by a retained scenario refutes that scenario, up to sampling error in the achieved width and the averaging of conformal validity over calibration draws. This calculation is a specification check within the declared sensitivity analysis and does not identify the reporting process.
\end{remark}

\subsection{Spatiotemporal deployment shift}
\label{sec:shift}

A raw deployment cell can combine the fitted class, time bin, and spatial unit. Let $h:\X\to\mathcal H$ be this raw-cell rule. Before calibration starts, a deterministic hierarchy uses \emph{training-split covariate counts} to construct a map $r:\mathcal H\to\mathcal R$. Define the final calibration cell
\begin{equation}\label{eq:finalcell}
R(x)=r\bigl(h(x)\bigr).
\end{equation}
The inverse images $r^{-1}(a)$ form one disjoint tree cut. If a sparse child is rolled to a parent, all siblings assigned to that parent use the same final cell; a leaf and its ancestor cannot both be final cells. The map $R$ is frozen before calibration labels, scores, and support counts are used.

\begin{theorem}[Observed final cells with group-conditional validity and rollup; \tagadapted{}]\label{thm:groups}
For each final cell $a\in\mathcal R$, use only calibration scores with $R(X_i)=a$ and form the conformal quantile with the usual finite-sample index. For every final cell with positive test probability, the raw predictor $C^{\mathrm{raw}}_a$ and its deployed non-empty version $C^{\mathrm{ne}}_a$ satisfy
\begin{equation}\label{eq:stratcov}
\Prob\bigl(\yt_{n+1}\in C^{\mathrm{raw}}_a(X_{n+1})\mid R(X_{n+1})=a\bigr)\ \ge\ 1-\alpha,
\end{equation}
with the same lower bound for $C^{\mathrm{ne}}_a$. Hence, for a test point drawn independently of the calibration sample from any pure reweighting of the final-cell laws,
\begin{equation}\label{eq:mixture}
P_{\mathrm{test}}=\sum_a\omega_a P_a
\quad\Longrightarrow\quad
P_{\mathrm{test}}(\yt\in C)\ \ge\ 1-\alpha.
\end{equation}
The theorem conditions on $R=a$, the final cell actually used for calibration. It gives no guarantee conditional on a proper raw child of $h$ inside that final cell.
\end{theorem}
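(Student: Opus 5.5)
The plan is to reduce \eqref{eq:stratcov} to Theorem~\ref{thm:het}, with $R$ playing the role of $\hat c$. The first step is to confirm that $R=r\circ h$ is a fixed measurable function of $x$ at the time the calibration scores are formed. The raw-cell rule $h$ and the rollup map $r$ are both built from the training split only, using training-split covariate counts. Conditionally on $\Itr$, then, $R$ is deterministic and independent of $(X_i,\yt_i)_{i\in\Ical}$ and of the test point. Assumption A1 is applied conditionally on the training data, so the $n+1$ calibration and test pairs remain exchangeable.

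With that in place, the Mondrian argument in the proof of Theorem~\ref{thm:het} carries over verbatim. Fix a final cell $a$ with $\Prob(R(X_{n+1})=a)>0$. Condition on the event $\{R(X_{n+1})=a\}$ and on the index set $I_a$ of calibration points with $R(X_i)=a$. Any permutation within $I_a\cup\{n+1\}$ extends to a permutation of all $n+1$ points that fixes the complement. The conditioning event is symmetric under such permutations, so the $n_a+1$ scores in the cell are exchangeable. The rank argument of Proposition~\ref{prop:marginal}, with index $\lceil(1-\alpha)(n_a+1)\rceil$ and $\hat q_a=+\infty$ when that index exceeds $n_a$, gives conditional coverage at least $1-\alpha$. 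Averaging over $n_a$ yields \eqref{eq:stratcov} for $C^{\mathrm{raw}}_a$. The bound for $C^{\mathrm{ne}}_a$ then follows from the pointwise containment $C_\lambda\subseteq C^{\mathrm{ne}}_\lambda$ in Definition~\ref{conv:nonempty}.

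For \eqref{eq:mixture}, the step is to interpret ``pure reweighting'' as follows: the test law conditional on $R=a$ equals $P_a$, the calibration-law conditional of $(X,\yt)$ on $R=a$, and only the cell masses change to $\omega_a$. The test point is independent of the calibration sample, so for each $a$,
\[
P_{\mathrm{test}}\bigl(\yt\in C\mid R=a\bigr)=\E\bigl[\Prob_{(X,\yt)\sim P_a}(\yt\in C_a(X)\mid \mathcal D_{\mathrm{cal}})\bigr].
\]
This quantity is the same as the coverage of a fresh exchangeable point drawn from $P_a$, which is at least $1-\alpha$ by the first part. Some care is needed here. The first part was proved for an exchangeable point whose cell is random, while the claim here concerns an independent point whose cell is fixed. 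Under an i.i.d. reading of A1, however, conditioning the exchangeable test point on $R=a$ produces exactly a draw from $P_a$ that is independent of the calibration data, so the two statements coincide. Summing with weights $\omega_a$, which total one over cells with positive mass, gives the mixture bound.

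I expect the main obstacle to be this transfer step. The first part is fully licensed only under exchangeability, whereas \eqref{eq:mixture} needs i.i.d. sampling, or at least that the calibration conditionals equal $P_a$. I would therefore state explicitly that the mixture claim is read under i.i.d. sampling with the calibration cell laws $P_a$. Cells with $\omega_a>0$ but zero calibration-law mass must be excluded. Finally, the closing sentence of the theorem is a scope remark rather than something to prove. No guarantee holds conditional on a proper raw child of $h$, because exchangeability within $I_a$ says nothing about the subset of $I_a$ falling in that child. A two-child counterexample with differing score laws, in the spirit of Theorem~\ref{thm:oracle}(iii), shows that the child can be under-covered.
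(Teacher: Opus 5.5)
Your proposal is correct and follows the paper's route. Like the paper, you treat the training-frozen $R=r\circ h$ as a fixed partition, rerun the Mondrian rank argument of Theorem~\ref{thm:het}, pass to the deployed set by pointwise containment, and get the mixture bound by conditioning on the cell. Your caveat that the mixture step needs i.i.d.\ sampling, with the calibration cell conditionals equal to the $P_a$, usefully sharpens what the paper compresses into ``the tower property,'' because under exchangeability alone an independent draw from $P_a$ need not be exchangeable with the cell-$a$ calibration scores.
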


\begin{proof}
The training-frozen $R$ is a fixed measurable function. Apply the conditional rank argument of Theorem~\ref{thm:het} to the partition $R$. Pointwise containment transfers the lower bound from the raw set to the deployed set. The mixture statement is the tower property. Nothing in this proof conditions on an original leaf after rollup.
\end{proof}

\emph{Transport reading.} A change only in mixture weights across unchanged final-cell laws preserves marginal coverage. A change within a final-cell law or a genuinely new stratum is outside this theorem.

\paragraph{Choice of $n_{\min}$.} We set
\begin{equation}\label{eq:nmin}
n_{\min}=\lceil 2/\alpha\rceil\cdot 50
\end{equation}
(e.g.\ $n_{\min}=1000$ at $\alpha=0.1$). This is a training-support threshold used only to freeze a reproducible final partition. If a final cell also has at least $n_{\min}$ calibration observations, its quantile granularity is at most $1/(n_{\min}+1)$ and the corresponding binomial audit standard error at coverage $1-\alpha$ is at most $\sqrt{\alpha(1-\alpha)/n_{\min}}$. These numerical audit properties must be checked from the actual calibration counts. They are not validity requirements.

\begin{theorem}[Weighted conformal coverage with a total-variation remainder; \tagadapted{}; \citealp{tibshirani2019}]\label{thm:transfer}
Let the target stratum $g^\ast$ represent a new county or a future year and satisfy the covariate-shift condition. Its $X$-marginal is $P^\ast_X$, and its conditional distribution is $P_{\mathrm{cal}}(\yt\mid X)$. Let $w(x)=\mathrm dP^\ast_X/\mathrm dP_{\mathrm{cal},X}(x)$ exist. Fit $\hat w\ge0$ by contrasting unlabeled target covariates with covariates drawn from the \emph{calibration reference law} $P_{\mathrm{cal},X}$ rather than the training law. For the finite-sample statement, the ratio-fitting samples are independent of the scored calibration fold. A cross-fit construction with independent scored folds provides the same separation. Conditional on the fitted $\hat w$, run weighted split conformal.
\begin{equation}\label{eq:wq}
\hat q_{\hat w}=\inf\Bigl\{t\mid \sum_{i=1}^n p^{\hat w}_i\,\ind\{S_i\le t\}\ \ge\ 1-\alpha\Bigr\},
\qquad
p^{\hat w}_i=\frac{\hat w(X_i)}{\sum_{j=1}^n\hat w(X_j)+\hat w(X_{n+1})},
\end{equation}
where the remaining mass $p^{\hat w}_{n+1}=\hat w(X_{n+1})/\bigl(\sum_j\hat w(X_j)+\hat w(X_{n+1})\bigr)$ is placed at $+\infty$ \citep{tibshirani2019}; in particular $\hat q_{\hat w}=+\infty$ (i.e.\ $C=\Y$) whenever the calibration mass cannot reach $1-\alpha$.
Define the estimated tilt $Q_{\hat w}$ on $\X$ by
\begin{equation}\label{eq:tilt}
\mathrm dQ_{\hat w}\ \propto\ \hat w\,\mathrm dP_{\mathrm{cal},X}.
\end{equation}
The following statements hold.

\textbf{(i) Slack identity.}
\begin{equation}\label{eq:slack}
P_{g^\ast}\bigl(\yt_{n+1}\in C_{\hat q_{\hat w}}(X_{n+1})\bigr)\ \ge\ 1-\alpha-\dtv\bigl(P^\ast_X,\,Q_{\hat w}\bigr),
\end{equation}
and the slack is a covariate-space population quantity involving no labels and no unknown true density ratio.

\textbf{(ii) Assumption-free discrepancy diagnostic.} For any classifier $\phi$ and any reference distribution $Q$ on $\X$, the balanced accuracy for distinguishing independent samples from $P^\ast_X$ and $Q$ satisfies
\begin{equation}\label{eq:ba}
2\,\mathrm{ba}(\phi)-1\ \le\ \dtv(P^\ast_X,Q).
\end{equation}
An exact independent sampler from $Q_{\hat w}$ would make a sample-split Clopper--Pearson calculation a lower confidence bound on the slack in \eqref{eq:slack}. The implemented diagnostic instead resamples with replacement from the finite weighted calibration reference
\begin{equation}\label{eq:empirical-tilt}
\widehat Q_{\hat w,n}=\sum_{i=1}^{n}\frac{\hat w(X_i)}{\sum_{j=1}^{n}\hat w(X_j)}\,\delta_{X_i}.
\end{equation}
Conditional on the calibration covariates, the same calculation is an exact lower confidence bound on $\dtv(P^\ast_X,\widehat Q_{\hat w,n})$. Because $\widehat Q_{\hat w,n}$ is discrete and $P^\ast_X$ has no atoms, that distance equals one, so the bound is valid but says nothing about its target. What the calculation measures in practice is the balanced accuracy of a fixed, low-capacity discriminator, which is a two-sample statistic for covariate mismatch that such a discriminator can detect. We report it only in that role, as a classifier mismatch statistic. It is not a confidence bound on $\dtv(P^\ast_X,Q_{\hat w})$ and is not inserted into \eqref{eq:slack}. A large value shows detectable mismatch against the weighted reference; a value near zero shows only that the discriminator finds none, and does not establish target coverage.

\textbf{(iii) Upper bound under realizability.} Suppose $w=w_{\theta_0}$ for an interior and locally identifiable parameter $\theta_0$. Assume the associated probabilistic-classifier likelihood is twice continuously differentiable near $\theta_0$, has nonsingular information, and gives $\hat\theta-\theta_0=O_p((n_{\mathrm{fit}}\wedge n_{\mathrm{ref}})^{-1/2})$, where $n_{\mathrm{fit}}$ is the size of the target ratio-fitting sample and $n_{\mathrm{ref}}$ the size of the calibration-law reference sample. Also assume $\E_{P_{\mathrm{cal}}}w_\theta$ is bounded away from zero near $\theta_0$ and that an integrable function $g$ satisfies $|w_\theta(x)-w_{\theta_0}(x)|\le g(x)\|\theta-\theta_0\|$ in that neighborhood. Then $\dtv(P^\ast_X,Q_{\hat w})=O_p\bigl((n_{\mathrm{fit}}\wedge n_{\mathrm{ref}})^{-1/2}\bigr)$.
\end{theorem}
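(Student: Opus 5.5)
For part (i) I will reduce to the exact weighted-conformal guarantee of \citet{tibshirani2019} under an auxiliary target law, and then pay for the substitution in total variation. Conditional on the fitted $\hat w$, which is independent of the scored calibration fold, define the auxiliary law $\tilde P$ of the test point. Its covariate marginal is the tilt $Q_{\hat w}$ of \eqref{eq:tilt}, and its conditional label law is $P_{\mathrm{cal}}(\yt\mid X)$. Under $\tilde P$, $\hat w$ is exactly proportional to $\mathrm d\tilde P_X/\mathrm dP_{\mathrm{cal},X}$, and the weights in \eqref{eq:wq} are invariant to the normalizing constant. The weighted exchangeability argument of \citet{tibshirani2019} therefore gives $\tilde P(\yt_{n+1}\in C_{\hat q_{\hat w}}(X_{n+1}))\ge 1-\alpha$ exactly. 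This step uses the $+\infty$ mass convention and the fact that $C_\lambda$ is a threshold set, by Lemma~\ref{lem:contig}.

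Next I condition on the calibration data. The coverage event is then a fixed measurable function $f(x)=\Prob(\yt\in C(x)\mid X=x)\in[0,1]$ of the test covariate. This holds because $\hat q_{\hat w}$ depends on $X_{n+1}$ only through $\hat w(X_{n+1})$, so I treat the full map $x\mapsto\ind\{\cdot\}$ jointly. The target and auxiliary laws share the conditional label law by the covariate-shift condition. Hence their coverages differ by $|\E_{P^\ast_X}f-\E_{Q_{\hat w}}f|\le\dtv(P^\ast_X,Q_{\hat w})$. Averaging over the calibration draw yields \eqref{eq:slack}. The slack involves only covariate marginals, which is the stated content.

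The main obstacle is that $\hat q_{\hat w}$ depends on $X_{n+1}$. I plan to handle it by coupling the pair $(X_{n+1},\yt_{n+1})$ under the two laws through a maximal coupling of the covariates with shared label kernel. The coverage indicator is a function of the full $(n+1)$-sample, and the calibration points are identically distributed in both cases. The two joint laws then differ in TV by at most $\dtv(P^\ast_X,Q_{\hat w})$, and the bound follows from $|\Prob_1(E)-\Prob_2(E)|\le\dtv$.

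For part (ii) I will use the standard Neyman--Pearson fact. For any decision set $B=\{\phi=1\}$, the balanced accuracy is $\tfrac12[P^\ast_X(B)+1-Q(B)]$. This gives $2\,\mathrm{ba}-1=P^\ast_X(B)-Q(B)\le\dtv$. The remaining claims are then immediate. The empirical tilt $\widehat Q_{\hat w,n}$ is atomic while $P^\ast_X$ is non-atomic, and the set of atoms has $P^\ast_X$-mass zero and $\widehat Q$-mass one, so their TV equals one. Clopper--Pearson applied to held-out balanced accuracy is exact conditional on the calibration covariates. For part (iii) I will write $\dtv(P^\ast_X,Q_{\hat w})=\tfrac12\E_{P_{\mathrm{cal}}}|w_{\theta_0}/\E w_{\theta_0}-w_{\hat\theta}/\E w_{\hat\theta}|$ and expand the ratio. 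The envelope $g$ and the lower bound on $\E w_\theta$ bound this by $C\|\hat\theta-\theta_0\|$ on an event of probability tending to one. The stated $O_p$ rate of $\hat\theta$, which follows from the M-estimation regularity conditions, finishes the proof.
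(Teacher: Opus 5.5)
Your proposal is correct and follows essentially the paper's proof. For (i), you get exact weighted-conformal coverage under the auxiliary law $Q_{\hat w}\otimes P_{\mathrm{cal}}(\yt\mid X)$, then change the test measure, and the total-variation cost reduces to $\dtv(P^\ast_X,Q_{\hat w})$ because the label kernel is shared; for (ii), the identity $2\,\mathrm{ba}(\phi)-1=P^\ast_X(\phi=1)-Q(\phi=1)$; for (iii), the normalized-density $L^1$ bound controlled by the envelope $g$ and the lower bound on $\E_{P_{\mathrm{cal}}}w_\theta$. Conditioning on the calibration data and the joint-law coupling you describe for the dependence of $\hat q_{\hat w}$ on $X_{n+1}$ are two equivalent forms of the paper's Step~2, so either one suffices.
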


\paragraph{Scope of Theorem~\ref{thm:transfer}.}
(a) The theorem assumes covariate shift, including stability of the reported-label conditional law. A labeled target batch can audit this premise after labels arrive. The frozen-reference score in Algorithm~\ref{alg:monitor} is only exploratory and does not test the premise with exact anytime error control. (b) The population slack in \eqref{eq:slack} is distinct from the classifier mismatch statistic of part (ii). An assumption-free finite-sample upper bound on total variation from unlabeled samples alone is unavailable, and part (iii) gives only a rate, so no numerical slack is computed. The exported tuple reports the nominal level and the mismatch statistic as separate quantities. Weight clipping at $w_{\max}$ takes the fitted ratio outside a realizable family, so part (iii) applies only to the unclipped fit. (c) The covariate-shift condition requires $P^\ast_X$ to be absolutely continuous with respect to $P_{\mathrm{cal},X}$. Fields that identify the target itself would violate this; neither county identity nor calendar year is a model covariate (Section~\ref{sec:features}), so the ratio compares covariate laws that can overlap.

\begin{algorithm}[t]
\caption{New-stratum weighted calibration and the exported diagnostic tuple from Theorem~\ref{thm:transfer}}
\label{alg:transfer}
\begin{algorithmic}[1]
\Require scored calibration fold $\{(S_i,X_i)\}_{i\in\Ical}$; an independent calibration-law covariate reference sample $\mathcal D_{\mathrm{ref}}$; unlabeled target covariates from $g^\ast$; level $\alpha$; one-sided confidence level $1-\alpha_{\mathrm{d}}$ for the empirical discrepancy bound; weight cap $w_{\max}$
\State split the unlabeled target sample into a ratio-fitting half $\mathcal D_{\mathrm{fit}}$ and a diagnostic half $\mathcal D_{\mathrm{diag}}$
\State fit a probabilistic classifier to ($\mathcal D_{\mathrm{ref}}=0$) vs ($\mathcal D_{\mathrm{fit}}=1$); set
$\hat w(x)\gets\min\bigl\{\tfrac{\hat p(1\mid x)}{\hat p(0\mid x)}\cdot\tfrac{n_0}{n_1},\ w_{\max}\bigr\}$ \Comment{any $\hat w\ge 0$ is admissible; the slack in \eqref{eq:slack} absorbs estimation error and clipping}
\For{each test point $x$ from $g^\ast$}
  \State $\hat q_{\hat w}(x)\gets$ weighted quantile \eqref{eq:wq} with test weight $\hat w(x)$; predict $C_{\hat q_{\hat w}(x)}(x)$ \Comment{Thm.~\ref{thm:transfer}(i)}
\EndFor
\State draw with replacement from $\widehat Q_{\hat w,n}$ in \eqref{eq:empirical-tilt}
\State train a discriminator $\phi$ on one half of $\mathcal D_{\mathrm{diag}}$ vs one half of the resample; compute the balanced accuracy $\mathrm{ba}(\phi)$ on the two held-out halves \Comment{sample split, so $\phi$ never sees its evaluation data}
\State $\widehat\Delta_{\mathrm{emp}}^{\mathrm{LCB}}\gets\max\bigl\{0,\ 2\,\mathrm{ba}_{\mathrm{LCB}}(\phi)-1\bigr\}$, with $\mathrm{ba}_{\mathrm{LCB}}$ a one-sided level-$(1-\alpha_{\mathrm{d}})$ lower confidence bound formed by a Bonferroni split of $\alpha_{\mathrm{d}}$ across exact (Clopper--Pearson) bounds on the two class-wise binomial accuracies
\State \textbf{emit} the diagnostic tuple $\bigl(1-\alpha,\ \widehat\Delta_{\mathrm{emp}}^{\mathrm{LCB}}\bigr)$
\end{algorithmic}
\end{algorithm}

\paragraph{Exploratory drift monitoring.}
The covariate-shift condition of Theorem~\ref{thm:transfer} is not label-free falsifiable. For each deployed record $t$ with score $s_t$, one can compute the smoothed rank against the frozen calibration scores,
\begin{equation}\label{eq:pvalue}
p_t=\frac{\#\{i\in\Ical\mid S_i>s_t\}+U_t\bigl(\#\{i\in\Ical\mid S_i=s_t\}+1\bigr)}{n+1},\qquad U_t\sim\mathrm{Unif}(0,1)\ \text{i.i.d.},
\end{equation}
and the corresponding power-product score
\begin{equation}\label{eq:martingale}
M_t=\prod_{s\le t}\varepsilon\,p_s^{\varepsilon-1},\qquad \varepsilon\in(0,1).
\end{equation}
Under a separately specified online conformal protocol that produces sequentially valid randomized ranks, $M_t$ is a non-negative martingale and Ville's inequality gives an exact anytime bound (\tagknown{}; \citealp{vovk-martingales}). Algorithm~\ref{alg:monitor}, however, reuses one frozen calibration set. Its $p_t$ share that random reference and are dependent \citep{bates2023}; the displayed product is not established as a martingale or e-process. Therefore the frozen rule has no claimed Ville bound. Its threshold is an operational trigger for review and recalibration, not a calibrated false-alarm budget, and it does not change an issued certificate. In the empirical audit the product is restarted at the start of each calendar month, with $\varepsilon=0.5$ and $b=100$ ($\log_{10}b=2$).

\begin{algorithm}[t]
\caption{Exploratory frozen-reference drift score (no anytime error guarantee)}
\label{alg:monitor}
\begin{algorithmic}[1]
\Require frozen calibration scores $\{S_i\}_{i\in\Ical}$; betting parameter $\varepsilon\in(0,1)$; prespecified review threshold $b>1$
\State $M\gets 1$
\For{each deployed record $t=1,2,\dots$}
  \State if $t$ opens a new audit period (a calendar month here), set $M\gets 1$
  \State compute $s_t$ and the smoothed p-value $p_t$ by \eqref{eq:pvalue}
  \State $M\gets M\cdot\varepsilon\,p_t^{\varepsilon-1}$ \Comment{running product \eqref{eq:martingale}}
  \If{$M\ge b$}
    \State \textbf{review trigger}: freeze new certificates and assess recalibration \Comment{exploratory; no Ville guarantee}
  \EndIf
\EndFor
\end{algorithmic}
\end{algorithm}

\subsection{Severity-cost risk control}
\label{sec:risk}

Let $\kappa:\Y\to[0,\infty)$ be non-decreasing with $\kappa_{\max}=\kappa(K)>0$ (USDOT/FHWA comprehensive crash-cost scale; the domain admits the indicator cost of Corollary~\ref{cor:fatal}).

\begin{theorem}[Cost-weighted false-omission control; \tagadapted{}; conformal risk control, \citealp{angelopoulos2024}]\label{thm:crc}
Define the loss
\begin{equation}\label{eq:loss}
\ell_\lambda(x,\tilde y)=\frac{\kappa(\tilde y)}{\kappa_{\max}}\,\ind\{\tilde y\notin C_\lambda(x)\}\ \in\ [0,1].
\end{equation}
Then $\lambda\mapsto\ell_\lambda$ is non-increasing and right-continuous with $\ell_1\equiv 0$ (as $s\le 1$ pointwise implies $C_1=\Y$). With
\begin{equation}\label{eq:lambdahat}
\hat\lambda=\inf\Bigl\{\lambda:\ \tfrac{n}{n+1}\hat R_n(\lambda)+\tfrac{1}{n+1}\ \le\ \beta\Bigr\},\qquad
\hat R_n(\lambda)=\tfrac1n\sum_{i=1}^n\ell_\lambda(X_i,\yt_i),
\end{equation}
under A1, the following bound holds.
\begin{equation}\label{eq:crc}
\E\bigl[\kappa(\yt_{n+1})\ind\{\yt_{n+1}\notin C_{\hat\lambda}(X_{n+1})\}\bigr]\ \le\ \beta\,\kappa_{\max}.
\end{equation}
The same bound holds for the deployed set $C^{\mathrm{ne}}_{\hat\lambda}$ because it contains the raw set pointwise.
\end{theorem}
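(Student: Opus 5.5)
The plan is to follow the conformal risk control argument of \citet{angelopoulos2024}, applied to the loss \eqref{eq:loss}. I will check its regularity conditions first, then run the exchangeability argument on the full $n+1$ points.

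\textbf{Properties of the loss.} By Lemma~\ref{lem:contig}(iii), the sets $C_\lambda(x)$ are nested non-decreasing in $\lambda$. So the indicator $\ind\{\tilde y\notin C_\lambda(x)\}=\ind\{s(x,\tilde y)>\lambda\}$ is non-increasing in $\lambda$, and it is right-continuous because the set $\{\lambda:\lambda<s\}$ is open on the right. Multiplying by the fixed factor $\kappa(\tilde y)/\kappa_{\max}\in[0,1]$ preserves both properties, using that $\kappa$ is non-negative and bounded by $\kappa_{\max}$ since it is non-decreasing. Since $\hat F\in[0,1]$, we have $s\le 1$, hence $C_1=\Y$ and $\ell_1\equiv0$. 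This also makes $\hat\lambda\le 1$ well defined: at $\lambda=1$ the criterion reads $1/(n+1)\le\beta$. I will note that this needs $\beta\ge 1/(n+1)$, and otherwise adopt the convention $\hat\lambda=1$ (or $+\infty$), which again gives zero loss.

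\textbf{Oracle threshold.} Let $L_i(\lambda)=\ell_\lambda(X_i,\yt_i)$ for $i=1,\dots,n+1$. Define
\[
\hat\lambda'=\inf\Bigl\{\lambda:\ \tfrac{1}{n+1}\textstyle\sum_{i=1}^{n+1}L_i(\lambda)\le\beta\Bigr\}.
\]
Because $L_{n+1}\le 1$, the full empirical risk is at most $\frac{n}{n+1}\hat R_n(\lambda)+\frac{1}{n+1}$. Hence every $\lambda$ feasible for \eqref{eq:lambdahat} is feasible here, which gives $\hat\lambda'\le\hat\lambda$. Monotonicity then yields $L_{n+1}(\hat\lambda)\le L_{n+1}(\hat\lambda')$. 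The threshold $\hat\lambda'$ is a symmetric function of the $n+1$ exchangeable data points. Under A1 the $n+1$ pairs $(\hat\lambda',L_i)$ are exchangeable, so $\E[L_{n+1}(\hat\lambda')]=\E\bigl[\frac{1}{n+1}\sum_i L_i(\hat\lambda')\bigr]$. Right-continuity ensures the infimum is attained, so this average is at most $\beta$. Multiplying through by $\kappa_{\max}$ gives \eqref{eq:crc}.

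\textbf{Deployed set.} The claim for the deployed set follows from the pointwise containment $C_{\hat\lambda}\subseteq C^{\mathrm{ne}}_{\hat\lambda}$ in Definition~\ref{conv:nonempty}. This containment makes the omission indicator, and hence the loss, no larger.

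The main point requiring care is the attainment step: verifying that right-continuity makes the empirical risk at $\hat\lambda'$ at most $\beta$ rather than merely approaching it. A second point is handling the edge case in which the defining set of $\hat\lambda$ is empty. Both are routine given $\ell_1\equiv0$.
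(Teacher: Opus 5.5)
Your proposal is correct and takes the paper's route: you check the same hypotheses (monotonicity, right-continuity, boundedness after normalizing by $\kappa_{\max}$, and $\ell_1\equiv 0$), then apply conformal risk control. The only difference is that you write out the symmetric oracle-threshold argument through $\hat\lambda'$, whereas the paper cites \citet{angelopoulos2024} and rescales, and your convention $\hat\lambda=1$ for an empty feasible set matches the paper's $\hat\lambda=\lambda_{\max}$.
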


\begin{proof}
For monotonicity, $C_\lambda$ is nested non-decreasing (Lemma~\ref{lem:contig}(iii)), so $\ind\{\tilde y\notin C_\lambda\}$ is non-increasing in $\lambda$. For right-continuity, $\lambda\mapsto\ind\{s(x,\tilde y)>\lambda\}$ is right-continuous. Boundedness with $B=1$ after normalization; and $\ell_{\lambda_{\max}}=\ell_1\equiv 0\le\beta$. These are exactly the hypotheses of the conformal risk control theorem \citep{angelopoulos2024}, whose threshold is $\inf\{\lambda:\frac{n}{n+1}\hat R_n(\lambda)+\frac{B}{n+1}\le\beta\}$ with $\hat\lambda=\lambda_{\max}$ if the set is empty, and the conclusion is theirs, rescaled by $\kappa_{\max}$.
\end{proof}

\emph{Operational reading.} Under A1, the expected severity-weighted cost of reported-label omissions in the stated review population is at most $\beta\kappa_{\max}$. The bound constrains the sets only when $\beta$ is below the expected normalized cost $\E[\kappa(\yt)]/\kappa_{\max}$; for larger $\beta$ even the smallest sets satisfy it and $\hat\lambda=0$. In the Texas test fold this cost is about 0.005 for the severity-cost loss and 0.0031 for the fatal indicator. Sets remain intervals, so a completed-record review rule based on their endpoints can be prespecified and audited.

\begin{theorem}[Latent-severity omission bound under a declared band; \tagderived{}]\label{thm:noisycrc}
Assume N($b^-,b^+,\delta$) and let
\begin{equation}\label{eq:kappaplus}
\kappa^+(\tilde y)=\kappa\bigl(\min(\tilde y+b^-,K)\bigr),
\end{equation}
the largest cost compatible with report $\tilde y$ within the band. Run Theorem~\ref{thm:crc} with $\kappa^+$ in place of $\kappa$ (same machinery; loss bounded by $1$ after normalizing by $\kappa_{\max}$) and output expanded sets $C^{\oplus}_{\hat\lambda}$. Then
\begin{equation}\label{eq:noisycrc}
\E\bigl[\kappa(Y_{n+1})\ind\{Y_{n+1}\notin C^{\oplus}_{\hat\lambda}(X_{n+1})\}\bigr]\ \le\ \beta\,\kappa_{\max}+\delta\,\kappa_{\max}.
\end{equation}
\end{theorem}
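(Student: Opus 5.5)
The plan is to split the latent-severity omission event on whether the test pair respects the declared band, following the union-bound pattern of the proof of Theorem~\ref{thm:noise}. Two ingredients are needed. The first is a pointwise domination of the latent loss by the $\kappa^+$-weighted reported-label loss on the in-band event. The second is the expectation bound of Theorem~\ref{thm:crc} applied with cost $\kappa^+$.

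\textbf{Step 1: $\kappa^+$ is an admissible cost.} First I check that $\kappa^+$ satisfies the hypotheses of Theorem~\ref{thm:crc}.
\begin{itemize}
\item $\tilde y\mapsto\min(\tilde y+b^-,K)$ is non-decreasing and $\kappa$ is non-decreasing, so $\kappa^+$ is non-decreasing and non-negative.
\item $\kappa^+(K)=\kappa(K)=\kappa_{\max}$, so the maximum of $\kappa^+$ is still $\kappa_{\max}$.
\end{itemize}
Hence the normalized loss $\kappa^+(\tilde y)\kappa_{\max}^{-1}\ind\{\tilde y\notin C_\lambda(x)\}$ lies in $[0,1]$. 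It is non-increasing and right-continuous in $\lambda$, with $\ell_1\equiv 0$, exactly as in the proof of Theorem~\ref{thm:crc}. That theorem, with $\hat\lambda$ computed from the $\kappa^+$-weighted calibration risk, therefore gives under A1
\begin{equation*}
\E\bigl[\kappa^+(\yt_{n+1})\ind\{\yt_{n+1}\notin C_{\hat\lambda}(X_{n+1})\}\bigr]\le\beta\kappa_{\max}.
\end{equation*}

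\textbf{Step 2: pointwise decomposition.} Let $B=\{Y_{n+1}\in\Tmap(\yt_{n+1})\}$, so that $\Prob(B^c)\le\delta$ by N($b^-,b^+,\delta$). Write the latent loss as $\kappa(Y_{n+1})\ind\{Y_{n+1}\notin\Cop_{\hat\lambda}\}$ and split it by $\ind_B+\ind_{B^c}$.
\begin{itemize}
\item On $B^c$, the loss is at most $\kappa_{\max}$, so this part contributes at most $\delta\kappa_{\max}$ in expectation.
\item On $B$, suppose $\yt_{n+1}\in C_{\hat\lambda}(X_{n+1})$. Then $Y_{n+1}\in\Tmap(\yt_{n+1})\subseteq\Cop_{\hat\lambda}(X_{n+1})$ by Definition~\ref{def:expansion}, so no omission occurs. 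Hence an omission on $B$ forces $\yt_{n+1}\notin C_{\hat\lambda}(X_{n+1})$.
\item On $B$, the banded map also gives $Y_{n+1}\le\min(\yt_{n+1}+b^-,K)$. Since $\kappa$ is non-decreasing, $\kappa(Y_{n+1})\le\kappa^+(\yt_{n+1})$.
\end{itemize}
Together these give the pointwise inequality
\begin{equation*}
\kappa(Y_{n+1})\ind\{Y_{n+1}\notin\Cop_{\hat\lambda}\}\ind_B\le\kappa^+(\yt_{n+1})\ind\{\yt_{n+1}\notin C_{\hat\lambda}(X_{n+1})\}.
\end{equation*}

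\textbf{Step 3: combine.} Take expectations in Step 2 and add the two pieces. The in-band piece is at most $\beta\kappa_{\max}$ by Step 1, and the out-of-band piece is at most $\delta\kappa_{\max}$, which yields \eqref{eq:noisycrc}. The expansion may be built from either the raw set or the deployed non-empty set. Because $C_{\hat\lambda}\subseteq C^{\mathrm{ne}}_{\hat\lambda}$, expanding the deployed set only enlarges $\Cop$, so the bound transfers.

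\textbf{Main obstacle.} The delicate step is Step 2: the domination must use the reported label's worst compatible cost $\kappa^+(\yt)$ rather than $\kappa(\yt)$. This is exactly why the calibration is run with $\kappa^+$. I also need to confirm that $\hat\lambda$ depends only on $(X_i,\yt_i)_{i\le n}$, so Theorem~\ref{thm:crc} applies verbatim; no exchangeability involving $Y$ is required, since N($b^-,b^+,\delta$) concerns only the test point. A secondary check is that the normalization constant is unchanged, i.e.\ $\kappa^+(K)=\kappa_{\max}$, so the rescaling in Theorem~\ref{thm:crc} produces $\beta\kappa_{\max}$ and not some larger constant.
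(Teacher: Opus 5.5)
Your proposal is correct and follows the paper's proof essentially step for step. You split on the band event $B$, charge $\delta\kappa_{\max}$ to $B^c$, and on $B$ use the contrapositive of the expansion together with $Y_{n+1}\le\min(\yt_{n+1}+b^-,K)$ to dominate the latent loss by the $\kappa^+$-weighted reported-label loss that Theorem~\ref{thm:crc} controls; your checks that $\kappa^+$ is non-decreasing with $\kappa^+(K)=\kappa_{\max}$, and that expanding the deployed set only enlarges $\Cop$, are details the paper leaves implicit.
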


The added term $\delta\kappa_{\max}$ makes this bound informative only when $\delta$ is well below the expected normalized cost. With the declared $\delta=0.02$ and the Texas cost scale, whose expected normalized cost is about 0.005, the bound is vacuous at every budget, so the empirical analysis relies on the fatal-omission bound of Corollary~\ref{cor:fatal}, which needs no $\delta$ allowance.

\begin{proof}
Split on the band event $B_{n+1}=\{Y_{n+1}\in\Tmap(\yt_{n+1})\}$, where $\Prob(B_{n+1}^c)\le\delta$.
\[
\E[\kappa(Y)\ind\{Y\notin C^{\oplus}\}]\ \le\ \E\bigl[\kappa(Y)\ind\{Y\notin C^{\oplus}\}\ind_{B}\bigr]+\kappa_{\max}\delta.
\]
On $B$, two properties apply. First, $Y\notin C^{\oplus}\Rightarrow\yt\notin\Ct$ by the contrapositive of Definition~\ref{def:expansion}. Second, $Y\le\yt+b^-$, so $\kappa(Y)\le\kappa^+(\yt)$ by monotonicity. Hence the first term is at most $\E[\kappa^+(\yt)\ind\{\yt\notin\Ct_{\hat\lambda}\}]\le\beta\kappa_{\max}$ by Theorem~\ref{thm:crc} applied to the $\kappa^+$-loss, which is bounded by $\kappa(K)=\kappa_{\max}$.
\end{proof}

\begin{corollary}[Fatal-omission control under a declared no-under-reporting premise; \tagderived{}]\label{cor:fatal}
Take $\kappa(y)=\ind\{y=5\}$ and assume $Y=5\Rightarrow\yt=5$ for the record version and evaluation time in use. Then, with no $\delta$ inflation at the K boundary,
\begin{equation}\label{eq:fatal}
\Prob\bigl(Y_{n+1}=5,\ Y_{n+1}\notin C^{\oplus}_{\hat\lambda}(X_{n+1})\bigr)\ \le\ \beta.
\end{equation}
\end{corollary}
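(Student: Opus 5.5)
The plan is to reduce Corollary~\ref{cor:fatal} to Theorem~\ref{thm:crc} applied to the fatal indicator. Using the declared premise, every latent fatal omission of the expanded set is forced to also be a reported fatal omission of the base set; Theorem~\ref{thm:noisycrc} is not routed through, since that would carry a $\delta$ term. Concretely, take $\kappa(y)=\ind\{y=5\}$, so $\kappa_{\max}=\kappa(5)=1$, $\kappa$ is non-decreasing, and the normalized loss in \eqref{eq:loss} is $\ind\{\tilde y=5\}\ind\{\tilde y\notin C_\lambda(x)\}\in[0,1]$. Calibrating $\hat\lambda$ by \eqref{eq:lambdahat} with this loss, Theorem~\ref{thm:crc} gives under A1
\[
\Prob\bigl(\yt_{n+1}=5,\ \yt_{n+1}\notin \Ct_{\hat\lambda}(X_{n+1})\bigr)\ \le\ \beta,
\]
and the same bound for the deployed version by pointwise containment.

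The key step is the event inclusion
\[
\{Y_{n+1}=5,\ Y_{n+1}\notin C^{\oplus}_{\hat\lambda}(X_{n+1})\}\ \subseteq\ \{\yt_{n+1}=5,\ \yt_{n+1}\notin\Ct_{\hat\lambda}(X_{n+1})\},
\]
which holds surely and not merely on a band event. Take $\omega$ in the left event. The premise $Y=5\Rightarrow\yt=5$ gives $\yt_{n+1}=5=Y_{n+1}$. Reflexivity of the declared map, $\tilde y\in\Tmap(\tilde y)$, gives $\Ct\subseteq\Cop$ (the use flagged after Assumption~\ref{ass:noise}). So if $\yt_{n+1}=5$ lay in $\Ct_{\hat\lambda}$, then $Y_{n+1}=5$ would lie in $C^{\oplus}_{\hat\lambda}$, a contradiction. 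Hence $\yt_{n+1}\notin\Ct_{\hat\lambda}$. Monotonicity of probability then yields \eqref{eq:fatal}.

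The main point to get right, rather than a genuine obstacle, is being explicit about which set is calibrated. The argument needs $\hat\lambda$ calibrated with the plain fatal indicator on reported labels, or equivalently with $\kappa^+$ as in Theorem~\ref{thm:noisycrc}. With $\kappa(y)=\ind\{y=5\}$ and any $b^-\ge1$, $\kappa^+(\tilde y)=\ind\{\tilde y\ge 5-b^-\}$ dominates $\kappa$. In that case the reported-label fatal omission is still bounded by the $\kappa^+$ risk $\le\beta$, so the inclusion argument goes through for either calibration choice.

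It should also be noted that no $\delta$ appears anywhere. The band assumption N is never invoked beyond reflexivity, because the only latent event controlled is $Y=5$, and on that event the declared exact-K premise pins down $\yt$ deterministically. This premise is an assumption about the record version in use and is not something estimated (cf.\ Remark~\ref{rem:catdep}).
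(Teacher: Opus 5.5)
Your proposal is correct and follows the paper's own proof: the premise forces $\yt_{n+1}=5$ on $\{Y_{n+1}=5\}$, and reflexivity gives $\Ct\subseteq\Cop$, so every latent fatal omission is a reported fatal omission, whose probability Theorem~\ref{thm:crc} with the fatal indicator bounds by $\beta$. One wording point: calibrating with $\kappa^+$ is not \emph{equivalent} to calibrating with the plain indicator, since it yields a threshold at least as large; as your domination argument $\kappa^+\ge\kappa$ shows, it is still sufficient.
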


\begin{proof}
On $\{Y=5\}$, the premise gives $\yt=5$. Reflexivity gives $\Ct\subseteq C^{\oplus}$, so $Y\notin C^{\oplus}$ implies $\yt\notin\Ct$. The joint risk is therefore at most $\E[\ind\{\yt=5\}\ind\{\yt\notin\Ct_{\hat\lambda}\}]\le\beta$ by Theorem~\ref{thm:crc}. Only the stated direction $Y=5\Rightarrow\yt=5$ is used.
\end{proof}

\begin{remark}[Which direction of exactness the fatal bound needs]\label{rem:fataldirection}
The proof uses only the under-reporting direction $Y=5\Rightarrow\yt=5$. The cited result that all K-coded victims in one linked sample mapped to MAIS~6 concerns the reverse direction and is only a sample proportion. It neither supplies the premise used by the proof nor establishes exact recording in Texas.

The needed direction is a declared sensitivity premise. It can fail when a report precedes a later death or record amendment. Analyses at filing time must use Theorem~\ref{thm:noisycrc} with a declared K-boundary error allowance rather than this corollary.
\end{remark}

Equation~\eqref{eq:fatal} is a \emph{joint} probability over the full test population. Its empirical denominator is the full test-fold size. If $P(Y=5)>0$, the conditional omission rate among fatalities is
\begin{equation}\label{eq:fatal-conditional}
\Prob\{Y\notin C^{\oplus}_{\hat\lambda}(X)\mid Y=5\}
=\frac{\Prob\{Y=5,\ Y\notin C^{\oplus}_{\hat\lambda}(X)\}}{\Prob(Y=5)}
\le\frac{\beta}{\Prob(Y=5)},
\end{equation}
not $\beta$. The joint and conditional metrics must be reported with distinct denominators.

\subsection{Certification accounting}
\label{sec:compose}

\begin{theorem}[Certification accounting rule; \tagderived{}]\label{thm:compose}
Let $R$ be the training-frozen final-cell map in \eqref{eq:finalcell}. Suppose N($\Tmap,\delta_a$) holds conditional on $R=a$ for each observed final cell. Apply the compatibility expansion after final-cell calibration. The following statements hold.

\textbf{(i) Observed final cell.} For every final cell $a$ with positive test probability,
\begin{equation}\label{eq:compose-obs}
\Prob\bigl(Y\in C^{\oplus}\mid R=a\bigr)\ \ge\ 1-\alpha-\delta_a.
\end{equation}
This is a final-cell statement and does not imply the same bound within an original leaf rolled into $a$.

\textbf{(ii) New stratum.} If the covariate-shift and independent ratio-fitting conditions of Theorem~\ref{thm:transfer} hold within class $c$, and N($\Tmap,\delta_{c,g^\ast}$) holds under the same target law, then
\begin{equation}\label{eq:compose-new}
P_{g^\ast}\bigl(Y\in C^{\oplus}\mid\hat c=c\bigr)\ \ge\ 1-\alpha-\delta_{c,g^\ast}-\dtv\bigl(P^\ast_{X\mid c},\,Q_{\hat w,c}\bigr),
\end{equation}
where $Q_{\hat w,c}$ is the $\hat w$-tilt of the class-$c$ calibration covariate law. The empirical analysis uses the single-class case, in which the class is the whole population and part (ii) reduces to Theorem~\ref{thm:transfer} combined with the declared band.

\textbf{(iii) Risk scope.} In an exchangeable observed final cell $a$, running the risk-control calibration within that cell yields the corresponding bound $\beta\kappa_{\max}+\delta_a\kappa_{\max}$. This result applies to the observed final cell. Theorem~\ref{thm:transfer} establishes coverage transfer for the weighted quantile procedure under covariate shift.

The displayed coverage slacks are additive. The term $\delta$ comes from the declared compatibility relation, and $\dtv$ comes from the covariate-shift transfer.
\end{theorem}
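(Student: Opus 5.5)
The plan is to prove Theorem~\ref{thm:compose} by composing earlier results, with no new rank argument. Each part is a union bound in which one event is supplied by a conformal guarantee and the other by a declared assumption.

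For part (i), fix a final cell $a$ with positive test probability. Theorem~\ref{thm:groups} gives $\Prob(\yt\in\Ct\mid R=a)\ge 1-\alpha$ for both the raw and the deployed set. We then apply Theorem~\ref{thm:noise} in its conditional form, with every probability conditioned on $\{R=a\}$ and N($\Tmap,\delta_a$) assumed to hold on that event. Define
\[
E_a=\{\yt\in\Ct\}\cap\{Y\in\Tmap(\yt)\}.
\]
On $E_a$ we have $Y\in\Cop$, and $\Prob(E_a\mid R=a)\ge 1-\alpha-\delta_a$. This gives \eqref{eq:compose-obs}. The proviso about raw leaves follows because the only conditioning event ever used is $\{R=a\}$; as Theorem~\ref{thm:groups} notes, no guarantee conditional on a proper child is available.

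For part (ii), condition throughout on $\hat c=c$ under the target law $P_{g^\ast}$. First apply Theorem~\ref{thm:transfer}(i) within class $c$. The weighted quantile is run on the class-$c$ calibration scores, with $\hat w$ fitted on independent samples, and $Q_{\hat w,c}$ is the tilt of the class-$c$ calibration covariate law. This yields
\[
P_{g^\ast}(\yt\in\Ct\mid\hat c=c)\ge 1-\alpha-\dtv\bigl(P^\ast_{X\mid c},Q_{\hat w,c}\bigr).
\]
To justify it, restrict the covariate-shift premise to class $c$: the reported-label conditional law given $X$ is unchanged, and the class restriction changes only the covariate marginals. The slack identity is then re-derived, or simply invoked, with $P_{\mathrm{cal},X}$ replaced by its class-$c$ conditional. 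Second, intersect with the band event $\{Y\in\Tmap(\yt)\}$. Under the target law this event fails with conditional probability at most $\delta_{c,g^\ast}$. The same containment argument as in Theorem~\ref{thm:noise} subtracts this mass, giving \eqref{eq:compose-new}. The single-class remark is the case $C=1$.

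For part (iii), run the risk-control calibration of Theorem~\ref{thm:crc} using only the calibration points in cell $a$. Conditional on $R=a$, these points together with the test point are exchangeable, by the same permutation argument as in Theorem~\ref{thm:het}. Theorem~\ref{thm:noisycrc} then applies verbatim with all expectations conditional on $R=a$, giving $\beta\kappa_{\max}+\delta_a\kappa_{\max}$.

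The main obstacle is part (ii). Theorem~\ref{thm:transfer} is stated at the population level, not per class, so the within-class version has to be checked rather than assumed. Three things need checking. The class-conditional tilt must be well defined, which requires that the class-$c$ calibration mass under $\hat w$ be positive. Conditioning on $\hat c=c$ must preserve the weighted-exchangeability argument of \citet{tibshirani2019}; this holds because $\hat c$ is a fixed measurable function of $X$, so class membership is itself a covariate event. Finally, the band assumption must be stated under the same target law and class, so that the two slacks combine additively and are not mixed across laws. I would handle all three by restating the data-generating law as the class-$c$ conditional law and then citing the existing proofs on that law.
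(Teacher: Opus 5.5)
Your proposal is correct and follows the paper's proof step for step. Part (i) combines Theorem~\ref{thm:groups} with the union-bound argument of Theorem~\ref{thm:noise} run on $\{R=a\}$. Part (ii) applies Theorem~\ref{thm:transfer}(i) with all laws conditioned on the covariate-measurable event $\{\hat c=c\}$ and then subtracts $\delta_{c,g^\ast}$. Part (iii) runs Theorems~\ref{thm:crc} and~\ref{thm:noisycrc} cell-wise under conditional exchangeability. Your extra checks in (ii), namely positivity of the class-$c$ weighted calibration mass and stating the band under the same target law, only make explicit what the paper's short proof leaves implicit.
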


\begin{proof}
(i) Theorem~\ref{thm:groups} gives reported-label coverage conditional on $R=a$. Theorem~\ref{thm:noise}'s union-bound argument, run on the same event $\{R=a\}$, subtracts $\delta_a$.
(ii) Within class $c$, Theorem~\ref{thm:transfer}(i) applies with all distributions conditioned on $\hat c=c$ (the class is a covariate-measurable event, so the covariate-shift condition and the tilt are well defined class-wise); then subtract $\delta$ as in (i).
(iii) Conditional exchangeability within an observed final cell permits Theorems~\ref{thm:crc} and~\ref{thm:noisycrc} to run cell-wise. Those theorems are not invoked for a shifted target law.
\end{proof}

\begin{remark}[The one delicate point]\label{rem:delicate}
Observed-cell calibration uses the single frozen map $R$. New-stratum weighting uses the calibration law as its reference and is a separate branch. Risk adjustment is available only on the exchangeable observed-cell branch. Algorithm~\ref{alg:choir} records these scopes.
\end{remark}

\begin{algorithm}[t]
\caption{CHOIR audit workflow with separate observed-cell and new-stratum branches}
\label{alg:choir}
\begin{algorithmic}[1]
\Require calibration scores $S_i=s(X_i,\yt_i)$; training-frozen final-cell map $R$; compatibility map $\Tmap$ with declared $\delta_a$; level $\alpha$; optional cost $\kappa$ and budget $\beta$ for observed cells; calibration-reference and target covariates for a new stratum
\State assign each $i\in\Ical$ its final cell $R(X_i)$ and record both the raw-cell and final-cell identifiers
\For{each observed final cell $a$}
  \State $\hat q_a\gets$ the $\lceil(1-\alpha)(n_a+1)\rceil$-th smallest score among $\{i:R(X_i)=a\}$ \Comment{Thm.~\ref{thm:groups}}
  \State optionally compute $\hat\lambda_a$ from \eqref{eq:lambdahat} within $a$ \Comment{risk control is observed-cell only}
\EndFor
\If{a target stratum is new}
  \State run Algorithm~\ref{alg:transfer}; do not attach an observed-cell risk bound
\EndIf
\State at prediction in an observed cell, set $\Ct(x)=C^{\mathrm{ne}}_{\hat q_{R(x)}}(x)$, or $C^{\mathrm{ne}}_{\max(\hat q_{R(x)},\hat\lambda_{R(x)})}(x)$ when risk control is requested, which carries both bounds; expand by \eqref{eq:expansion}
\State \textbf{emit} raw cell, final cell, calibration count, threshold, fallback indicator, map version, declared $\delta_a$, and every diagnostic with its provenance
\end{algorithmic}
\end{algorithm}

\begin{remark}[Slack budget as an artifact, and what it is not]\label{rem:budget}
For a new target, the diagnostic table can include
\begin{equation}\label{eq:certificate}
\widehat{\mathrm{Diag}}(c,g^\ast)\;=\;\Bigl(1-\alpha,\ \delta_{c,g^\ast},\ \widehat\Delta^{\,\mathrm{LCB}}_{\mathrm{emp},c,g^\ast}\Bigr),
\end{equation}
with each term carried with its provenance; $\delta_{c,g^\ast}$ is the declared beyond-band mass for the target and $\widehat\Delta^{\,\mathrm{LCB}}_{\mathrm{emp}}$ the classifier mismatch statistic. The two kinds of term are not interchangeable.

For a fixed declared $\delta_{c,g^\ast}$, Theorem~\ref{thm:compose} subtracts that input from the reported-label bound. The transfer theorem subtracts the \emph{true} $\dtv$, whereas the exported classifier quantity is only a lower confidence bound on $\dtv$. Subtracting a lower bound on a penalty gives an upper bound on the unknown coverage floor. Consequently $\widehat{\mathrm{Diag}}$ is not a coverage guarantee and must not be reported as one.

A large $\widehat\Delta^{\,\mathrm{LCB}}_{\mathrm{emp}}$ shows mismatch that the discriminator can detect. A small value does not certify target coverage. The tuple preserves the declared sensitivity inputs separately and does not subtract an empirical discrepancy from a coverage level.
\end{remark}

\subsection{Default base model}
\label{sec:dlcon}

DLCON (deep latent-class ordinal network) is the default base model, built as a mixture of ordinal experts,
\begin{equation}\label{eq:dlcon-cdf}
\hat P(\yt\le k\mid x)\;=\;\sum_{c=1}^{C}\pi_c(x_W;\phi)\,\sigma\bigl(\tau_{c,k}-f_c(x;\theta_c)\bigr),\qquad k=1,\dots,K-1,
\end{equation}
with $\sigma$ the logistic function, per-class MLP experts $f_c(\cdot;\theta_c)$, and a gating network on a designated covariate sub-vector $x_W$ of $x$ (slow, planning-level covariates when the latent classes are to be interpreted; $x_W=x$ in our implementation),
\begin{equation}\label{eq:gate}
\pi_c(x_W;\phi)\;=\;\frac{\exp g_c(x_W;\phi)}{\sum_{c'=1}^{C}\exp g_{c'}(x_W;\phi)},
\end{equation}
where $g(\cdot;\phi)$ is an MLP. Thresholds are strictly ordered by construction via cumulative softplus increments,
\begin{equation}\label{eq:taus}
\tau_{c,1}\in\mathbb{R}\ \text{free},\qquad
\tau_{c,k}\;=\;\tau_{c,1}+\sum_{j=1}^{k-1}\bigl(\log(1+e^{u_{c,j}})+\epsilon\bigr),\quad k=2,\dots,K-1,
\end{equation}
with free parameters $u_{c,j}$ and a fixed jitter $\epsilon=10^{-3}$ guaranteeing $\tau_{c,1}<\dots<\tau_{c,K-1}$. Training minimizes the ordinal negative log-likelihood plus the ranked probability score on the training labels,
\begin{align}
\mathcal{L}_{\mathrm{NLL}} &= -\frac{1}{n_{\mathrm{tr}}}\sum_{i\in\Itr}\log \hat p(\yt_i\mid x_i),
\qquad \hat p(k\mid x)=\hat F(k\mid x)-\hat F(k-1\mid x), \label{eq:nllterm}\\
\mathcal{L}_{\mathrm{RPS}} &= \frac{1}{n_{\mathrm{tr}}}\sum_{i\in\Itr}\sum_{k=1}^{K-1}\bigl(\hat F(k\mid x_i)-\ind\{\yt_i\le k\}\bigr)^2, \label{eq:rpsterm}\\
\mathcal{L} &= \mathcal{L}_{\mathrm{NLL}} + \tfrac12\,\mathcal{L}_{\mathrm{RPS}}, \label{eq:dlconloss}
\end{align}
optionally augmented by an entropy load-balancing penalty on the batch-mean gate distribution, which discourages gate collapse onto a single expert (an efficiency and interpretability aid; no guarantee depends on it). The MAP class
\begin{equation}\label{eq:mapclass}
\hat c(x)\;=\;\arg\max_{c}\ \pi_c(x_W;\phi)
\end{equation}
feeds Theorem~\ref{thm:het}.

The mixture components are used only to define a training-frozen partition. No identification claim is required. Multinomial-logit gates and mixture experts can have parameter symmetries, so component labels are not interpreted as unique population types. The validity results in Sections~\ref{sec:score}--\ref{sec:compose} hold for any fitted $\hat F$ and $\hat c$, including misspecified fits. The architecture is therefore optional, while the calibration guarantees depend on the frozen partition and the stated exchangeability conditions.

\section{Data and experimental design}
\label{sec:data}

\subsection{Source, unit of analysis, and attrition}
\label{sec:source}

The empirical work uses the Texas Crash Records Information System (CRIS), the statutory
repository of police-reported crashes maintained by the Texas Department of
Transportation, for calendar years 2017 through 2025. CRIS is distributed as linked
crash, unit, and person tables. The analysis joins these to two external sources. These are a
vehicle table decoded from the reported vehicle identification number through the
National Highway Traffic Safety Administration vPIC service, and a geographic file
mapping each crash identifier to a census block group and, where one applies, a
core-based statistical area.

The outcome is the police-reported KABCO injury category for one sampled driver, ordered
as O, C, B, A, and K. The estimand is that driver's recorded category in a completed
crash record, defined at the sampled-driver level rather than at the crash level.
Eligible rows are drivers of motor-vehicle units with a known KABCO category. The sampling
step retains one eligible driver per crash by ranking the person identifiers with the
MD5 hash of seed 20260704 concatenated with the identifier and retaining the smallest
hash. This reproducible draw uses only the person identifiers and the declared seed. Missing
and unknown KABCO categories remain outside the eligible sample. The certified population is
therefore drivers of motor-vehicle units with a recorded KABCO category, one per crash, so
estimates weight crashes rather than drivers. Passengers, pedestrians, cyclists, and the
968{,}939 driver rows without a valid KABCO code are outside it; the excluded drivers may
differ systematically from those included, for example when a driver leaves the scene.
Table~\ref{tbl:attrition} reports the attrition. Of 11{,}250{,}255 raw
crash-unit-person rows, 10{,}678{,}038 belong to motor-vehicle units, 10{,}047{,}354 are
driver rows, and 9{,}078{,}415 carry a valid severity code. Geocoding to a block group
succeeds for 8{,}585{,}232 of these. Geocoding is recorded as an indicator. All records
contribute to analyses that use no geographic condition, and geographic analyses use the
indicator to define their support.

\begin{table}[pos=htbp]
\caption{Attrition from raw CRIS records to the analysis sample.}
\label{tbl:attrition}
\centering\footnotesize
\begin{threeparttable}
\begin{tabular*}{\tblwidth}{@{\extracolsep{\fill}}lrr@{}}
\toprule
Stage & Rows & Dropped \\
\midrule
CRIS crash-unit-person rows, 2017--2025 & 11{,}250{,}255 & \\
Motor-vehicle units                     & 10{,}678{,}038 & 572{,}217 \\
Driver rows                             & 10{,}047{,}354 & 630{,}684 \\
Valid KABCO severity                    & 9{,}078{,}415  & 968{,}939 \\
Geocoded to block group\tnote{a}        & 8{,}585{,}232  & \\
\midrule
Full-extract descriptive rows, one driver per crash\tnote{b} & \textbf{5{,}213{,}921} & \\
\bottomrule
\end{tabular*}
\begin{tablenotes}[flushleft]\footnotesize
\item[a] An indicator, not a filter; records that fail to geocode remain in the sample.
\item[b] Sampled under Remark~\ref{rem:cluster}; see Section~\ref{sec:splitting}.
\end{tablenotes}
\end{threeparttable}
\end{table}

Table~\ref{tbl:attrition} describes the 2017--2025 full extract of 5{,}213{,}921 records.
The reported-label certificates use the 2017--2023 primary sample described below, and
the 2024--2025 records serve the temporal and monitoring analyses. Full-extract counts are
therefore descriptive and are not the denominators of the certificates.

\subsection{Sample designs and training-frozen final cells}
\label{sec:splitting}

Two people involved in the same crash do not carry independent severities, because they
share a speed, a road, a weather condition, and an impact. Assumption~\ref{ass:a1} therefore
requires the crash to remain the sampling cluster. A split that places one occupant in
calibration and another in test would inflate the effective calibration sample and would
calibrate the quantile against within-crash dependence. The protocol in
Remark~\ref{rem:cluster} keeps every crash within one fold.

The primary sample, called S1 below, covers 2017--2023. It contains 4{,}042{,}510 sampled
drivers from 4{,}042{,}510 unique crashes. Under seed 20260704, the random split assigns
2{,}423{,}834 records to training, 808{,}594 to calibration, and 810{,}082 to test
(60/20/20). Each crash appears in one fold only.

Raw support counts are computed on training records only. The fixed priority assigns a
motorcycle driver first; among the remainder, a driver with restraint code \texttt{None};
among the remaining records, a rural driver on a road with a posted limit of at least
55 mph; and otherwise the baseline cell. The training counts are 25{,}053, 30{,}224,
275{,}290, and 2{,}093{,}267 in that order. Each exceeds the prespecified threshold of
1{,}000, so no rollup is needed and the four raw cells are the four final cells of the
primary audit. Their calibration counts are 8{,}386 (motorcycle), 10{,}256 (unrestrained),
92{,}187 (rural high-speed), and 697{,}765 (baseline). The map is fixed before
calibration labels or scores are used; thresholds and reported-label claims condition on
the resulting final cell, not on an overlapping raw leaf.

The composition audit in Section~\ref{sec:e8e9} uses a finer declared partition. It
crosses the four cells with the CRIS rural or urban indicator, which gives eight product
leaves. The rural high-speed class is rural by definition, so its urban leaf is empty.
The remaining seven leaves all exceed the 1{,}000-record threshold, so the rollup rule
again leaves them unchanged and they are the seven final cells of
Table~\ref{tbl:compose}.

The temporal design, called S2, trains on 2017--2021, calibrates on 2022--2023, and
evaluates 2024--2025. Its County-Mondrian variant calibrates separately within each county
that has at least 1{,}000 training records and pools the remaining counties into one
statewide cell. The county design, called S3, uses the 2017--2023 records and the gradient
boosting base model. It orders the
254 Texas counties by their share of rural crashes and holds out 54 counties by systematic
sampling along that ordering, so that the held-out set spans the rural-urban range. The
model is trained and calibrated on the other 200 counties, with a random 75/25 split
between training and calibration, and each held-out county is evaluated with that single
frozen statewide threshold. The density-ratio diagnostic is computed only for the 29
held-out counties with at least 2{,}000 test records; its reference sample is drawn from the
training split, which follows the same law as calibration in this design. These designs provide
stress tests of completed records for temporal transfer, county variation, and monitoring.
Because a single frozen threshold is applied to data from other years or counties, none of
them satisfies Assumption~\ref{ass:a1}, and their results are descriptive.

Two further partitions are used in Sections~\ref{sec:e2e3} and~\ref{sec:repair}. The KMeans
partition has eight clusters fit on the encoded covariates of a random 500{,}000-record
training subsample. The speed-hour partition crosses four posted-speed bands (below 35, 35
to 54, 55 to 69, and 70 mph or more) with a night band (22:00 to 05:59) and a day band;
a speed-hour cell with fewer than 1{,}000 calibration records is merged into its speed
band. The speed-hour bands were set after the descriptive analysis of
Section~\ref{sec:composition}, and the merge rule uses calibration counts, so this partition
is not prespecified in the sense of the four-cell audit and its results are illustrative.

The base models are specified as follows. The ordered logit is a proportional-odds model
with a small ridge penalty ($10^{-4}$), fit on a random 300{,}000-record subsample of the
training split. The multinomial logit is fit on a 500{,}000-record subsample. The
latent-class ordered logit has three classes with a multinomial-logit gate on the
covariates and is fit by five EM iterations on a 300{,}000-record subsample. The
random-parameters ordered logit has a normally distributed random intercept and a random
slope on posted speed limit, and is fit by simulated maximum likelihood with 32 scrambled
Halton draws on a 100{,}000-record subsample. Gradient boosting is the scikit-learn
histogram implementation with balanced class weights, a learning rate of 0.15, 31 leaves,
at most 100 iterations with early stopping, and at most 1{,}000{,}000 training records.
Balanced class weights raise the predicted probabilities of the rare severe categories, a
common choice when severe crashes are the focus, but they also make its probability estimates
poorly calibrated (Table~\ref{tbl:models}).
DLCON (Section~\ref{sec:dlcon}) is trained on the full training split and uses four latent classes, a gating network with one hidden
layer of 64 units, and expert networks with hidden layers of 128 and 64 units. It is
trained with Adam at a learning rate of $10^{-3}$ and a batch size of 8{,}192 for at most
eight epochs, with early stopping after three epochs without improvement on a 3\%
validation split and a gate load-balancing weight of 0.05. TabPFN (\texttt{tabpfn} package version 9.0.0) is used with its
default classifier settings and a training context of 10{,}000 records drawn from the
training split by stratified sampling, in which each of the categories C, B, A, and K receives its
population share or 200 records, whichever is larger, and the no-injury category receives
the remainder (8{,}002, 988, 610, 200, and 200 records for O, C, B, A, and K). The floor
over-represents the rare severe categories relative to their population shares, and no
prior correction is applied to the predicted probabilities. As an in-context learner,
TabPFN's fit step only prepares the context, and prediction over the full calibration and
test folds is its dominant cost. DLCON and TabPFN were run on a GPU. The
subsamples and the fixed context explain the short fit times in Table~\ref{tbl:models}. All seven models are
calibrated and tested on the full calibration and test folds.

\subsection{Features, exclusions, and missingness}
\label{sec:features}

Features are declared in two completed-record configurations for the empirical audits.
Configuration P represents the finalized fields available in the primary analysis, and
Configuration T adds further finalized fields for an extended comparison.
Configuration P contains selected finalized-record fields, including crash context such as posted speed
limit, weather, light and surface condition, road type and alignment, traffic control,
functional system, annual daily traffic, day of week and hour; person attributes such as
driver age, gender, restraint use and license class; vehicle attributes decoded from the
vehicle identification number, including vehicle age, body class, gross weight class,
drive type, door count, engine displacement and power, and the presence of electronic
stability control, collision warning, lane departure warning, blind spot monitoring and
curtain or side airbags; and geography. Configuration T adds finalized-record fields,
including manner of collision, harmful event, object struck, vehicle damage severity, airbag
deployment, and ejection. Configuration T is an extended completed-record analysis. The
reported-label coverage results use Configuration P, and Configuration T supplies a
descriptive feature-ablation comparison.

Restraint use is a completed-record covariate with shared reporting provenance. The field
describes a pre-crash state and is recorded after the crash by the same officer, on the same
form, and at the same time as the KABCO code. Validity holds for every declared partition and
every base model built from the completed record, but the provenance matters for
interpretation. The unrestrained cell is defined by an officer judgment made together with
the outcome, so its coverage deficit may partly reflect reporting practice, and the
within-stratum nondifferential reporting assumed by Theorem~\ref{thm:channel} is doubtful
for that cell. The Configuration T fields in Section~\ref{sec:e8e9} share the same provenance.

Certain fields are excluded from every configuration because they encode the outcome.
These include injury and death counts, suspected serious injury counts, and the crash-level
severity code. Investigator narratives and blood alcohol results are also excluded. Blood
alcohol results occur for a small, non-random subset and are recorded after the event. All
personally identifying material, including the vehicle identification number itself,
license and registration identifiers, dates of birth, and free-text narratives, is removed
at ingest and appears in no released artifact.

The decoded vehicle fields record equipment availability rather than equipment engagement.
A record indicating that a vehicle line offers automatic emergency braking therefore enters
as a vehicle-technology availability covariate. Decode
coverage is also systematically sparser for older vehicles, so
an undecoded indicator enters the design matrix rather than the record being silently
dropped.

Missingness is handled explicitly and no record is discarded for incompleteness.
Categorical fields receive an explicit missing level, and levels below 0.5\% frequency
are pooled. Continuous fields are median imputed and standardized, and a missingness
indicator is added for each continuous field with missing values, so a model can condition
on the fact of absence. The encoder is fit on the training split of each design. The
``stored ind.'' column of Table~\ref{tbl:data} counts only the two indicators stored in the
data (VIN undecoded and geocoded). Two continuous fields are substantially incomplete, annual daily traffic at
48.65\% and engine power at 50.99\%, and both are reported in Table~\ref{tbl:data} with
their indicators noted. Incompleteness of traffic volume is also why the descriptive map
below uses hour of day rather than volume as its second axis. Separately, two fields
carry sentinel values rather than nulls, in that posted speed limit is recorded as $-1$ on
244{,}965 records of the full extract (206{,}716 in the primary sample), and vehicle age is
negative on 18{,}335 records whose model year post-dates the crash year. Neither is a null,
so neither registers in a missingness count, and both enter the design matrix as stored. The
speed-limit sentinel is a coding limitation, because the linear models treat it as a very low
posted limit, and these records can never enter the rural high-speed cell. A negative
vehicle age of one year is a genuine next-model-year vehicle rather than an error. The descriptive statistics in
Table~\ref{tbl:data} exclude them and say so.

\subsection{Composition of the sample}
\label{sec:composition}

The reported-label analysis uses the training-frozen final cells. The latent-injury result
uses the declared compatibility maps of Section~\ref{sec:noise}, with the category-dependent
map of Remark~\ref{rem:catdep} as the primary map, and $\delta_a=0.02$ in each final cell
(Section~\ref{sec:e8e9} checks this value cell by cell). The map and $\delta_a$ are sensitivity inputs informed by external linkage
evidence. Two semi-synthetic audits treat observed CRIS KABCO as a proxy for the underlying
injury and inject synthetic reported labels. In the sensitivity grid of
Section~\ref{sec:e4}, a fixed histogram gradient-boosting score and a common recordwise
kernel generate synthetic reported labels in both the calibration and test folds under
seed 20260708, so that test-fold report coverage and the outside-map share can be
measured. In the composition audit of Section~\ref{sec:e8e9}, synthetic reported labels are
generated in the calibration fold only. In both audits, expanded coverage is evaluated
against the observed test-fold KABCO. These designs audit the implementation under a known
injected process, while medical measurement and local channel estimation remain separate
linkage analyses.

Table~\ref{tbl:data} reports the composition of the full 2017--2025 extract in three panels, which
give records by year and reported outcome, the covariate profile across the severity ladder,
and the feature blocks with their completeness. Panel A shows a stable annual volume near
580{,}000 records, with the lowest total of the series in 2020. Panel B is the panel that
motivates the framework. Reading across the ladder from no injury to fatal, mean posted
speed limit rises from 44.6 to 55.5 mph, the rural share rises from 0.255 to 0.571, the
motorcycle-driver share rises from 0.002 to 0.196, and median annual daily traffic falls
from 34{,}378 to 12{,}336. The severe end of the KABCO scale is a structurally distinct
population rather than a random subsample of the mild end, and it is also a small one,
holding 15{,}774 fatal records, or 0.30\% of the sample.

\begin{table}[pos=htbp]
\caption{Composition of the full 2017--2025 extract.}\label{tbl:data}
\centering\footnotesize
\begin{threeparttable}
\begin{tabular*}{\tblwidth}{@{\extracolsep{\fill}}lrrrrrr@{}}
\toprule
\multicolumn{7}{@{}l}{\textsc{Panel A. Driver records by year and reported outcome}}\\
\midrule
Year & \cellcolor{seqteal1}O & \cellcolor{seqteal1}C & \cellcolor{seqteal2}B & \cellcolor{seqteal3}A & \cellcolor{seqteal4}\textcolor{white}{K} & Total\\
\midrule
2017 & 474,174 & 60,193 & 34,815 & 7,965 & 1,680 & 578,827\\
2018 & 483,410 & 61,957 & 32,384 & 6,891 & 1,650 & 586,292\\
2019 & 498,230 & 64,451 & 32,108 & 7,123 & 1,599 & 603,511\\
2020\tnote{a} & 413,477 & 52,313 & 28,680 & 6,939 & 1,733 & 503,142\\
2021 & 483,262 & 55,335 & 36,570 & 9,056 & 1,975 & 586,198\\
2022 & 485,983 & 53,150 & 40,110 & 8,759 & 1,871 & 589,873\\
2023 & 489,323 & 52,352 & 42,525 & 8,644 & 1,823 & 594,667\\
2024 & 488,537 & 50,787 & 44,236 & 8,319 & 1,806 & 593,685\\
2025 & 474,709 & 48,087 & 45,282 & 8,011 & 1,637 & 577,726\\
\cmidrule{1-7}
\textbf{Total} & \textbf{4,291,105} & \textbf{498,625} & \textbf{336,710} & \textbf{71,707} & \textbf{15,774} & \textbf{5,213,921}\\
\midrule
\multicolumn{7}{@{}l}{\textsc{Panel B. Covariate profile across the severity ladder}}\\
\midrule
 & \cellcolor{seqteal1}O & \cellcolor{seqteal1}C & \cellcolor{seqteal2}B & \cellcolor{seqteal3}A & \cellcolor{seqteal4}\textcolor{white}{K} & \\
\midrule
$n$ & 4,291,105 & 498,625 & 336,710 & 71,707 & 15,774 & \\
mean speed limit (mph) & 44.6 & 46.0 & 47.9 & 51.3 & 55.5 & \\
rural share & \databar{.2552}.255 & \databar{.2104}.210 & \databar{.3045}.304 & \databar{.4572}.457 & \databar{.5708}.571 & \\
motorcycle-driver share & \databar{.0018}.002 & \databar{.0202}.020 & \databar{.0595}.060 & \databar{.1902}.190 & \databar{.1962}.196 & \\
median ADT (known)\tnote{b} & 34,378 & 35,376 & 27,321 & 16,842 & 12,336 & \\
mean driver age & 38.7 & 39.9 & 38.9 & 38.9 & 41.9 & \\
female share & \databar{.3958}.396 & \databar{.5148}.515 & \databar{.4440}.444 & \databar{.3047}.305 & \databar{.2068}.207 & \\
median vehicle age (yr) & 8 & 8 & 9 & 10 & 11 & \\
VIN undecoded & \databar{.0304}.030 & \databar{.0247}.025 & \databar{.0307}.031 & \databar{.0427}.043 & \databar{.0304}.030 & \\
geocoded & \databar{.9263}.926 & \databar{.9697}.970 & \databar{.9716}.972 & \databar{.9756}.976 & \databar{.9888}.989 & \\
\midrule
\multicolumn{7}{@{}l}{\textsc{Panel C. Feature blocks and completeness (Configuration P)}}\\
\midrule
Block & cont. & categ. & stored ind. & levels & \multicolumn{2}{c}{continuous features (\% missing)}\\
\midrule
crash context & 3 & 10 & 0 & 80 & \multicolumn{2}{p{0.40\textwidth}}{\raggedright\scriptsize speed limit~0.00, ADT~48.65, hour~0.00}\\
person & 1 & 5 & 0 & 33 & \multicolumn{2}{p{0.40\textwidth}}{\raggedright\scriptsize driver age~0.85}\\
vehicle (VIN) & 4 & 11 & 1 & 148 & \multicolumn{2}{p{0.40\textwidth}}{\raggedright\scriptsize vehicle age~0.48, doors~26.07, displacement~4.63, engine power~50.99}\\
geography & 0 & 2 & 1 & 11 & \multicolumn{2}{p{0.40\textwidth}}{\raggedright\scriptsize --}\\
\cmidrule{1-7}
\textbf{Total} & \textbf{8} & \textbf{28} & \textbf{2} & \textbf{272} & \multicolumn{2}{p{0.40\textwidth}}{\raggedright\scriptsize one-hot encoded at a 0.005 minimum frequency; rarer levels pooled}\\
\bottomrule
\end{tabular*}
\begin{tablenotes}[flushleft]\footnotesize
\item[a] The 2020 total is the lowest in the series.
\item[b] ADT is 48.65\% missing and engine power 50.99\% missing; both enter the design matrix with median imputation and a missingness indicator, so no record is dropped for incompleteness.
\item[c] Panel B statistics for posted speed limit exclude the 244,965 records (4.70\%) carrying the CRIS $-1$ sentinel, and those for vehicle age exclude 18,335 records with a model year later than the crash year. Both values are sentinels rather than nulls, so Panel C's missingness counts do not register them; the fitted models see the columns as stored. Motorcycle share is the CRIS-reported body style, which is populated for every record, rather than the VIN decode.
\end{tablenotes}
\end{threeparttable}
\end{table}

The same structure is visible in the raw data before any model is fit.
Figure~\ref{fig:landscape} maps the share of severe or fatal outcomes over the plane of
hour of day and posted speed limit. Severe outcomes concentrate in a high-speed
late-night region, where cells at or above 55 mph between midnight and 04:00 pool to 5.8\%
against 0.9\% on the daytime low-speed plateau, a sixfold contrast, with a peak cell of
7.4\%. Figure~\ref{fig:landscape} describes reporting frequencies in the records, and it
matters here for one narrow and specific reason. A single calibration threshold
computed over this plane is dominated by the plateau, because that is where almost all of
the records are, and the ridge is where the rate of severe outcomes is highest.

\begin{figure}[pos=htbp]
\centering
\includegraphics[width=\linewidth,height=.58\textheight,keepaspectratio]{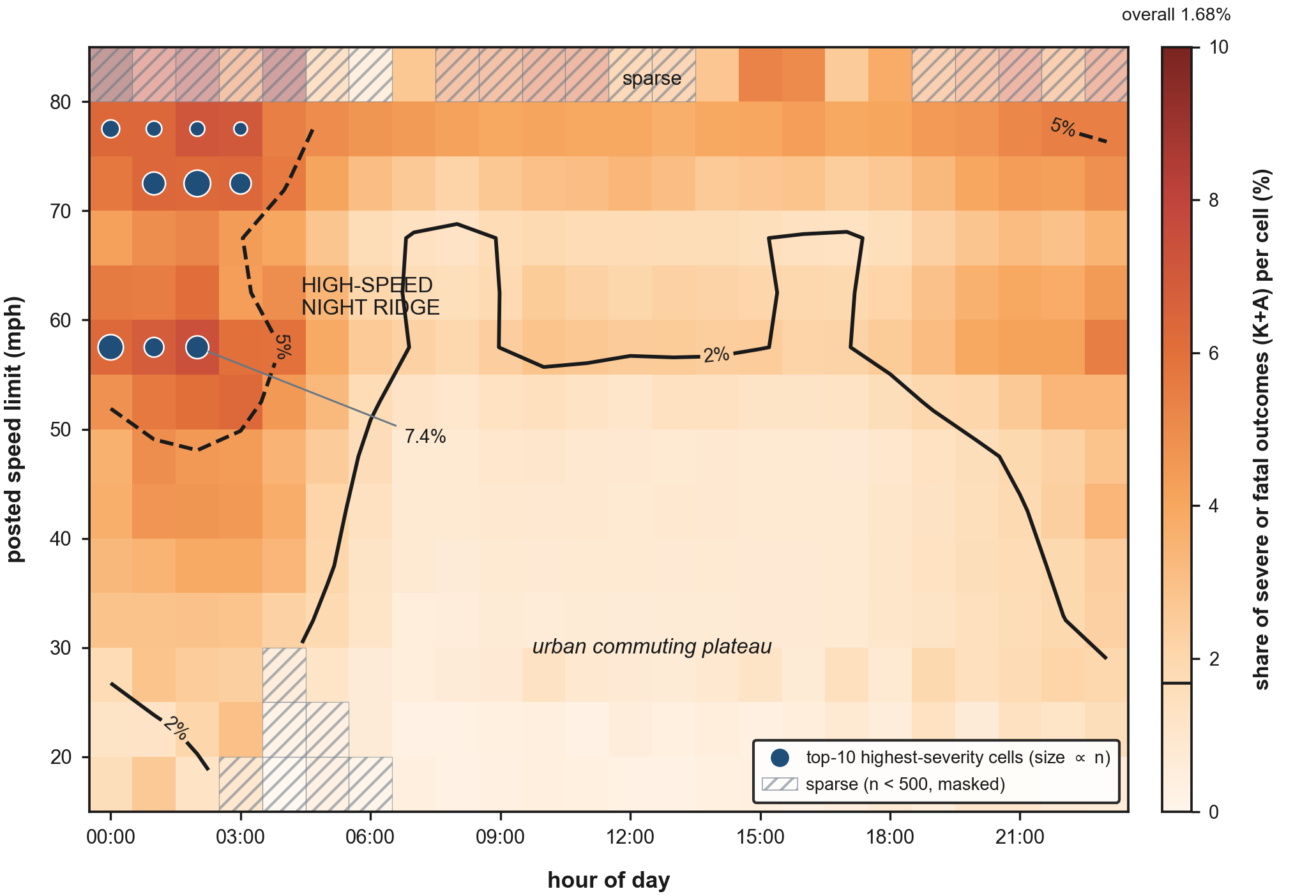}
\caption{The severity landscape of Texas driver records, 2017--2025. Color gives the
share of severe or fatal (K+A) outcomes in each hour by speed-limit cell, for the
4{,}877{,}108 records with posted limits between 15 and 85 mph; contours mark the 2\% and
5\% levels, and the colorbar tick marks the 1.68\% K+A share of the full sample. Cells at
or above 55 mph between 00:00 and 03:59 pool to 5.8\%, against 0.9\% for cells at or
below 45 mph between 06:00 and 21:59. Cells with fewer than 500 records are masked.}
\label{fig:landscape}
\end{figure}

Figure~\ref{fig:corner} shows the same separation jointly rather than one covariate at a time. The
no-injury and severe-or-fatal groups occupy overlapping but visibly displaced regions of
covariate space, because the severe group sits ten miles per hour higher in posted limit and
three years older in vehicle age, and carries a late-night tail nearly three times
heavier. Driver age barely moves, which is worth stating plainly, because it shows the
displacement is a property of where and in what people crash rather than of who they are.
The two distributions overlap substantially, and that overlap is the difficulty. They are
neither separable, which would make the problem easy, nor identical, which would make one
threshold sufficient.

\begin{figure}[pos=htbp]
\centering
\includegraphics[width=\linewidth,height=.58\textheight,keepaspectratio]{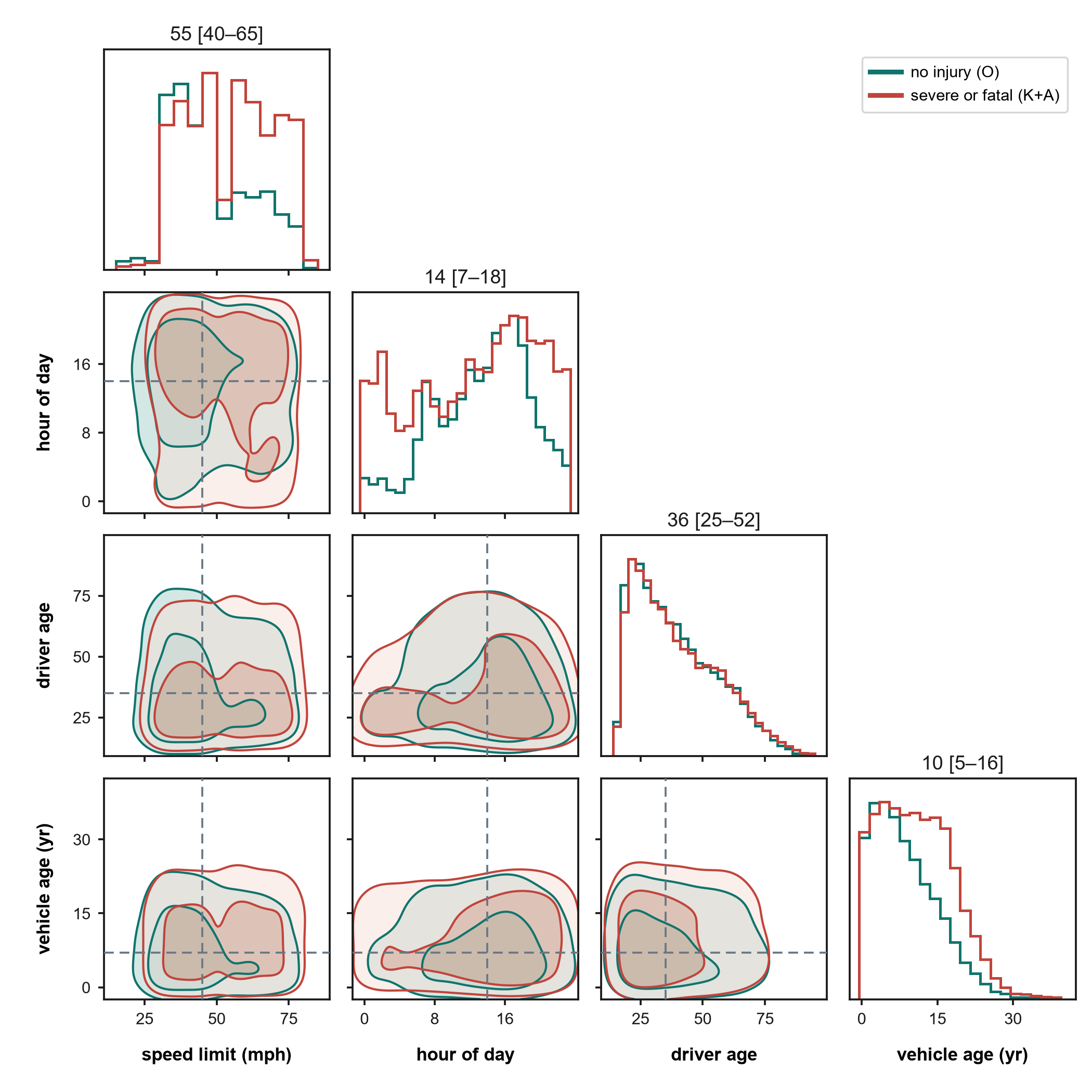}
\caption{Joint covariate geometry of no-injury (teal) and severe or fatal (coral) driver
records. Filled contours hold 50\% and 90\% of each group, dashed lines mark the
no-injury medians, and panel titles give the severe group's median and interquartile
range. The severe group sits at higher posted speed limits (median 55 against 45 mph) and
in older vehicles (10 against 7 years), and carries a heavier late-night tail (20.8\% of
severe records fall between 00:00 and 05:59, against 7.8\% of no-injury records), while
driver age is nearly unchanged. The figure uses complete cases, so its medians differ slightly from Table~\ref{tbl:data}. All 84{,}419 complete-case severe records are shown
against 150{,}000 no-injury records sampled at seed 20260704.}
\label{fig:corner}
\end{figure}

\section{Results}
\label{sec:results}

The primary results evaluate a reported-label certificate on the primary sample, a
declared compatibility-expansion sensitivity analysis, and descriptive audits of completed
records. All analyses use finalized records. Test-fold coverage is reported as empirical
evidence alongside the stated finite-sample guarantee. Coverage uncertainty is reported
with two-sided Clopper--Pearson intervals and a Bonferroni correction over the stated
family, so that the intervals hold simultaneously while each guarantee remains a per-cell
statement (Remark~\ref{rem:arbitrary}). These intervals are conditional on the fitted
calibration thresholds, whereas the guarantee averages over calibration draws. To measure
that variability, the calibration and test folds of the primary sample are pooled and
re-split at random into two halves 200 times, with each base model held fixed; each repeat
recalibrates on one half and evaluates on the other.
Only the random crash-level split of the primary sample satisfies Assumption~\ref{ass:a1};
the temporal, county, and monitoring analyses are descriptive stress tests.

\subsection{Reported-label audit in the four final cells}
\label{sec:e1}

At nominal coverage 0.90, calibration within the four final cells gives 28
model-by-stratum estimates for the seven base models, from 0.8983 to 0.9074, and every
Bonferroni-adjusted 95\% interval contains 0.90 (Table~\ref{tab:maincells}, Panel A). The
cells are mutually exclusive and follow the training-frozen priority rule in
Section~\ref{sec:splitting}. The reported certificate conditions on the final cells. Across
the 200 repeated splits, mean coverage lies between 0.8996 and 0.9005 in all 28 cells. The
standard deviation across repeats is 0.0005 in the baseline cell and 0.004 to 0.005 in the
motorcyclist and unrestrained cells, 95\% of repeats fall between 0.891 and 0.910, and no
repeat falls below 0.88 in any cell. Certified widths vary by at most 0.02 categories across
repeats. The single-split results in Table~\ref{tab:maincells} are therefore representative
of the calibration-draw distribution.

\begin{table}[pos=htbp]
\caption{Coverage and width by final cell for all seven base models ($\alpha=0.10$).}
\label{tab:maincells}
\centering\scriptsize
\setlength{\tabcolsep}{3.5pt}
\begin{threeparttable}
\begin{tabular*}{\tblwidth}{@{\extracolsep{\fill}}lccccc@{}}
\toprule
Base model & Baseline & Motorcycle & Rural high-speed & Unrestrained & All \\
\midrule
\multicolumn{6}{@{}l}{\textsc{Panel A. Final-cell (certified) coverage, with simultaneous 95\% interval}}\\
\midrule
Ordered logit & 0.8999 [0.899, 0.901] & 0.9004 [0.890, 0.910] & 0.8988 [0.896, 0.902] & 0.9002 [0.891, 0.909] & 0.8998 \\
Multinomial logit & 0.8998 [0.899, 0.901] & 0.8992 [0.889, 0.909] & 0.8991 [0.896, 0.902] & 0.9030 [0.894, 0.912] & 0.8998 \\
Latent-class OL & 0.8996 [0.898, 0.901] & 0.9015 [0.891, 0.911] & 0.8992 [0.896, 0.902] & 0.9037 [0.894, 0.913] & 0.8996 \\
Random-parameters OL & 0.8998 [0.899, 0.901] & 0.8988 [0.888, 0.909] & 0.8990 [0.896, 0.902] & 0.8995 [0.890, 0.908] & 0.8997 \\
Gradient boosting & 0.8997 [0.899, 0.901] & 0.8996 [0.889, 0.910] & 0.8983 [0.895, 0.901] & 0.9022 [0.893, 0.911] & 0.8996 \\
DLCON & 0.8999 [0.899, 0.901] & 0.8984 [0.888, 0.908] & 0.9000 [0.897, 0.903] & 0.9026 [0.893, 0.911] & 0.8999 \\
TabPFN & 0.8997 [0.899, 0.901] & 0.9014 [0.891, 0.911] & 0.8993 [0.896, 0.902] & 0.9074 [0.898, 0.916] & 0.8998 \\
\midrule
\multicolumn{6}{@{}l}{\textsc{Panel B. Final-cell (certified) mean set width, KABCO categories}}\\
\midrule
Ordered logit & 1.42 & 3.61 & 2.03 & 3.77 & 1.54 \\
Multinomial logit & 1.41 & 3.58 & 1.99 & 3.63 & 1.53 \\
Latent-class OL & 1.41 & 3.62 & 2.00 & 3.74 & 1.53 \\
Random-parameters OL & 1.42 & 3.63 & 2.02 & 3.76 & 1.54 \\
Gradient boosting & 3.16 & 4.22 & 4.65 & 4.58 & 3.36 \\
DLCON & 1.40 & 3.55 & 1.95 & 3.58 & 1.52 \\
TabPFN & 1.42 & 4.18 & 2.01 & 4.25 & 1.55 \\
\midrule
\multicolumn{6}{@{}l}{\textsc{Panel C. Coverage under pooled (marginal) calibration}}\\
\midrule
Ordered logit & 0.9051 & 0.8410 & 0.8741 & 0.8223 & 0.8998 \\
Multinomial logit & 0.9036 & 0.8295 & 0.8775 & 0.8957 & 0.8997 \\
Latent-class OL & 0.9044 & 0.8691 & 0.8744 & 0.8427 & 0.8998 \\
Random-parameters OL & 0.9031 & 0.8750 & 0.8833 & 0.8482 & 0.8999 \\
Gradient boosting & 0.9467 & 0.5828 & 0.6236 & 0.3739 & 0.8989 \\
DLCON & 0.9007 & 0.8675 & 0.8991 & 0.8895 & 0.9000 \\
TabPFN & 0.9005 & 0.8374 & 0.9039 & 0.8679 & 0.8998 \\
\midrule
Test records $n$ & 699,412 & 8,530 & 91,805 & 10,335 & 810,082 \\
\bottomrule
\end{tabular*}
\begin{tablenotes}[flushleft]\footnotesize
\item[] \textit{Note:} The nominal coverage is 0.90. The 28 two-sided Clopper--Pearson intervals in Panel A use a Bonferroni correction for at least 95\% simultaneous coverage, and every interval contains 0.90. The intervals are conditional on the fitted calibration thresholds; the variation across 200 repeated calibration splits is reported in Section~\ref{sec:e1}. Cells are mutually exclusive and follow the priority motorcycle, unrestrained among the remainder, rural high-speed among the remainder, and baseline. Panel C uses one pooled threshold for all records; its binomial standard errors are about 0.004 in the motorcycle and unrestrained cells. OL is ordered logit; gradient boosting uses balanced class weights.
\end{tablenotes}
\end{threeparttable}
\end{table}

Panel C of Table~\ref{tab:maincells} shows the same models under one pooled threshold.
Every model reaches about 0.90 overall, yet every model covers at least one of the
motorcyclist and unrestrained cells at 0.868 or lower, more than eight binomial standard
errors below 0.90. The shortfall varies by model and cell. The ordered logit covers the
two cells at 0.841 and 0.822, DLCON at 0.868 and 0.890, and TabPFN at 0.837 and 0.868. The
class-balanced gradient boosting model is the extreme case, at 0.583 for motorcyclists,
0.624 for the rural high-speed cell, and 0.374 for unrestrained drivers. It is used as the
anchor in several later analyses because it is a common machine-learning choice for
tabular crash data and makes the mechanism easy to see, but Table~\ref{tbl:models} shows
that its class balancing also makes its probability estimates much worse than those of the
other six models. Results that depend on the anchor are therefore also reported for a
narrow model where the difference matters (Sections~\ref{sec:e7} and~\ref{sec:e8e9}).

Panel B gives the price of certification. The certified sets for motorcyclists span 3.55
to 4.22 of the five KABCO categories across models, and for unrestrained drivers 3.58 to
4.58. In the baseline cell they span 1.40 to 1.42 categories for the six narrow models. The
guarantee is therefore informative for most drivers and much less informative for the
groups where it matters most.

The contribution under test in this section is the certification layer rather than any
base model. A smaller set size is a property of the base model. Coverage that holds for
every base model is the property the layer is designed to deliver.

\subsection{Seven-model completed-record audit}
\label{sec:seven-model}

Table~\ref{tbl:models} compares the seven base models, namely the ordered logit that has
anchored severity analysis since \citet{mccullagh1980}, a multinomial logit, a
latent-class ordered logit, a random-parameters ordered logit, gradient boosting, the
DLCON network of Section~\ref{sec:dlcon}, and the TabPFN tabular foundation model. DLCON
is the default base model of Section~\ref{sec:dlcon}; it is included as one of the seven
and is not a claimed contribution of the paper.

\begin{table}[pos=htbp]
\caption{One certification layer over seven base models ($\alpha=0.10$).}\label{tbl:models}
\centering\footnotesize
\setlength{\tabcolsep}{2.6pt}
\begin{threeparttable}
\begin{tabular*}{\tblwidth}{@{\extracolsep{\fill}}lcccccccc@{}}
\toprule
 & \multicolumn{2}{c}{Pooled calibration} & Certified & & & & & \\
\cmidrule(lr){2-3}
Base model & Coverage & Width & width & RPS & Log loss & Fit & Predict & Calibrate \\
\midrule
Ordered logit & 0.8998 & 1.54 & 1.54 & 0.216 & 0.592 & 6\,s & 2\,s & 0.14\,s \\
Multinomial logit & 0.8997 & 1.53 & 1.53 & 0.215 & 0.585 & 24\,min & $<$1\,s & 0.13\,s \\
Latent-class OL & 0.8998 & 1.53 & 1.53 & 0.215 & 0.590 & 3\,min & 3\,s & 0.13\,s \\
Random-parameters OL & 0.8999 & 1.54 & 1.54 & 0.216 & 0.591 & 19\,s & 7\,s & 0.13\,s \\
Gradient boosting & 0.8989 & 3.41 & 3.36 & 0.698 & 1.269 & 43\,s & 14\,s & 0.10\,s \\
DLCON & 0.9000 & 1.52 & 1.52 & 0.213 & 0.584 & 22\,s & $<$1\,s & 0.14\,s \\
TabPFN & 0.8998 & 1.56 & 1.55 & 0.219 & 0.596 & 9\,s & 25\,min & 0.13\,s \\
\bottomrule
\end{tabular*}
\begin{tablenotes}[flushleft]\footnotesize
\item[] \textit{Note:} Held-out test records $n=810{,}082$; nominal coverage 0.90. Pooled calibration uses one threshold for all records; the certified width uses calibration within the four final cells (Table~\ref{tab:maincells}). RPS is the ranked probability score and log loss the mean negative log-likelihood of the reported category on the test fold (lower is better). Every set is contiguous. At $\alpha\in\{0.05,0.10,0.20\}$, 20 of 21 model-by-$\alpha$ cells met a descriptive screen of nominal less three binomial standard errors; the exception is gradient boosting at $\alpha=0.10$, short by 0.00008. Fit and predict belong to the base model and reflect the training subsamples and context sizes in Section~\ref{sec:splitting}; five models ran on a desktop CPU and DLCON and TabPFN on a GPU, so timings compare orders of magnitude only. OL is ordered logit; gradient boosting uses balanced class weights.
\end{tablenotes}
\end{threeparttable}
\end{table}

Coverage is indistinguishable across the seven models under both pooled and certified
calibration, as Proposition~\ref{prop:marginal} and Theorem~\ref{thm:groups} assert.
Efficiency is where the models separate. After certification within the four final cells,
six models give mean widths between 1.52 and 1.55 categories, and the ordered logit
(1.54) is within 0.03 categories of the narrowest model (DLCON, 1.52) and slightly narrower
than the tabular foundation model (1.55). The difference is larger where it matters. In the
motorcyclist and unrestrained cells the certified ordered-logit sets span 3.61 and 3.77
categories, against 4.18 and 4.25 for TabPFN (Table~\ref{tab:maincells}), whose
stratified context raises its predicted probabilities for severe categories. The six
models' predictive quality is also close, with ranked probability scores between 0.213 and
0.219. The balanced gradient-boosting model is the widest (3.36),
because class balancing shifts its predicted distribution toward rare categories, and its
ranked probability score (0.698) and log loss (1.269) are much worse. The comparison has
two limits. The models were trained on different subsample sizes and TabPFN on a fixed
context (Section~\ref{sec:splitting}), so the table compares the models as fitted here
rather than each model at its best. And the mean width is dominated by the baseline cell,
so the per-cell widths in Table~\ref{tab:maincells} are the more informative comparison.

The widths in these tables are those of the construction certified here, which fixes a
threshold on the score of Definition~\ref{def:score}. Width in the ordinal setting has a
set-construction component that is separable from the estimator, and
\citet{zhang2025minlength} give sets of minimal length per instance given the base scores.
Any construction exporting contiguous sets can be certified by the same layer.

Table~\ref{tab:baselines} compares the layer with other conformal methods on the same
cached probabilities. Pooled LAC and pooled CHOIR both reach 0.90 overall and both
under-cover the safety cells. Label-conditional LAC, the class-conditional construction
used in prior injury-risk work \citep{wei2025}, conditions on the KABCO category rather
than on the road-user group; it leaves the unrestrained cell at 0.630 to 0.685 for the
narrow models and roughly doubles the mean set size. APS over-covers (0.98 or more overall) with sets of 2.8 to 4.0 categories; with the
narrow models it covers every cell, but at 0.96 to 1.00 and with sets nearly twice as large
as the certified ones. Calibration within the declared safety cells is the only method that
holds every cell near 0.90, and it does so with contiguous sets.

\begin{table}[pos=htbp]
\caption{Alternative conformal methods on the primary sample ($\alpha=0.10$).}\label{tab:baselines}
\centering\footnotesize
\setlength{\tabcolsep}{3pt}
\begin{threeparttable}
\begin{tabular*}{\tblwidth}{@{\extracolsep{\fill}}llcccccc@{}}
\toprule
 & & & & & \multicolumn{3}{c}{Coverage in safety cells} \\
\cmidrule(lr){6-8}
Base model & Method & Coverage & Mean size & Contiguous & Motorcycle & Rural high-speed & Unrestrained \\
\midrule
Ordered logit & LAC, pooled & 0.900 & 1.54 & 100.0\% & 0.913 & 0.884 & 0.873 \\
 & LAC, label-conditional & 0.900 & 3.50 & 100.0\% & 0.856 & 0.784 & 0.630 \\
 & APS & 0.984 & 2.81 & 100.0\% & 0.995 & 0.966 & 0.957 \\
 & CHOIR, pooled & 0.900 & 1.54 & 100.0\% & 0.841 & 0.874 & 0.822 \\
 & CHOIR, four cells & 0.900 & 1.54 & 100.0\% & 0.900 & 0.899 & 0.900 \\
\addlinespace
DLCON & LAC, pooled & 0.900 & 1.50 & 99.7\% & 0.932 & 0.890 & 0.921 \\
 & LAC, label-conditional & 0.899 & 3.16 & 97.7\% & 0.857 & 0.764 & 0.685 \\
 & APS & 0.989 & 2.93 & 100.0\% & 1.000 & 0.989 & 0.995 \\
 & CHOIR, pooled & 0.900 & 1.52 & 100.0\% & 0.868 & 0.899 & 0.890 \\
 & CHOIR, four cells & 0.900 & 1.52 & 100.0\% & 0.898 & 0.900 & 0.903 \\
\addlinespace
Gradient boosting & LAC, pooled & 0.899 & 3.05 & 93.7\% & 0.506 & 0.607 & 0.311 \\
 & LAC, label-conditional & 0.899 & 3.38 & 95.4\% & 0.509 & 0.607 & 0.320 \\
 & APS & 0.981 & 4.04 & 98.8\% & 0.719 & 0.930 & 0.537 \\
 & CHOIR, pooled & 0.899 & 3.41 & 100.0\% & 0.583 & 0.624 & 0.374 \\
 & CHOIR, four cells & 0.900 & 3.36 & 100.0\% & 0.900 & 0.898 & 0.902 \\
\bottomrule
\end{tabular*}
\begin{tablenotes}[flushleft]\footnotesize
\item[] \textit{Note:} All methods use the same cached base-model probabilities, calibration fold and test fold ($n=810{,}082$). LAC uses the score $1-\hat p(\yt\mid x)$ with one threshold (pooled) or one threshold per KABCO category (label-conditional, the class-conditional construction of prior injury-risk work). APS is the non-randomized adaptive prediction set. CHOIR uses the ordinal cumulative score with one threshold (pooled) or within the four final cells. Contiguous is the share of sets that form one run of adjacent KABCO categories.
\end{tablenotes}
\end{threeparttable}
\end{table}

\subsection{Heterogeneity and marginal coverage deficits}
\label{sec:e2e3}

Marginal coverage is an average over the population and can hide poor coverage in small
groups. Under the gradient boosting anchor the pooled threshold covers baseline drivers at
0.947 and the safety cells at 0.374 to 0.624. Read per reported KABCO category, coverage is
0.889 for O, 0.952 for C, 0.958 for B, 0.869 for A, and 0.822 for K. The pooled threshold
therefore over-covers the minor-injury categories and under-covers both the no-injury
majority and the severe categories, so the pattern is not a simple ladder from mild to
severe.

The unrestrained stratum is defined narrowly. CRIS records restraint use in several codes,
and only one of them, ``None,'' states that a restraint was available and not used. The
neighboring ``Not Applicable'' code is carried by every motorcyclist (8{,}530 of its
10{,}518 test-split records), and motorcyclists are assigned to their own cell first.
Defining the stratum as ``None'' alone leaves 10{,}335 records. A looser definition that
folds in the inapplicable and residual codes would raise the anchor's pooled coverage from
0.374 to 0.417; the narrow definition gives the larger deficit.

The shortfall follows the mixture-threshold mechanism in Theorem~\ref{thm:oracle}(iii)
for each fitted score. A single threshold calibrated on a population that is 82\% uninjured
fits that population, and the severe strata receive sets that are too small. A marginally
calibrated model is therefore least reliable for the motorcyclist and unrestrained-driver
records where a completed-record audit needs stable uncertainty.

Final-cell calibration works for any prespecified partition. For the gradient boosting
anchor, the four declared cells reach 0.8983 to 0.9022, and the eight cells of the KMeans
partition of Section~\ref{sec:splitting} reach 0.8980 to 0.9016, as Theorem~\ref{thm:het}
implies for any partition fixed in advance.

Validity does not depend on where a partition comes from, but efficiency does. For the same
anchor and at matched validity, the latent-class ordered logit's own gate gives the
narrowest sets of the partitions tested, at a mean width of 3.236 categories against 3.356
for the declared safety strata, 3.402 for the KMeans partition, and 3.406 for the DLCON
gate. Its three classes also have a clear reading. Their widths run from 2.44 to 4.04
categories while each holds coverage within a few thousandths of nominal, so the model
separates groups that differ in how predictable their severity is. The latent-class gate is
estimated jointly with an ordinal outcome model, which may explain why it separates the
score distribution better than a clustering fit for geometric compactness. These width
differences are single-split point estimates.

Efficiency in the aggregate is not the same as repair where it matters, and
Section~\ref{sec:repair} shows that the two can point in different directions. The
partition that best repairs the deficit on the speed-hour ridge is the one defined on the
ridge's own coordinates, not the one that minimizes width overall. The repair has a price
in width. Under the anchor, the motorcyclist stratum's mean width rises from 2.64 to 4.22
categories and the unrestrained stratum's from 2.62 to 4.58, and those sets were narrow
precisely because they often missed the reported label.

\subsection{Coverage deficit and calibration result}
\label{sec:repair}

Figure~\ref{fig:repair} shows the heterogeneity result on the plane of hour and posted
speed limit, using the gradient boosting anchor and the three partitions described in
Section~\ref{sec:splitting}. Under marginal calibration the deficit is concentrated on the
severity ridge, and 90 of the 254 supported cells fall below 0.80; coverage falls to 0.338
in the 75-mph cell at 03:00. The KMeans partition raises 75-mph coverage from 0.611 to
0.692 but leaves the night deficit in place, because it does not encode hour. The
speed-hour partition raises the deepest cell to 0.847 and leaves 7 cells below 0.80.

The speed-hour bands are round engineering values, but they were chosen after the
descriptive analysis of Figure~\ref{fig:landscape}, which uses all records, so they should
not be read as prespecified. Their effect also has a cost at the band edges. Four of the
seven remaining cells below 0.80 lie at the day-band edge hours of 06:00 and 21:00, and were
covered at 0.87 to 0.95 under marginal calibration; for example, the 30-mph cell at 06:00
falls from 0.953 to 0.780. Coverage is lower than under marginal calibration in 105 of the
254 supported cells, most of which were over-covered before. A partition therefore moves
coverage toward the structure it encodes, and the guarantee reaches no further than the
declared cells.

\begin{figure}[pos=htbp]
\centering
\includegraphics[width=\linewidth,height=.58\textheight,keepaspectratio]{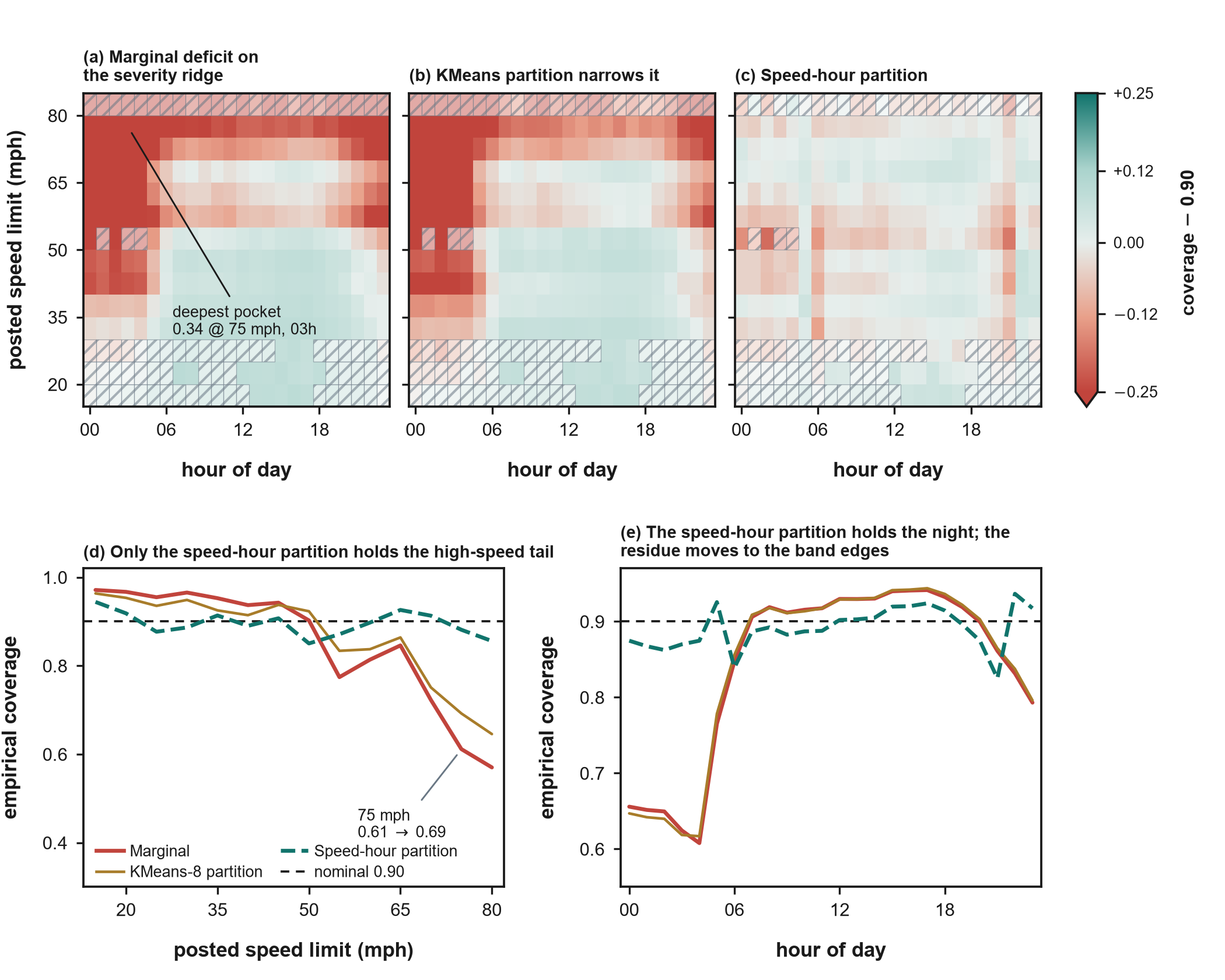}
\caption{Coverage deficit and calibration result. Map color gives the signed difference
between empirical coverage and nominal 0.90 over the plane of Figure~\ref{fig:landscape},
for the 754{,}186 test records with posted limits between 15 and 85 mph and the gradient
boosting base model; cells with fewer than 400 records (82 of 336) are hatched. The
guarantee applies to the final cells of each partition, not to plotting cells or to this
subset, so pooled values here are descriptive. (a) Under marginal calibration, coverage falls to
0.338 in the 75-mph, 03:00 cell. (b) Under the KMeans partition, 75-mph coverage rises to 0.692,
while the night deficit remains (Proposition~\ref{prop:impossible}). (c) Under the speed-hour
partition, the deepest cell reaches 0.847 (0.8979 pooled over the subset); the remaining
cells below 0.80 lie at the band edges, with an hour-wall minimum of 0.825. (d,e) The same
comparison along each axis.}
\label{fig:repair}
\end{figure}

\subsection{Declared compatibility-expansion sensitivity audit}
\label{sec:e4}

The audit uses pre-injection KABCO as a proxy outcome. One common recordwise kernel
produces synthetic calibration and test reports, and coverage is evaluated against
test-fold proxy KABCO. The analysis is an implementation check under a known injected
process; it is not evidence about the Texas reporting process.

At $\alpha=0.10$ and declared $\delta_a=0.02$, reported-label coverage ranges from 0.8955
to 0.9030 across cells, outside-map shares are 0.0012 to 0.0137, expanded coverage is
0.9963 to 1.0000, above the 0.88 floor, and mean width is 4.26 to 5.00 categories
(Table~\ref{tab:semisynthetic}).

\begin{table}[pos=htbp]
\caption{Semi-synthetic sensitivity audit by final cell under the declared compatibility relation ($\delta_a=0.02$).}
\label{tab:semisynthetic}
\centering\footnotesize
\begin{threeparttable}
\begin{tabular}{lrrrrr}
\toprule
Final cell & $n$ & Report coverage & Outside-map share & Expanded coverage & Mean width \\
\midrule
Baseline & 699,412 & 0.8996 & 0.0012 & 0.9973 & 4.26 \\
Motorcycle & 8,530 & 0.8955 & 0.0137 & 1.0000 & 5.00 \\
Rural high-speed & 91,805 & 0.8969 & 0.0025 & 0.9963 & 4.99 \\
Unrestrained & 10,335 & 0.9030 & 0.0108 & 0.9998 & 5.00 \\
Overall & 810,082 & 0.8993 & 0.0016 & 0.9972 & 4.36 \\
\bottomrule
\end{tabular}
\begin{tablenotes}[flushleft]\footnotesize
\item[] \textit{Note:} The fixed-score implementation stress test uses $\alpha=0.10$, $\delta_a=0.02$, and seed 20260708. The theorem floor is 0.88 in every cell. The observed CRIS category is the pre-injection proxy label. Cells use the priority rule stated in Table~\ref{tab:maincells}.
\end{tablenotes}
\end{threeparttable}
\end{table}

Table~\ref{tbl:noise} reports the sensitivity grid. The grid varies the declared
beyond-band mass $\delta$ of Assumption~\ref{ass:noise} and compares the KABCO map, a
constant one-category map, and the unexpanded interval.

\begin{table}[pos=htbp]
\caption{Proxy-outcome coverage under declared compatibility maps in the semi-synthetic sensitivity audit.}\label{tbl:noise}
\centering\footnotesize
\begin{threeparttable}
\begin{tabular*}{\tblwidth}{@{\extracolsep{\fill}}lccccc@{}}
\toprule
 & \multicolumn{2}{c}{Declared floor} & \multicolumn{3}{c}{Empirical proxy-outcome coverage} \\
\cmidrule(lr){2-3}\cmidrule(lr){4-6}
$\delta$ & Banded & Generic\tnote{a} & KABCO map & Constant map & No expansion\tnote{b} \\
\midrule
0.00 & 0.90 & 0.7645 & 0.9972 & 0.9972 & 0.8974 \\
0.01 & 0.89 & 0.7642 & 0.9972 & 0.9972 & 0.8974 \\
0.02 & 0.88 & 0.7640 & 0.9972 & 0.9972 & 0.8974 \\
0.05 & 0.85 & 0.7623 & 0.9973 & 0.9972 & 0.8981 \\
\bottomrule
\end{tabular*}
\begin{tablenotes}[flushleft]\footnotesize
\item[a] The generic floor is $1-\alpha-\varepsilon_{\mathrm{tot}}$. The realized total error rate is 0.135--0.138 across the grid.
\item[b] The unexpanded arm is an empirical comparator under the same proxy-outcome process.
\item[] At $\delta=0$ and $0.01$ the injected process exceeds the declared beyond-band mass in some cells (Table~\ref{tab:semisynthetic}), so those rows show the floor as declared rather than as satisfied by the injected process.
\end{tablenotes}
\end{threeparttable}
\end{table}

Three patterns summarize the grid. First, the banded floor decreases from 0.90 to 0.85 as
$\delta$ increases from 0 to 0.05, while the generic floor based on total error remains near
0.763 because the realized total error is 0.135 to 0.138. The injected process is much
milder than the linkage evidence, where total disagreement is near 0.49, and its realized
beyond-band mass stays below 0.02 overall. The banded floor depends only on $\delta$, so a
larger adjacent error would leave it unchanged, while the generic floor would fall to about
0.41. Because the grid never approaches the declared band, it does not stress the band
assumption itself.

Second, expanded proxy-outcome coverage remains near 0.997 for both maps, with an overall
mean width near 4.36 categories. This width is the observable price of the declared floor
under this process.

Third, the unexpanded interval already covers the proxy outcome at 0.8974 to 0.8981 with
mean width near 3.35, above every banded floor. The expansion converts this realized
performance into a bound stated from the declared compatibility relation, and the grid
connects that bound to a directly measured width cost.

\subsection{Temporal and spatial completed-record stress tests}
\label{sec:e5e6}

The temporal stress test trains on 2017--2021, calibrates on 2022--2023, and evaluates
2024--2025. Its eight final-cell estimates range from 0.8919 to 0.9094
(Table~\ref{tab:temporalcells}). Five simultaneous 95\% Clopper--Pearson intervals contain
0.90, while three lie entirely above 0.90. Mean width is 3.19 to 4.66 categories. Intervals
that lie above nominal are themselves evidence that the later years are not exchangeable
with the calibration years, even though coverage stays close to the target.

\begin{table}[pos=htbp]
\caption{Final-cell temporal stress test for histogram gradient boosting.}
\label{tab:temporalcells}
\centering\footnotesize
\begin{threeparttable}
\begin{tabular}{llrrrr}
\toprule
Year & Final cell & $n$ & Coverage & Simultaneous 95\% interval & Mean width \\
\midrule
2024 & Baseline & 509,230 & 0.9001 & [0.8989, 0.9012] & 3.19 \\
2024 & Motorcycle & 6,481 & 0.9094 & [0.8993, 0.9189] & 4.23 \\
2024 & Rural high-speed & 69,843 & 0.9049 & [0.9018, 0.9079] & 4.65 \\
2024 & Unrestrained & 8,131 & 0.8950 & [0.8854, 0.9041] & 4.60 \\
\addlinespace
2025 & Baseline & 495,823 & 0.9028 & [0.9016, 0.9039] & 3.19 \\
2025 & Motorcycle & 6,422 & 0.9011 & [0.8905, 0.9110] & 4.25 \\
2025 & Rural high-speed & 67,181 & 0.9061 & [0.9030, 0.9092] & 4.66 \\
2025 & Unrestrained & 8,300 & 0.8919 & [0.8823, 0.9011] & 4.61 \\
\bottomrule
\end{tabular}
\begin{tablenotes}[flushleft]\footnotesize
\item[] \textit{Note:} The model is trained on 2017--2021 records and calibrated on 2022--2023 records. The eight two-sided Clopper--Pearson intervals use Bonferroni correction for at least 95\% simultaneous coverage. This table is a descriptive temporal stress test.
\end{tablenotes}
\end{threeparttable}
\end{table}

Table~\ref{tbl:temporal} reports the statewide transfer analysis. The unweighted and
County-Mondrian rows use the full target years. For the density-ratio row, the calibration
fold is split into a reference half, used only to fit the ratio, and a scored half, whose
scores are weighted; the target year is split into a fit half and an evaluation half. The
weighted coverage is 0.9002 in 2024 and 0.9053 in 2025, against 0.8990 and 0.9006 for the
unweighted threshold on the same rows. In these years the unweighted threshold was already
close to nominal, so the comparison shows that weighting does no harm here rather than that
it repairs a shift. The classifier mismatch statistic of Theorem~\ref{thm:transfer}(ii) is
0.000 and 0.008, so a linear discriminator finds little covariate mismatch between the
target years and the weighted calibration data.

\begin{table}[pos=htbp]
\caption{Temporal completed-record stress test and covariate-mismatch statistic ($\alpha=0.10$).}
\label{tbl:temporal}
\centering\footnotesize
\begin{threeparttable}
\begin{tabular*}{\tblwidth}{@{\extracolsep{\fill}}lrrrrr@{}}
\toprule
Method & Cov. 2024 & Cov. 2025 & Width 2024 & Width 2025 & Mismatch statistic \\
\midrule
Unweighted split conformal & 0.8993 & 0.9008 & 3.439 & 3.438 & -- \\
County-Mondrian & 0.9003 & 0.9024 & 3.411 & 3.414 & -- \\
Density-ratio weighted & 0.9002 & 0.9053 & 3.445 & 3.461 & 0.000 / 0.008 \\
\bottomrule
\end{tabular*}
\begin{tablenotes}[flushleft]\footnotesize
\item[] \textit{Note:} County-Mondrian calibrates within each county that has at least 1,000 training records and pools the rest into one statewide cell. The unweighted and County-Mondrian rows use 593,685 records in 2024 and 577,726 in 2025. For the density-ratio row, the 2022--2023 calibration fold (1,184,540 records) is split at random into a reference half, used with a target-year fit half to estimate $dP_{\mathrm{target},X}/dP_{\mathrm{calibration},X}$, and a scored half (592,270 records) whose scores are weighted. Coverage is evaluated on the other target-year half (296,843 and 288,863 records); same-row unweighted coverage is 0.8990 and 0.9006. The mismatch statistic is the classifier lower bound of Theorem~\ref{thm:transfer}(ii); it is not a bound on the population transfer penalty.
\end{tablenotes}
\end{threeparttable}
\end{table}

Figure~\ref{fig:county} shows the county stress test, using the S3 design and the gradient
boosting anchor of Section~\ref{sec:splitting}. Each held-out county is evaluated with one
statewide threshold calibrated on the other 200 counties, so this test uses marginal
calibration and does not isolate county shift from the stratum deficit shown above. The
failure is widespread. Pooled over all 750{,}655 held-out records coverage is 0.880, but
the median county is covered at 0.663, 42 of the 54 held-out counties fall below 0.80,
including 18 of the 29 counties with at least 2{,}000 test records, and raw coverage ranges
from 0.372 to 0.936. The lowest values occur in small rural counties, such as Roberts
County (0.372 on 183 test records) and Kinney County (0.450 on 258), while the largest
counties are close to or above nominal, such as Dallas (0.917 on 371{,}868). For the 29
larger counties, density-ratio weighting gives coverage from 0.872 to 0.922 at a mean width
about 0.43 categories larger than the unweighted sets.

\begin{figure}[pos=htbp]
\centering
\includegraphics[width=\linewidth,height=.58\textheight,keepaspectratio]{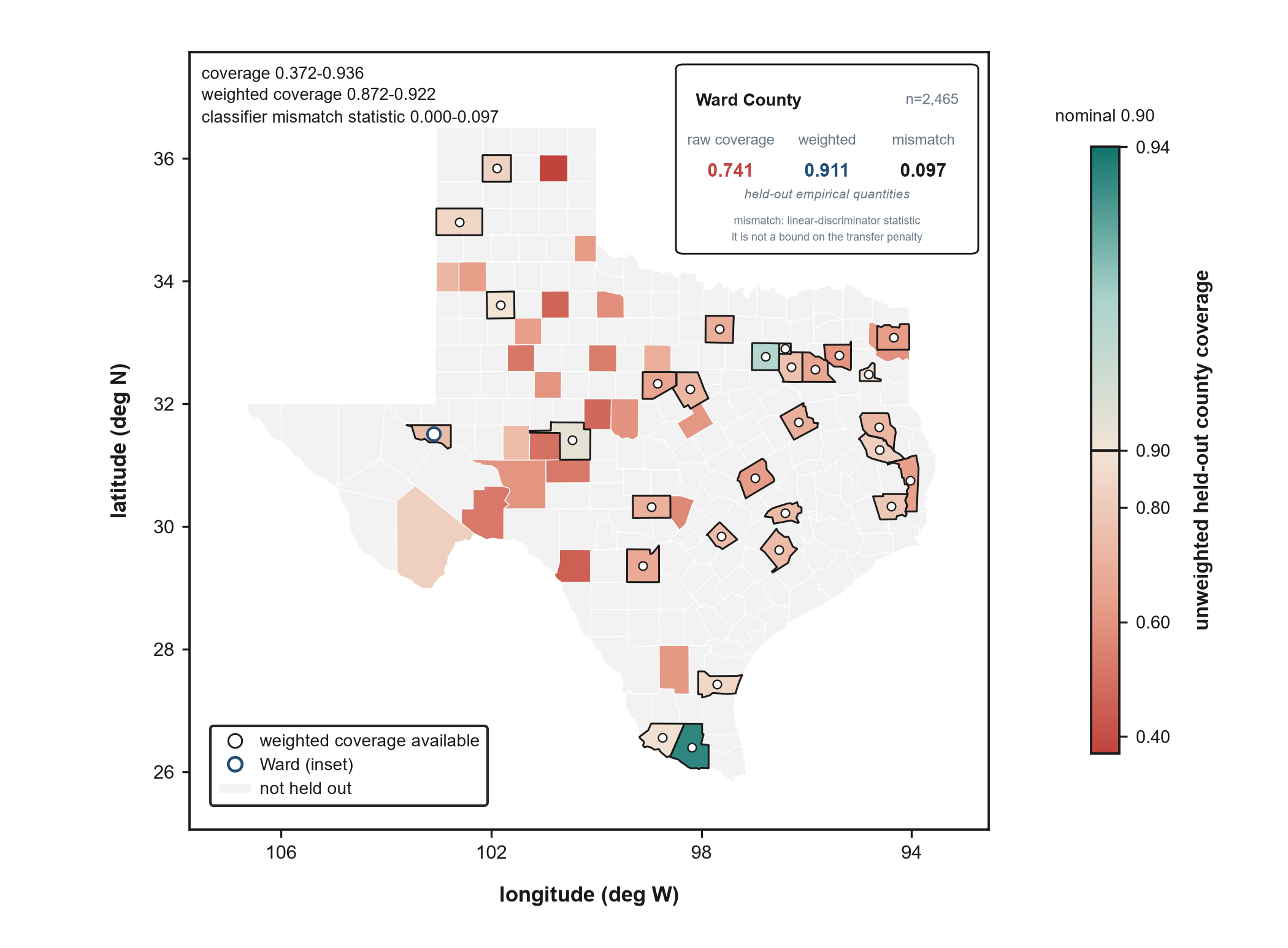}
\caption{Spatial completed-record stress display. Each of the 54 held-out counties is filled
by its raw unweighted coverage under one frozen statewide threshold (0.372 to 0.936 at
nominal 0.90); gray counties are used for training and calibration. The 29 ringed counties,
those with at least 2{,}000 test records, also carry density-ratio weighted coverage (0.872
to 0.922) and a classifier mismatch statistic (0.000 to 0.097); the raw range covers all 54
counties and the weighted range only these 29. The inset shows Ward County (raw 0.741,
weighted 0.911).}
\label{fig:county}
\end{figure}

\subsection{Completed-record monitoring audit}
\label{sec:e7fig}

Figure~\ref{fig:dashboard} summarizes two years of held-out records. Monthly coverage stays
within 0.009 of nominal with a two-year mean of 0.900. Several months nevertheless lie
outside the $\pm2$ standard-error band, and the quantiles of the conformal p-values depart
from uniformity by up to 0.021, more than sampling error alone would explain at these
monthly sizes. The stream is therefore not exchangeable with the calibration data, even
though average coverage stays near nominal.

\begin{figure}[pos=htbp]
\centering
\includegraphics[width=\linewidth,height=.58\textheight,keepaspectratio]{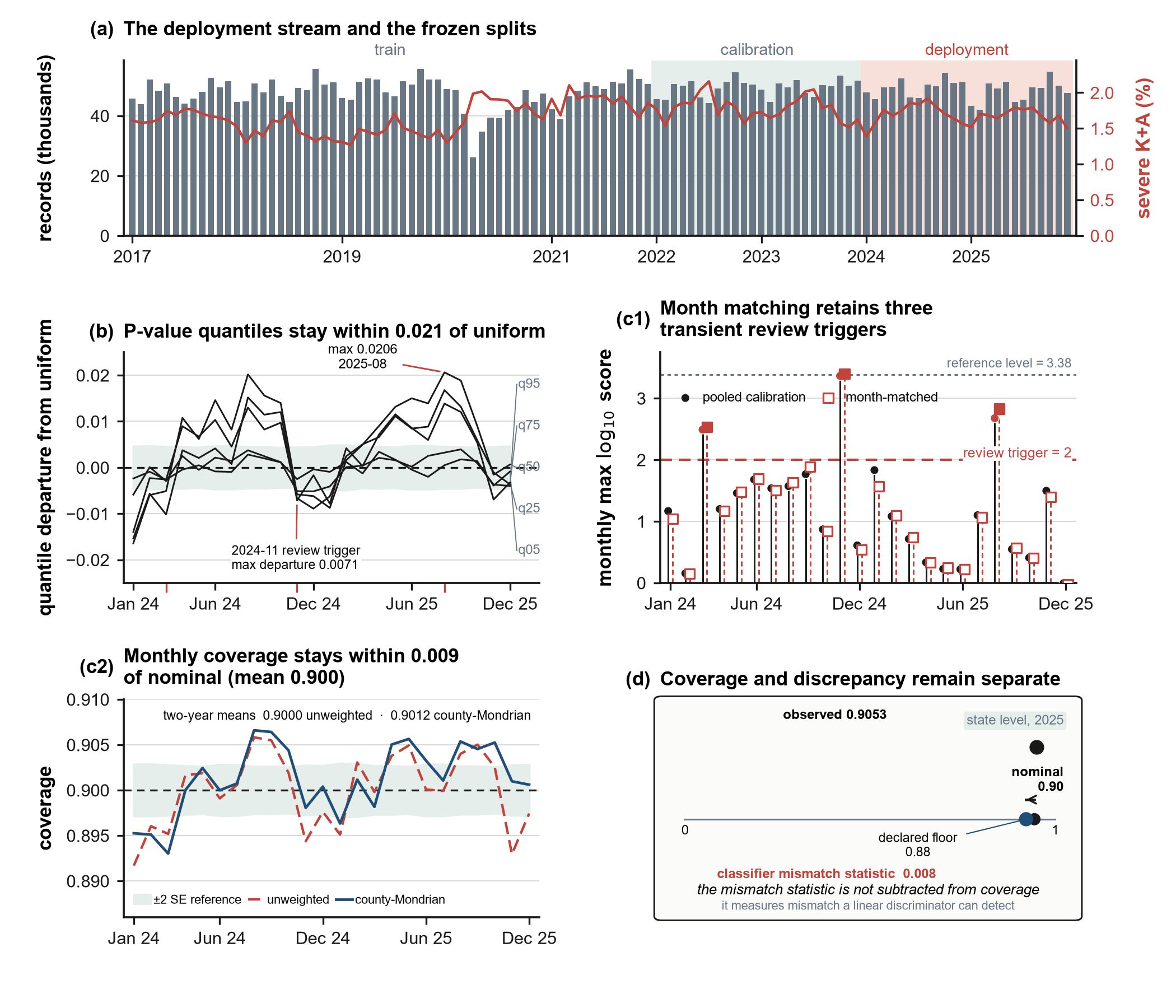}
\caption{Completed-record monitoring visualization. (a) Monthly volume and severe share over
2017--2025 with the frozen temporal splits. (b) Monthly quantiles of the conformal p-values
and their departure from uniformity ($\pm 2$ SE shading). (c1) Monthly maximum of the
frozen-reference review score of Algorithm~\ref{alg:monitor}, restarted each calendar month
with $\varepsilon=0.5$; the dashed line is the review threshold $\log_{10}b=2$, and the dotted
line marks $\log_{10}(24/0.01)=3.38$, the level that would control a 1\% family-wise false
alarm rate over 24 months if the score were a valid test martingale, shown for reference
only. (c2) Monthly coverage under the unweighted and County-Mondrian thresholds. (d) The
2025 state-level panel reports the density-ratio weighted coverage (0.9053,
Table~\ref{tbl:temporal}), the 0.88 declared-map floor, and the classifier mismatch
statistic (0.008) as separate quantities; the statistic is not subtracted from coverage.}
\label{fig:dashboard}
\end{figure}

The review score of Algorithm~\ref{alg:monitor}, restarted each month with $\varepsilon=0.5$
and threshold $b=100$, exceeds its threshold in three isolated months. Seasonality is a
natural explanation, since the calibration pools all months. After the triggers fired, we
re-tested them against a month-matched calibration drawn from the same calendar months of
the calibration years. This follow-up check was not prespecified and is exploratory. The
triggers exceed the threshold again under it, so pooled seasonal composition does not
explain them. Coverage in those months deviates from nominal by at most 0.006. The score has
no calibrated false-alarm rate, so the three months are flags for review, not detected
shifts in a statistical sense.

\subsection{Completed-record endpoint and risk audit}
\label{sec:e7}

The endpoint audit in Table~\ref{tab:endpoint} uses police-reported KABCO on all
810{,}082 test records of the primary sample. A record enters the retrospective review
queue when its upper endpoint is A or K. Workload and joint omission use all test records;
conditional omission uses the 13{,}641 observed A-or-K records. The two denominators are
not interchangeable.

\begin{table}[pos=htbp]
\caption{Completed-record endpoint audit on 810,082 test records.}
\label{tab:endpoint}
\centering\footnotesize
\begin{threeparttable}
\begin{tabular*}{\tblwidth}{@{\extracolsep{\fill}}llrrrrr@{}}
\toprule
Base model & Interval & Mean width & Workload & Severe omissions & Conditional omission & Joint omission \\
\midrule
Gradient boosting & Reported-label & 3.36 & 0.4122 & 1,956 & 0.1434 & 0.00241 \\
 & Expanded sensitivity & 4.36 & 0.9591 & 29 & 0.0021 & 0.00004 \\
Ordered logit & Reported-label & 1.54 & 0.0200 & 8,907 & 0.6530 & 0.01100 \\
 & Expanded sensitivity & 2.55 & 0.0539 & 7,192 & 0.5272 & 0.00888 \\
DLCON & Reported-label & 1.52 & 0.0188 & 8,828 & 0.6472 & 0.01090 \\
 & Expanded sensitivity & 2.52 & 0.0643 & 6,397 & 0.4690 & 0.00790 \\
\bottomrule
\end{tabular*}
\begin{tablenotes}[flushleft]\footnotesize
\item[] \textit{Note:} Intervals use calibration within the four final cells and the observed police-reported KABCO outcome. A record enters the retrospective review queue when the interval upper endpoint is A or K. Workload and joint omission use all test records; conditional omission uses the 13,641 observed A or K test outcomes.
\end{tablenotes}
\end{threeparttable}
\end{table}

The endpoint rule behaves very differently across base models. Under the wide gradient
boosting sets it routes 41.2\% of records to review and misses 14.3\% of observed A-or-K
outcomes; the expanded sets route 95.9\% of records, which is not a usable screen. Under the
narrow ordered logit and DLCON sets the rule routes only about 2\% of records but misses
about 65\% of observed A-or-K outcomes, and the expanded sets still miss about half of
them. A 90\% coverage guarantee over all records does not make the upper endpoint a good
screen for severe injury, because most covered records are uninjured. Severe-outcome
screening needs the risk-control layer below or a dedicated rule, and its operating point
has to be chosen with the omission rate in view.

Figure~\ref{fig:risk} reports the risk analysis for the gradient boosting anchor. Joint
fatal omission uses the full test-fold population as its denominator, and the conditional
rate among the 2{,}475 fatal records follows from \eqref{eq:fatal-conditional}. The
distinction matters in practice. The four budgets of the initial design ($\beta$ from 0.01
to 0.10) all exceed the fatal share of the test fold (0.0031), so the fatal constraint is
slack there and the calibrated threshold falls to zero; 1{,}544 fatal records, or 62.4\% of
them, fall outside the risk-calibrated set, although the joint rate of 0.0019 meets the
bound. The same holds for the cost loss, whose expected normalized value is about 0.005.
The informative range therefore lies below these budgets, and Figure~\ref{fig:risk} extends
the grid down to $\beta=0.0005$. There the fatal constraint binds, the joint rate is
0.000495, and the conditional rate falls to 16.2\%. An agency that needs protection for
fatal records must choose $\beta$ well below the fatal share and should report the
conditional rate next to the joint bound. Panel (c) uses a distinct escalation rule that
counts records whose risk-calibrated lower endpoint is B, A, or K.

\begin{figure}[pos=htbp]
\centering
\includegraphics[width=\linewidth,height=.58\textheight,keepaspectratio]{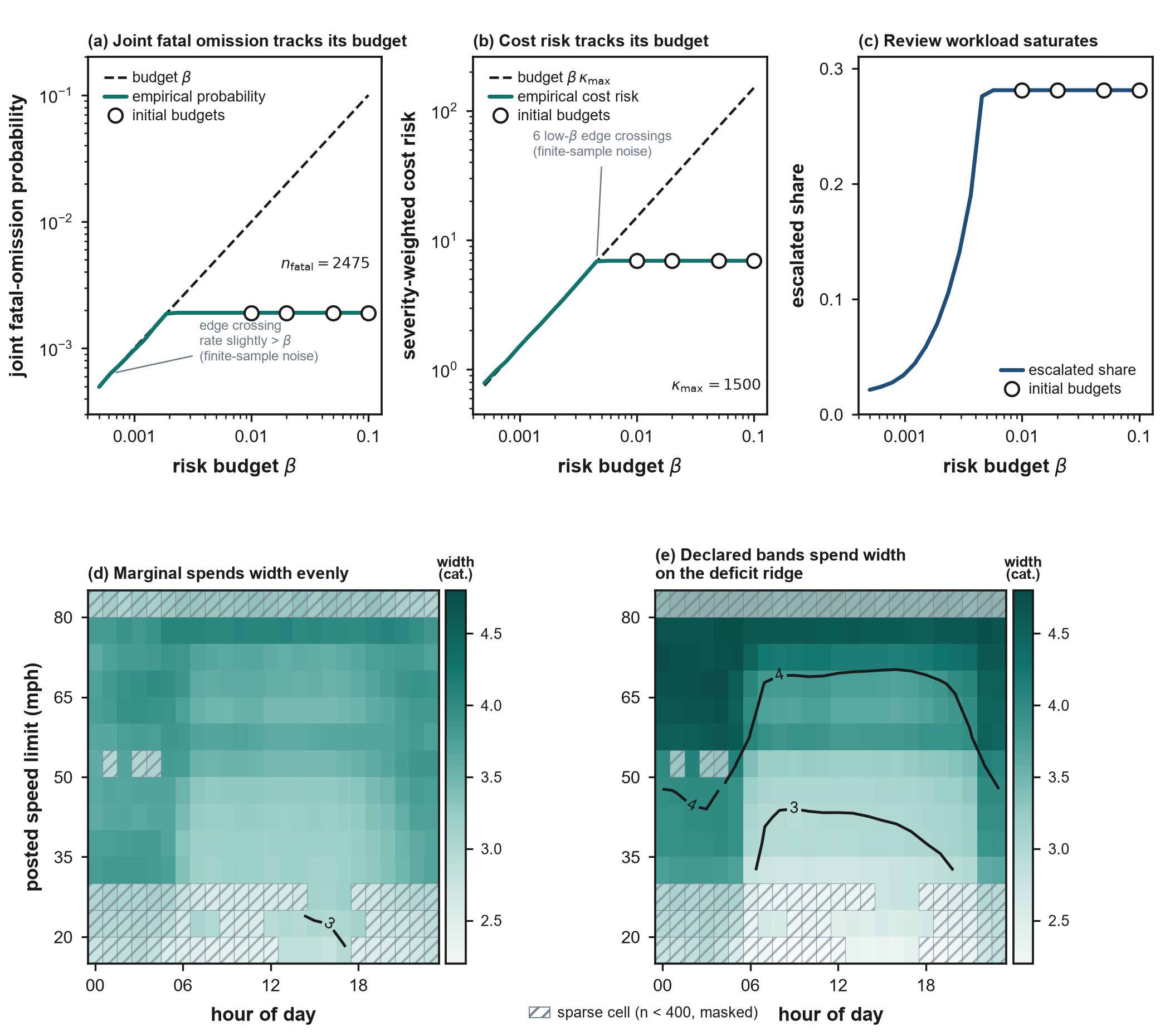}
\caption{Completed-record operating analysis for the gradient boosting anchor. (a) Joint
fatal-omission probability over all 810{,}082 test records, with 2{,}475 fatal records,
against the nominal bound. (b) Cost-weighted false-omission risk against its nominal bound
over a log-spaced $\beta$ grid, with the four budgets of the initial design (0.01, 0.02,
0.05, 0.10) marked; both constraints are slack above about 0.005. (c) The share with a
risk-calibrated lower endpoint of B, A, or K, saturating at 0.281 once the constraint is
slack. (d,e) Mean set width over the plane of Figure~\ref{fig:landscape} under marginal and
speed-hour calibration, where certification moves width into the cells in which
Figure~\ref{fig:repair}(a) showed coverage falling short. The curve values for panels
(a)--(c) are included in the reproducibility supplement.}
\label{fig:risk}
\end{figure}

On this single test-set realization the empirical risk exceeds its bound at a few budgets.
The cost bound is exceeded at the five smallest budgets, by at most 4.8\% relative, and at
one budget near 0.0046, by 0.4\%; the fatal bound is exceeded at one budget, by 1\%. The
guarantees control an expectation over calibration draws rather than each realization. In
100 repeated splits of the kind described at the start of this section, the mean joint fatal
omission matches its bound at every budget. It exceeds the bound by at most 0.000003 (0.5\%
relative, less than one Monte Carlo standard error) and lies below it elsewhere. The
single-split excursions are therefore consistent with sampling variation around a valid
guarantee.

Panels (d) and (e) show the width cost of the guarantee across the operating plane.
Certification reallocates width toward cells with coverage deficits. In cells where marginal
coverage fell below 0.80 the speed-hour partition spends an additional 0.61 categories, and
in cells where marginal coverage was already adequate it spends 0.18 categories fewer.

\subsection{Composition and cost}
\label{sec:e8e9}

The composition audit is a semi-synthetic sensitivity calculation. Observed CRIS KABCO is
the proxy outcome on both folds. Unlike the sensitivity grid of Section~\ref{sec:e4},
synthetic reported labels are generated in the calibration fold only, and test coverage is
evaluated against test-fold KABCO. Table~\ref{tbl:compose} reports one row for each of the
seven final cells of the product partition in Section~\ref{sec:splitting}, for the gradient
boosting anchor and for DLCON, a narrow model. Sets are expanded with the category-dependent
map of Remark~\ref{rem:catdep}, for which the declared $\delta_a=0.02$ is supported in every
cell. Weighting the cited beyond-band rates by each cell's reported composition gives
$\delta$ of 0.006 (baseline), 0.007 (rural high-speed), 0.009 (unrestrained), and 0.011
(motorcycle). Under the constant one-category band the same calculation gives 0.103 for
motorcyclists and 0.069 for unrestrained drivers, because the band then leaves two-level
over-reporting from A outside; the declared 0.02 is not supported there, and the floor would
fall to about 0.80 and 0.83. Both maps give the same composed sets to four decimals.

\begin{table}[pos=htbp]
\caption{Semi-synthetic composition audit by final calibration cell for a wide and a narrow base model ($\alpha=0.10$, $\delta=0.02$).}
\label{tbl:compose}
\centering\footnotesize
\begin{threeparttable}
\begin{tabular*}{\tblwidth}{@{\extracolsep{\fill}}lrrrrrrr@{}}
\toprule
 & & \multicolumn{3}{c}{Gradient boosting (balanced)} & \multicolumn{3}{c}{DLCON} \\
\cmidrule(lr){3-5}\cmidrule(lr){6-8}
Final calibration cell & $n_{\rm cal}$ & Coverage & Width & Full (\%) & Coverage & Width & Full (\%) \\
\midrule
Baseline, rural & 106,245 & 0.9950 & 4.32 & 39.7 & 0.9834 & 2.65 & 0.3 \\
Baseline, urban & 591,520 & 0.9976 & 4.24 & 29.4 & 0.9851 & 2.71 & 0.0 \\
Motorcycle, rural & 2,787 & 1.0000 & 5.00 & 100.0 & 0.9986 & 4.97 & 97.5 \\
Motorcycle, urban & 5,599 & 1.0000 & 5.00 & 100.0 & 0.9995 & 4.94 & 93.6 \\
Rural high-speed, rural & 92,187 & 0.9963 & 4.99 & 99.3 & 0.9759 & 3.13 & 0.6 \\
Unrestrained, rural & 4,673 & 0.9998 & 5.00 & 99.9 & 0.9941 & 4.77 & 82.6 \\
Unrestrained, urban & 5,583 & 0.9998 & 5.00 & 99.9 & 0.9855 & 4.35 & 49.9 \\
Overall & 808,594 & 0.9971 & 4.35 & 40.3 & 0.9840 & 2.80 & 2.0 \\
\bottomrule
\end{tabular*}
\begin{tablenotes}[flushleft]\footnotesize
\item[] \textit{Note:} Rows are the seven non-empty cells of the product of the four declared strata and the rural or urban indicator; no cell required rollup. Sets are expanded with the category-dependent map of Remark~\ref{rem:catdep}; the constant one-category band gives the same values to four decimals. Observed CRIS KABCO is the proxy outcome; synthetic reported labels are generated on calibration rows only, and test coverage is evaluated against test-fold KABCO. Full (\%) is the share of records whose set spans all five categories. Every cell meets the declared 0.88 floor.
\end{tablenotes}
\end{threeparttable}
\end{table}

The table separates what the base model contributes from what the strata contribute. With
DLCON, the baseline and rural high-speed cells receive composed sets of 2.65 to 3.13
categories, and fewer than 1\% of those records receive the full scale. With the gradient
boosting anchor the same cells receive 4.24 to 4.99 categories, and the rural high-speed cell
is almost entirely full scale. On the motorcyclist and unrestrained cells both models give
wide composed sets, with 94 to 100\% of motorcyclists and 50 to 100\% of unrestrained drivers
at full scale. Near-vacuous composed sets on the rural high-speed cell are therefore a
property of the anchor model, whereas on the motorcyclist and unrestrained cells they occur
for both models, because the base sets there are already wide (3.55 to 4.58 categories in
Table~\ref{tab:maincells}) and a unit expansion in each direction reaches the full scale.
Coverage of 1.000 in those cells reflects slack in the expansion rather than a useful
trade-off. Narrower composed sets would need a smaller declared band where the linkage
evidence permits it (Remark~\ref{rem:bandminimal}), a lower nominal level, or a base model
that separates these strata better.

How much of the width follows from a declared reporting channel is a quantitative question.
For any fixed channel and latent composition, Theorem~\ref{thm:channel} lower-bounds the
mean width of every valid reported-label predictor on stratum $A$. The empirical analysis
evaluates this quantity on the finite channel grid defined in
Remark~\ref{rem:channelscope}. For each fixed scenario, the latent composition varies over
the simplex while reproducing each reported KABCO share within its stated rounding
interval. The exact floor is obtained from 25 linear programs that enumerate the possible
fractional-knapsack boundary cells. Table~\ref{tbl:channelfloor} reports, for each stratum and
selection index $\varrho$, the smallest exact floor across the feasible scenarios and the number
of feasible scenarios, and Figure~\ref{fig:channelfloor} shows the corresponding conditional
decomposition. These values are exact for the declared finite scenario set. They are not
lower endpoints for a continuous class of channels or selection functions.

\begin{table}[pos=htbp]
\centering
\caption{Scenario-implied channel floor $\mathcal N^{\ast}_A$ against achieved width by
stratum at $\alpha=0.10$.}
\label{tbl:channelfloor}
\footnotesize
\begin{threeparttable}
\begin{tabular}{lcccc}
\toprule
Stratum & Achieved base width & $\varrho=1$ & $\varrho=2$ & $\varrho=5$ \\
\midrule
Baseline          & $1.40$ & $0.97$ (131) & $0.93$ (246) & $0.91$ (270) \\
Rural high-speed  & $1.95$ & $1.00$ (168) & $0.95$ (270) & $0.92$ (540) \\
Unrestrained      & $3.58$ & $1.36$ (634) & $1.20$ (810) & $0.96$ (705) \\
Motorcyclist      & $3.55$ & $1.64$ (755) & $1.48$ (735) & $1.03$ (705) \\
\bottomrule
\end{tabular}
\begin{tablenotes}[flushleft]\footnotesize
\item[] \textit{Note:} Each entry is the exact minimum floor over the feasible members of the
declared grid of 900 channel scenarios, with the number of feasible scenarios in
parentheses, at three reported-label selection indices $\varrho$; $\varrho=1$ represents
selection invariance. The achieved width is the smallest certified width among the seven
models (Table~\ref{tab:maincells}). Floors at or below one are vacuous for non-empty sets.
\end{tablenotes}
\end{threeparttable}
\end{table}

\begin{figure}[pos=htbp]
\centering
\includegraphics[width=.92\linewidth,height=.58\textheight,keepaspectratio]{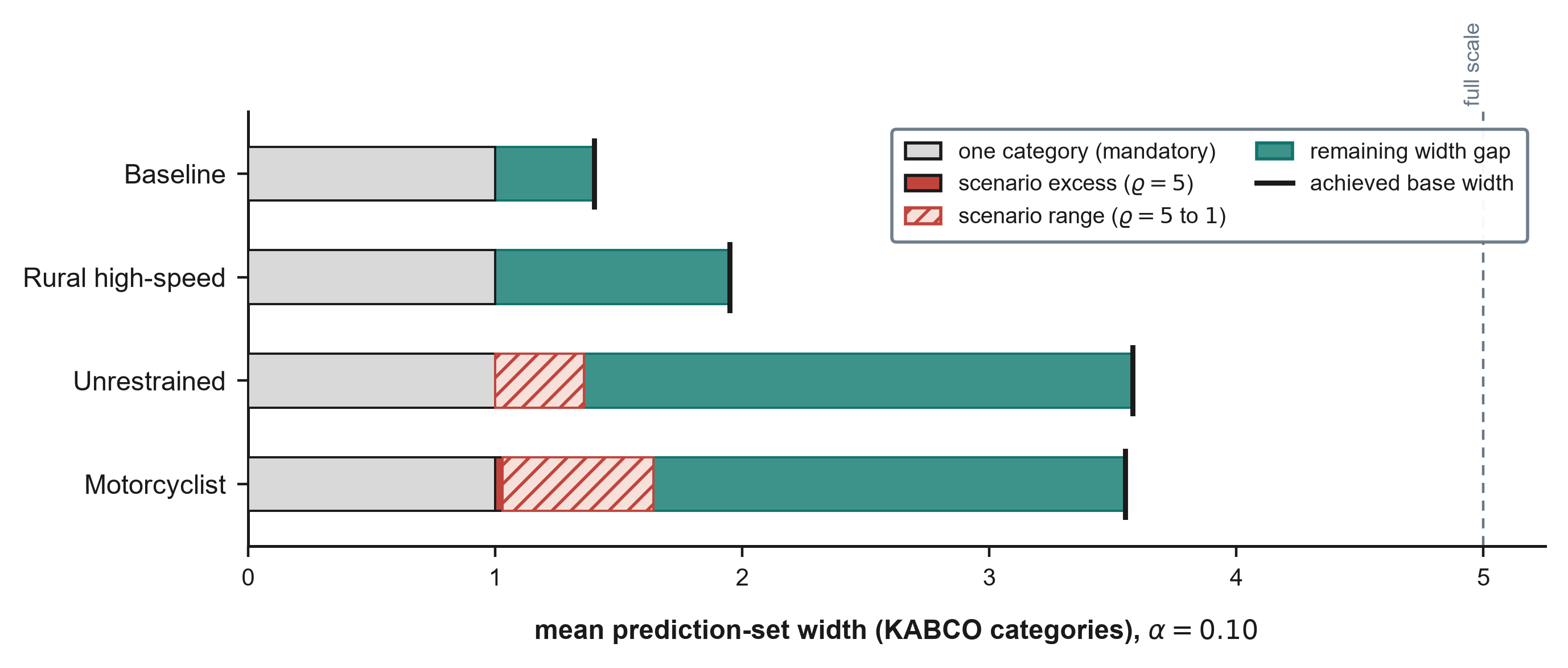}
\caption{Conditional width decomposition for the declared finite reporting-channel scenarios at
$\alpha=0.10$. Gray is the single category carried by any nonempty set. Coral shows the
scenario-implied excess $\max(0,\mathcal N^{\ast}_A-1)$, with the solid segment for
$\varrho=5$ and the hatched segment extending to the selection-invariance scenario. Teal is
the remaining gap to the smallest achieved base width. Each decomposition is conditional on
its declared channel and latent composition. Each floor is the exact minimum over feasible
members of the declared finite grid and is not a continuous-channel endpoint.}
\label{fig:channelfloor}
\end{figure}

The first reading is a scenario specification check. Every achieved width lies above the
smallest floor in its row, so at least one feasible scenario survives in every stratum.
Because the table reports minima, this comparison cannot show that every scenario survives;
scenarios with higher floors than those reported could still be refuted, and a refutation
would also need to allow for sampling error in the achieved width. Floors at or below one,
which include every baseline and rural high-speed entry, are vacuous for non-empty sets.
Achieved motorcyclist coverage across the seven base models runs from 0.8984 to 0.9015, and
recomputing the floor at the realized level instead of $1-\alpha$ changes it by about
$10^{-3}$ of a category.

The second reading is the conditional decomposition in
Corollary~\ref{cor:channelvacuity}. Under a retained scenario, one category is required by
any nonempty set, $\max(0,\mathcal N^{\ast}_A-1)$ is the scenario-implied channel excess, and
$w^{\mathrm{ach}}_A-\max(1,\mathcal N^{\ast}_A)$ is an upper bound on the width that a better
model or richer record could recover. On the motorcyclist stratum, whose achieved width is
about $3.55$, the scenario-implied excess is $0.64$ under selection invariance and $0.03$
under the evaluated fivefold-selection scenario, so the declared channels explain at most
0.64 of the roughly 2.5 categories above one. The corresponding conditional upper bound on
recoverable width ranges from $1.91$ to $2.52$. The unrestrained selection-invariance
scenario implies an excess of $0.36$. These decompositions require the fixed channel,
latent composition, and within-stratum nondifferentiality stated by
Theorem~\ref{thm:channel}; the last is doubtful for the unrestrained stratum, whose defining
field is coded by the same officer as the outcome (Section~\ref{sec:features}).

The law-dependent floor of Theorem~\ref{thm:levelset} can also be estimated by plugging in
each model's fitted probabilities. At $\alpha=0.10$ the plug-in level-set size ranges from
1.25 to 1.51 categories in the baseline cell and from 3.38 to 3.76 in the motorcyclist cell
across the six narrow models. These plug-in values depend on the fitted model and are not
certificates (Remark~\ref{rem:levelsetscope}). Along the informativeness frontier of
Corollary~\ref{cor:frontier}, computed for the gradient boosting anchor, only the baseline cell reaches an even chance of a composed
set that excludes the fatal category, at $\alpha=0.05$; no safety cell reaches it for any
$\alpha$ up to 0.20 under either band.

The width contribution from finalized-record features is also measurable. Configuration T
adds ejection, airbag deployment, vehicle damage severity, the first harmful event, and the
manner of collision. Re-running class-conditional calibration narrows the intervals by a
stratum-dependent amount while preserving nominal cell coverage. On the unrestrained
stratum, these fields reduce ordered-logit mean width from 3.77 to 3.19 because ejection
separates severity strongly. On the motorcyclist stratum, mean width remains near 3.6 because
enclosed-vehicle occupant fields provide little added separation. These fields are coded by
the same officer who assigns KABCO, so part of the gain may reflect shared reporting rather
than injury information. The contrast suggests helmet use as a field worth adding in future
analyses of motorcycle injury records; it was not tested here.

Computational cost is the last question the experiments address. The certification step is
a sort and a quantile lookup, so it is $O(n\log n)$ with a small constant. Measured on
synthetic conditional distributions at increasing scale, on a 24-core desktop processor
(Intel Core i9-13900K, 3.0\,GHz base, 32\,GB) with the machine otherwise idle, this step
costs \RUNTIMEONEM{} at one million rows and \RUNTIMEFIVEM{} at five million, and the
measured times scale approximately linearly from $10^5$ to $5\times10^6$ rows, with an
endpoint departure of about 9\% from the linear projection. Base models take seconds to
24 minutes to fit here (Table~\ref{tbl:models}), and the foundation model's prediction pass
takes about 25 minutes. The figure of about one second covers the threshold step only; the
density-ratio fit, the risk-control search, monitoring, and the channel-floor programs add
further cost, each of which is small relative to model fitting.

\section{Discussion}
\label{sec:discussion}

The Texas CRIS evaluation shows why an aggregate coverage result is not enough for
crash-injury analysis. Every base model reaches about 0.90 overall under one pooled
threshold, yet every model covers motorcyclists or unrestrained drivers at 0.868 or lower,
and the class-balanced gradient boosting model covers unrestrained drivers at 0.374. These
groups carry a large share of the severe outcomes. Calibration within prespecified safety
cells applies an established groupwise conformal tool and restores coverage near 0.90 in
every cell for every model, which the alternative conformal methods in
Table~\ref{tab:baselines} do not achieve. The audit therefore moves from a single statewide
average to coverage and width results for each safety population.

The seven-model comparison separates model quality from certificate validity. Coverage is
set by the calibration step, while the fitted probability distribution determines most of
the efficiency. After certification six of the seven models give mean widths between 1.52
and 1.55 categories and similar ranked probability scores, and a classical ordered logit is
within 0.03 categories of the narrowest model and narrower than the tabular foundation
model, most clearly for motorcyclists and unrestrained drivers. On these records, then, an
agency does not need an opaque model to obtain a guarantee, and it can compare injury models
on width after applying a common coverage requirement. The comparison has limits, because the models
were fitted with different subsample sizes, and TabPFN used a fixed stratified context, so
the ranking among the six narrow models is not settled here.

The guarantee is informative for most drivers and much less so where it matters most.
Certified sets for motorcyclists and unrestrained drivers span 3.6 to 4.6 of the five
categories, and after the declared compatibility expansion most of them cover the whole
scale for both the wide and the narrow base model. Contiguous sets still provide lower and
upper severity endpoints, and the expansion turns a statement about police-reported KABCO
into a sensitivity statement about the underlying injury. But near-full-scale sets for these
groups mean that the recorded fields separate their severity poorly. The channel-floor
analysis shows that a declared reporting process explains only a small part of this width,
so the larger part is a feature-information problem rather than a calibration problem.

The empirical design supports a reproducible workflow for retrospective crash-record
audits. The agency first defines the record version and the sampled-driver unit, then
freezes the training, calibration, and test split at the crash level. It next declares the
reporting configuration and the final-cell hierarchy before any calibration outcome is
used. The resulting certificate reports coverage, simultaneous uncertainty intervals, width,
and full-scale frequency for every final cell. The choice of cells determines which safety
populations receive a separate guarantee. A generic learned partition restores coverage in
its own cells but does not resolve the speed-hour ridge that it does not encode, and the
speed-hour partition repairs the ridge while moving part of the deficit to its band edges.
A safety population that needs separate protection must therefore appear in the frozen
partition or in its rollup hierarchy, and cell boundaries deserve the same scrutiny as the
model.

The endpoint and risk audits give three practical cautions for agencies that want to use
the sets for review. First, a coverage guarantee over all records does not make the upper
endpoint a good severe-injury screen. Under narrow models the ``upper endpoint A or K'' rule
routes about 2\% of records to review but misses about 65\% of observed A-or-K outcomes;
under the wide anchor it routes 41\% and misses 14\%, and the expanded sets route 96\% of
records. Second, the fatal-omission bound controls a joint probability. Budgets above the
fatal share of the population (0.0031 here) leave the constraint slack, and at the planned
budgets 62\% of fatal records fell outside the risk-calibrated sets. A useful fatal budget
must lie well below the fatal share, and the conditional omission rate should be reported
next to the joint bound. Third, the risk-workload curve in Figure~\ref{fig:risk} lets an
agency choose an operating point and keep the chosen threshold as an auditable record, but
that choice should be made with the omission rate in view.

The compatibility map is a declared sensitivity input. External KABCO linkage evidence
supplies plausible reporting patterns \citep{burdett2015,burdett2022,taylor2024}, and the
semi-synthetic grid shows how the declared inputs change the coverage floor and set width
under a known injected process; it is not evidence about the Texas reporting process. The
per-cell arithmetic in Section~\ref{sec:e8e9} also shows that the declared beyond-band mass
depends on the map. Under the category-dependent map, $\delta=0.02$ is supported in every
cell, whereas under the constant band it is not for motorcyclists and unrestrained drivers.
The map therefore has to be declared per cell with the cell's reported composition in view.

The temporal and county analyses examine how a frozen certificate behaves when the record
distribution changes. In the temporal stress test, the eight final-cell estimates range from
0.8919 to 0.9094, and three intervals lie entirely above 0.90; the monthly series and the
p-value quantiles also depart from the calibration law by more than sampling error, even
though average coverage stays near nominal. With the ratio fit on a separate half of the
calibration fold, density-ratio weighting gives 0.9002 and 0.9053, close to the unweighted
coverage in these years. The county analysis shows much greater variation. A single
statewide threshold covers 42 of the 54 held-out counties at less than 0.80, including 18 of
the 29 larger ones, and the lowest values occur in small rural counties. For the larger
counties density-ratio weighting brings coverage to 0.872 to 0.922 at about 0.43 more
categories of width. The small rural counties, which fail most, have too few records for
either weighting or a separate calibration cell. A statewide threshold should therefore not
be applied to them without local calibration data, for example by pooling several years or
neighboring counties into a declared cell.

The monitoring score flagged three months for review, and the flags survived a post hoc
month-matched check. Because the score has no calibrated false-alarm rate, these flags start
review and recalibration rather than establish a shift. Together with the coverage series,
they give an auditable record of how the frozen certificate behaved after the split.

General-purpose conformal libraries provide a strong marginal-coverage baseline.
Table~\ref{tbl:generic} compares CHOIR with MAPIE and crepes on the synthetic demonstration
data shipped with the package; all three reach similar marginal coverage and set size, and
the CHOIR value of 0.895 lies within one binomial standard error of the target. Some of these
libraries also support Mondrian calibration, so the difference that matters is not grouping
as such but the combination of ordinal contiguous sets, declared strata frozen on training
data, the true-label sensitivity transfer, and fatal-omission control, each tied to its
stated assumption. The comparison on the real data is Table~\ref{tab:baselines}.

\begin{table}[pos=htbp]
\caption{Guarantees available from generic conformal tooling on synthetic demonstration data.}
\label{tbl:generic}
\centering\footnotesize
\begin{threeparttable}
\begin{tabular*}{\tblwidth}{@{\extracolsep{\fill}}lccccc@{}}
\toprule
Method & Coverage & Set size & Contiguous & True-label & Fatal-omission \\
\midrule
CHOIR  & 0.895 & 2.19 & by construction & declared map & fatal premise \\
MAPIE  & 0.899 & 2.22 & 99\% & no & no \\
crepes & 0.899 & 2.22 & 99\% & no & no \\
\bottomrule
\end{tabular*}
\begin{tablenotes}[flushleft]\footnotesize
\item[] \textit{Note:} The comparison uses the synthetic demonstration data shipped with the package; the aggregate output is in the \texttt{benchmarks} folder of the public repository. The binomial standard error is about 0.007 at this sample size. The last three columns describe properties of each method by design rather than tested results. True-label transfer requires the declared compatibility relation in Assumption~\ref{ass:noise}. Fatal-omission control requires $Y=5\Rightarrow\yt=5$ as stated in Corollary~\ref{cor:fatal}. The Texas CRIS evaluation in Section~\ref{sec:results} uses the 4.04 million crashes of the primary sample.
\end{tablenotes}
\end{threeparttable}
\end{table}

The contribution is a certification framework that combines established conformal
components with the structure of crash-injury records. Split conformal calibration, Mondrian
conditioning, weighted conformal prediction, and conformal risk control keep their
established attribution \citep{vovk2005,lei2018,tibshirani2019,angelopoulos2024}. The
specialized and derived results add ordinal reporting-map transfer and its sharpness, the
reporting-channel width floor, heterogeneity-specific efficiency, and severity-weighted risk
under the declared reporting structure.

For transportation safety practice, the results support three conclusions. Aggregate
predictive performance should be reported together with coverage within prespecified safety
strata, because the most safety-relevant groups can behave differently from the statewide
average. Uncertainty methods should report their information cost through set width and
full-scale frequency, and omission rates for severe outcomes with the correct denominator.
Transfer to other counties and years should be checked with local data rather than assumed.
The extended record configuration also suggests where information is missing. Ejection,
airbag deployment, vehicle damage severity, first harmful event, and manner of collision
reduce the ordered-logit width for unrestrained drivers from 3.77 to 3.19 categories, though
these fields share the officer's reporting process, and motorcyclist width stays near 3.6.
Helmet use and motorcycle-specific injury information are therefore worth testing in future
analyses. The certification step itself adds little computation relative to model fitting,
with measured times of \RUNTIMEONEM{} at one million rows and \RUNTIMEFIVEM{} at five
million rows, so repeated audits across model libraries and safety strata are practical.

\section{Conclusion}
\label{sec:conclusion}

This study presents CHOIR, a certification layer for police-reported KABCO injury
prediction at the level of the sampled driver. The layer uses established split-conformal
calibration, groupwise calibration, weighted conformal prediction, and conformal risk
control, and it adds the crash-specific structure needed to produce contiguous ordinal
intervals, use a declared reporting band for sensitivity analysis on medical injury, keep
observed-cell validity separate from transfer diagnostics, and bound injury-weighted
omission risk. The mathematical results characterize the certified statements and, under a
declared reporting channel and within-stratum nondifferential reporting, a lower bound on the
width of any valid predictor.

The Texas evaluation, with 4.04 million crashes and seven base models from the ordered
logit to a tabular foundation model, answers the three research questions. For RQ1, one
pooled threshold reaches about 0.90 overall for every model but covers motorcyclists or
unrestrained drivers at 0.868 or lower, whereas calibration within four prespecified strata
places all 28 model-by-stratum estimates between 0.8983 and 0.9074, with every
Bonferroni-adjusted 95\% interval containing 0.90 and stable results across 200 repeated
calibration splits. The guarantee costs width, since certified sets span 3.6 to 4.6 of five
categories for motorcyclists and unrestrained drivers, and once certified a classical
ordered logit is within 0.03 categories of the narrowest model. For RQ2, a declared ordinal
band transfers coverage to the medically referenced injury at a sharp additive cost, and the
declared reporting channels explain only a small part of the width on the high-risk strata,
so most of that width reflects limited information in the recorded fields. For RQ3, a
frozen statewide threshold covers most small rural held-out counties at less than 0.80, and
fatal-omission control is informative only when the risk budget lies below the fatal share
of the population.

This study has several limitations. The underlying medical injury is not observed in CRIS, so
the label-noise and composition analyses are semi-synthetic stress tests based on declared
reporting bands and injected noise, and the injected noise is milder than the disagreement
reported in linkage studies. The base models are fit once, so the repeated-split analysis measures variation across
calibration draws but not across training draws. The certified population is
crash-weighted drivers with a recorded KABCO category; passengers, non-motorists, and drivers
without a valid code are excluded. The unrestrained cell and the Configuration T fields are
coded by the same officer who records the outcome. The exact-K premise, the declared
compatibility maps, and within-stratum nondifferential reporting are assumptions that the
Texas records do not validate. Several downstream audits use a single wide base model, and
the base models were fitted with different training sizes. The temporal, county, and monitoring analyses do not
satisfy the exchangeability assumption and are descriptive, and the guarantee does not cover
use at the time of a crash report. The channel-floor values are exact over the declared finite
scenario set rather than over a continuous class of channels. Finally, the records represent
one state and one reporting regime. Within this scope, and under the stated exchangeability
assumption, the finite-sample coverage results hold.

Each limitation also points to an extension. A Texas linkage study connecting CRIS records
with trauma-registry outcomes could measure the reporting band directly and provide
category-dependent sensitivity inputs. Refitting the base models on repeated training draws would extend the repeated-split
analysis to model-fitting variation. Labeled samples from target counties could support local calibration and
two-sided transfer studies, which matter most for small rural counties. Features available at
the time of a crash report would allow a forward-use evaluation. Evaluation in other
jurisdictions and on other ordered safety outcomes recorded by human observers would test
how well the certification layer carries over beyond a single state and reporting regime.

\section*{Appendix}
\appendix
\setcounter{equation}{0}
\renewcommand{\theequation}{A.\arabic{equation}}
\section{Deferred proofs}
\label{app:proofs}

\begin{proof}[Proof of Theorem~\ref{thm:oracle}]
(i) Class-$c$ raw-set coverage at threshold $\lambda_c$ is $G_c(\lambda_c)$; by (R1), $G_c(\lambda_c)\ge 1-\alpha$ if and only if $\lambda_c\ge q_c$. Expected raw-set size $\E[|C_\lambda(X)|\mid\hat c=c]=\sum_k\Prob(s(X,k)\le\lambda\mid\hat c=c)$ is non-decreasing in $\lambda$. Thus $q_c$ is the componentwise-smallest valid threshold and is an expected-size minimizer. The statement does not concern the deployed fallback family.
(ii) $\hat q_c$ is the $\lceil(1-\alpha)(n_c+1)\rceil$-th order statistic of the within-class sample (i.i.d.\ in the i.i.d.\ case we invoke); convergence and the rate are classical \citep[Cor.~21.5]{vandervaart1998}. Size convergence follows from
\begin{equation}\label{eq:size-conv}
\left|\E[|C_{\hat q_c}(X)|\mid\hat c=c,\hat q_c]-\E[|C_{q_c}(X)|\mid\hat c=c]\right|
\le\sum_k\Prob\bigl(q_c\wedge\hat q_c<s(X,k)\le q_c\vee\hat q_c\,\bigm|\,\hat c=c,\hat q_c\bigr)\ \xrightarrow{p}\ 0
\end{equation}
by (R2), independence of a fresh $X$ from the calibration quantile, and $\hat q_c\xrightarrow{p}q_c$. Summing over the finite class set gives convergence in probability of the conditional-on-calibration expected raw-set size to $\mathcal S^\star$.
(iii) The marginal empirical quantile converges to $q_{\mathrm{mix}}$ by (R1); class-$c$ coverage of the fixed threshold $q_{\mathrm{mix}}$ is $G_c(q_{\mathrm{mix}})$; monotonicity gives the dichotomy, and $\sum_c p_cG_c(q_{\mathrm{mix}})=1-\alpha$ with all summands $\ge 1-\alpha$ forces equality iff all $q_c$ coincide.
\end{proof}

\begin{proof}[Proof of Theorem~\ref{thm:sharp}]
(i) Take $X$ degenerate and $m=1$, so that the band around $m$ is $[1,1+b^-]$ and $K\ge b^-+2$ leaves at least one category above it. For $0<\varepsilon<\min\{\alpha,\eta\}$ let $\yt$ put mass $1-\alpha+\varepsilon$ on $m$ and the remaining mass $\alpha-\varepsilon$ on $K$ (outside the band $[1,1+b^-]$), with $\hat F$ chosen so that $s(x,m)<s(x,k)$ for all $k\ne m$. Then $\Prob(S=s(x,m))=1-\alpha+\varepsilon$, so for large $n$ the calibration quantile $\hat q$ equals $s(x,m)$ with probability $\to 1$ and $\Ct=\{m\}$, whence $\Cop=[1,\,m+b^-]$. Couple $Y$ to $\yt$ so that $Y=\yt$ except on an event of probability $\delta$ contained in $\{\yt=m\}$, where $Y=m+b^-+1$ (possible since $\delta<1-\alpha<1-\alpha+\varepsilon$ and $m+b^-+1\le K$). N($b^-,b^+,\delta$) holds by construction. Coverage then decomposes, in that the unperturbed part of $\{\yt=m\}$ (mass $1-\alpha+\varepsilon-\delta$) is covered; the perturbed event (mass $\delta$) has $Y=m+b^-+1\notin\Cop$; and $\{\yt=K\}$ (mass $\alpha-\varepsilon$) has $Y=K\notin\Cop$. Hence $\Prob(Y\in\Cop)\to 1-\alpha+\varepsilon-\delta<1-\alpha-\delta+\eta$. Since $\eta>0$ is arbitrary, no constant larger than $1-\alpha-\delta$ can hold uniformly over laws satisfying the assumptions.
(ii) \emph{Upper trim.} Take the construction of (i) with $m=1$ and $\delta=0$, and set $Y=\yt+b^-$ a.s.\ on $\{\yt=m\}$ (maximal under-reporting, allowed by N since $Y-\yt=b^-$; feasible as $m+b^-\le K$). Whenever $\Ct=\{m\}$ the trimmed set is $[1,\ m+b^--1]$, and $Y=m+b^-\notin[1,m+b^--1]$. So on $\{\yt=m\}$ (probability $\to 1-\alpha$) the trimmed rule misses the truth; its true-label coverage is at most $\alpha+o(1)$.
\emph{Lower trim.} Symmetrically anchor at $m=K$ (feasible when $K\ge b^++1$, so that $m-b^+\ge 1$) and set $Y=\yt-b^+$ a.s.\ on $\{\yt=m\}$ (maximal over-reporting); the trimmed set $[m-b^++1,\ K]$ never contains $Y=m-b^+$. In both cases only the miss on $\{\yt=m\}$ is needed for the bound, so the placement of the residual mass is immaterial.
\end{proof}

\begin{proof}[Proof of Theorem~\ref{thm:transfer}]
(i) Two steps. \emph{Step 1 (exactness under the estimated tilt).} Condition on the independent calibration-reference and target ratio-fitting samples, so $\hat w$ is fixed and independent of the scored calibration fold. By weighted conformal \citep{tibshirani2019}, calibration with this weight function is valid if the test covariate is drawn from $Q_{\hat w}$ and the conditional law of $\yt\mid X$ matches calibration, giving $P_{X\sim Q_{\hat w}}(\yt\in C_{\hat q_{\hat w}}(X))\ge 1-\alpha$.
\emph{Step 2 (change of test measure).} The coverage event is $E=\{\yt_{n+1}\in C_{\hat q_{\hat w}}(X_{n+1})\}$. Conditionally on calibration data, its probability under the two test-point laws $P^\ast=P^\ast_X\otimes P_{\mathrm{cal}}(\yt\mid X)$ and $Q=Q_{\hat w}\otimes P_{\mathrm{cal}}(\yt\mid X)$ differs by at most $\dtv(P^\ast,Q)$; since the joints share the conditional,
\begin{equation}\label{eq:tv-identity}
\dtv(P^\ast,Q)=\tfrac12\int_{\X}\bigl|\mathrm dP^\ast_X-\mathrm dQ_{\hat w}\bigr|=\dtv(P^\ast_X,Q_{\hat w}).
\end{equation}
Combine and average over calibration data.
(ii) For any fixed $Q$, $2\,\mathrm{ba}(\phi)-1=P^\ast_X(\phi{=}1)-Q(\phi{=}1)\le\sup_A|P^\ast_X(A)-Q(A)|=\dtv(P^\ast_X,Q)$. Independent held-out observations from each class give independent binomial counts conditional on the fitted $\phi$. One-sided Clopper--Pearson bounds and a Bonferroni split therefore give the stated lower confidence bound. Weighted resampling with replacement from the finite calibration reference samples exactly from $\widehat Q_{\hat w,n}$ conditional on that reference, so the implemented bound targets $\dtv(P^\ast_X,\widehat Q_{\hat w,n})$ rather than $\dtv(P^\ast_X,Q_{\hat w})$.
(iii) Under realizability $Q_{w_{\theta_0}}=P^\ast_X$ and
\begin{equation}\label{eq:tv-rate}
\dtv\bigl(Q_{w_{\hat\theta}},Q_{w_{\theta_0}}\bigr)\ \le\ \tfrac12\,\E_{P_{\mathrm{cal}}}\Bigl|\tfrac{w_{\hat\theta}}{\E w_{\hat\theta}}-\tfrac{w_{\theta_0}}{\E w_{\theta_0}}\Bigr|,
\end{equation}
The local Lipschitz envelope and the positive normalizing expectation make the right side $O_p(\|\hat\theta-\theta_0\|)$. The stated likelihood conditions give $\hat\theta-\theta_0=O_p((n_{\mathrm{fit}}\wedge n_{\mathrm{ref}})^{-1/2})$, which proves the rate.
\end{proof}

\section*{Declaration of competing interest}
\addcontentsline{toc}{section}{Declaration of competing interest}

The authors declare that they have no known competing financial interests or personal
relationships that could have appeared to influence the work reported in this paper.

\section*{Data availability}
\addcontentsline{toc}{section}{Data availability}

The police-reported sampled-driver records analyzed in this study are not public. They were
obtained from the Texas Crash Records Information System, maintained by the Texas
Department of Transportation, which distributes record-level extracts to researchers on request through
the CRIS Query portal at \url{https://cris.dot.state.tx.us/}. Access requires a data
request and acceptance of the department's terms of use. The authors' access agreement
excludes redistribution of the records. The vehicle
attributes were decoded from reported vehicle identification numbers using the National
Highway Traffic Safety Administration's public vPIC service
(\url{https://vpic.nhtsa.dot.gov/api/}), and the geographic reference files are public
Census products. All personally identifying material, including vehicle identification
numbers, license and registration identifiers, dates of birth, and free-text investigator
narratives, was excluded at ingest and appears in no released artifact.

The submission reproducibility supplement contains the data-construction scripts and analysis
code used from the departmental files onward. A reader holding an equivalent CRIS extract can
rebuild the sampled-driver snapshot and rerun the empirical pipeline. A synthetic demonstration
dataset ships with the public software package so that the examples, theorem-level tests, and
comparison in the discussion run without restricted data.

\section*{Code availability}
\addcontentsline{toc}{section}{Code availability}

The certification framework is released as an open-source package under the MIT license.
It is distributed as \texttt{choircert} on the Python Package Index and imported as
\texttt{choir}. The public source, documentation, examples, benchmarks, and theorem-level
test suite are available at \url{https://github.com/pozapas/choircert}. The submission
reproducibility supplement contains the empirical experiment harness, aggregate result
artifacts, and scripts that generate the tables and figures in this paper.
It excludes CRIS records and direct identifiers. All reported randomness derives from the
declared seeds recorded in the experiment scripts and audit artifacts.

\section*{Reproducibility statement}
\addcontentsline{toc}{section}{Reproducibility statement}

Results were computed on a frozen analysis snapshot built by the submission pipeline, whose
row order is deterministic. Every cached model artifact carries a fingerprint of the split
it was built from, and the loader refuses any artifact whose fingerprint does not match
the current snapshot, so a stale cache raises an error rather than silently misaligning
with the data. Each number reported in this paper traces to a stored result file in the reproducibility
supplement rather than to a transcribed value, and the tables are generated from those
files by script.
Five statistical and gradient-boosting models were fit on the stated 24-core desktop
processor. The DLCON network and the tabular foundation model were fit and applied on a GPU. The
certification layer itself is CPU-only and its cost is reported in
Section~\ref{sec:e8e9}.

\printcredits

\bibliographystyle{cas-model2-names}
\bibliography{references}

\end{document}